\icmltitlerunning{Composite Marginal Likelihood Methods for Random Utility Models}
\newtheorem{thm}{Theorem}
\newtheorem{dfn}{Definition}
\newtheorem{lem}{Lemma}
\newtheorem{ex}{Example}
\newtheorem{coro}{Corollary}
\newenvironment{sketch}{{\noindent\bf Proof sketch:}\rm }{\hfill $\blacksquare$ }
\newenvironment{proof}{\noindent{\bf {Proof: }}\ }{\hfill$\blacksquare$ \vspace{1mm}}
\newtheorem{claim}{Claim}
\newcommand\Omit[1]{}
\newcommand{\ma}{\mathcal A}
\newcommand{\ml}{\mathcal L}
\newcommand{\mm}{\mathcal M}
\newcommand{\mg}{\mathcal G}
\newcommand{\mw}{\mathcal W}
\newcommand{\ra}{\rightarrow}
\newcommand\rbcml[2]{\text{RBCML}(#1,#2)}
\newcommand\rum{\text{RUM}}
\newcommand\cdf{\text{CDF}}
\newcommand\cll{\text{CLL}_\mm}
\newcommand\exll{\text{ELL}_\mm}
\newcommand\cl{\text{CL}_\mm}
\newcommand\cllpl{\text{CLL}_\text{PL}}
\newcommand\clpl{\text{CL}_\text{PL}}
\begin{document} 

\twocolumn[
\icmltitle{Composite Marginal Likelihood Methods for Random Utility Models}


\begin{icmlauthorlist}
\icmlauthor{Zhibing Zhao}{rpi}
\icmlauthor{Lirong Xia}{rpi}
\end{icmlauthorlist}

\icmlaffiliation{rpi}{Computer Science Department, Rensselaer Polytechnic Institute, Troy, NY, USA}

\icmlcorrespondingauthor{Zhibing Zhao}{zhaoz6@rpi.edu}
\icmlcorrespondingauthor{Lirong Xia}{xial@cs.rpi.edu}

\icmlkeywords{Composite Marginal Likelihood, Random Utility Models, Plackett-Luce Model, Log-Concavity}

\vskip 0.3in
]

\printAffiliationsAndNotice{}

\begin{abstract} We propose a novel and flexible {\em rank-breaking-then-composite-marginal-likelihood (RBCML)} framework for learning {\em random utility models (RUMs)}, which include the {\em Plackett-Luce} model. We characterize conditions for the objective function of RBCML to be strictly log-concave by proving that strict log-concavity is preserved under convolution and marginalization. We characterize necessary and sufficient conditions for RBCML to satisfy consistency and asymptotic normality. Experiments on synthetic data show that RBCML for Gaussian RUMs achieves better statistical efficiency and computational efficiency than the state-of-the-art algorithm and our RBCML for the Plackett-Luce model provides flexible tradeoffs between running time and statistical efficiency.
\end{abstract}

\section{Introduction}

How to model rank data and how to make optimal statistical inferences from rank data are important topics at the interface of statistics, computer science, and economics. {\em Random utility models (RUMs)}~\citep{Thurstone27:Law} are one of the most widely-applied statistical models for rank data. In an RUM, each alternative $a_i$ is parameterized by a utility distribution $\mu_i$.  Agents' rankings are generated in two steps. In the first step, a {\em latent utility} $u_i$ for each alternative $a_i$ is generated from $\mu_i$. In the second step, the alternatives are ranked w.r.t.~their utilities $u_i$ in descending order. The logit model and the probit model, which are very popular in statistics and economics, both have random utility interpretations.

While providing better fitness to the rank data \citep{Azari12:Random,Zhao18:Learning}, general RUMs are computationally hard to tackle due to the lack of closed-form formulas for the likelihood function. The only known exception is the {\em Plackett-Luce model}~\citep{Plackett75:Analysis,Luce59:Individual}, which is the RUM with Gumbel distributions.  RUMs, especially the Plackett-Luce model, have been widely applied to model and predict human behavior~\citep{McFadden00:Economic}, where the standard case of {\em discrete choice models} can be viewed as the Plackett-Luce model restricted to top choices. Other notable recent applications include elections~\citep{Gormley08:Exploring}, crowdsourcing~\citep{Pfeiffer12:Adaptive}, recommender systems~\citep{Wang2016:Ranking-Oriented}, preference elicitation~\cite{Azari13:Preference,Zhao18:Cost}, marketing~\citep{Berry95:Automobile}, health care~\citep{Bockstael99:The-Use}, transportation~\citep{Bhat07:Flexible}, and security~\citep{Yang11:Improving}.

Recently there has been a growing interest in designing faster and more accurate algorithms for RUMs.  Many algorithms in previous work share the following {\em rank-breaking-then-optimization} architecture. First, rank data are converted to pairwise comparison data. {Second}, based on the pairwise comparisons, various optimization algorithms are designed to estimate the ground truth~\citep{Negahban12:Iterative,Azari13:Generalized,Azari14:Computing,Chen15:Spectral,Khetan16:Data,Khetan16:Computational}. 

Pairwise data are often obtained from rank data by applying {\em rank-breaking}, which allows for a smooth tradeoff between computational efficiency and statistical efficiency~\citep{Azari13:Generalized,Azari14:Computing,Khetan16:Data,Khetan16:Computational}. Given $m$ alternatives, a rank-breaking scheme is modeled by a weighted undirected graph $\mg$ (see Figure~\ref{fig:ex} for an example) over $\{1,\ldots,m\}$ (the vertices are positions in a ranking), such that for any ranking $R$ over the $m$ alternatives and any distinct $i_1,i_2\le m$, we obtain $g_{i_1i_2}$ (the weight on the edge $\{i_1, i_2\}$ in $\mg$) pairwise comparisons between alternatives at positions $i_1$ and $i_2$ of $R$.

\noindent{\bf Our Contributions.} By leveraging the celebrated {\em composite marginal likelihood (CML)} methods~\citep{Lindsay1988:Composite,Varin08:Composite}, we propose a novel and flexible {\em rank-breaking-then-CML} framework. 
Given an RUM, our framework, denoted by $\rbcml{\mg}{\mw}$, is defined by a weighted rank-breaking graph $\mg$ and a CML-weight vector $\mw=\{w_{i_1i_2}:i_1,i_2\le m, i_1\ne i_2\}$, which contains one non-negative weight for each pair of alternatives $(a_{i_1},a_{i_2})$. We note that both $\mg$ and $\mw$ are the algorithm designer's choices. Given rank data $P$, we compute $\vec \theta$ to maximize the following {\em composite log-likelihood} function.
$$
\cll(\vec\theta,P) =\sum_{i_1\neq i_2}(\kappa_{i_1i_2}w_{i_1i_2}\ln p_{i_1i_2}(\vec\theta))
$$
Here $\vec\theta$ represents the parameters of RUM. Given $\mg$, $\kappa_{i_1i_2}$ is the percentage of pairwise comparisons $a_{i_1}\succ a_{i_2}$ in the data. $p_{i_1i_2}(\vec \theta)$ is the probability of $a_{i_1}\succ a_{i_2}$ under RUM with $\vec\theta$, which is the total probability of generating a ranking with $a_{i_1}\succ a_{i_2}$ given $\vec\theta$. We note that the RBCML framework is very general because any combination of $\mg$ and $\mw$ can be used. A breaking graph $\mg$ is {\em uniform}, if all edges have the same weight. Let $\mg_{\text u}$ denote the breaking graph whose weights are all $1$. A CML-weight vector  $\mw$ is {\em symmetric}, if for all $i_1\ne i_2$, we have $w_{i_1i_2}=w_{i_2i_1}$. $\mw$ is {\em uniform}, if all weights are $1$, denoted by $\mw_{\text u}$. 

\noindent{\bf Theoretical contributions.} For convenience we let position-$k$ breaking denote the breaking that consists of all unit-weight edges between position $k$ and all positions after $k$. E.g. the position-$1$ breaking consists of all unit-weight pairwise comparisons in positions $\{(1, 2), (1, 3), \ldots, (1, m)\}$. A weighted union of position-$k$ breakings is a breaking that has the same weight (possibly zero) for each $k$. An example is shown in Figure~\ref{fig:ex}, which is the union of 1/3 position-$1$ breaking and 1/2 position-$2$ breaking. Our theoretical results carry the following message about ``good" RBCMLs.

\noindent{\em We should use $\rbcml{\mg}{\mw}$ with connected and symmetric $\mw$. For Plackett-Luce model, we should use a  breaking $\mg$ that is the weighted union of multiple position-$k$  breakings.
For RUMs with symmetric utility distributions, we should use $\mg_{\text u}$.} 

The message is established via a series of theorems (Theorems~\ref{thm:logc_conv}, \ref{thm:logconcavemarginal}, \ref{thm:asymptotic}, \ref{thm:cmlrbconsistencypl}, and \ref{thm:cmlrbconsistencyrum}). 
Theorems~\ref{thm:logc_conv} and \ref{thm:logconcavemarginal}, which prove that strict log-concavity is preserved under convolution and under marginalization, are of independent  interest.

\noindent{\bf Algorithmic contributions.} Experiments on synthetic data for Gaussian RUMs, where each utility distribution is Gaussian, show that RBCML($\mg_{\text u}, \mw_{\text u}$) achieves better statistical efficiency and computational efficiency than the GMM algorithm by~\citet{Azari14:Computing}. For the Plackett-Luce model, we propose an RBCML with a heuristic $\mw_H$. We compare our RBCML for the Plackett-Luce model with the consistent rank-breaking algorithm by~\citet{Khetan16:Data} and the I-LSR algorithm by~\citet{Maystre15:Fast} via experiments on synthetic data and show that our RBCML provides a tradeoff between statistical efficiency and computational efficiency.

\noindent{\bf Related Work and Discussions.} Our RBCML framework leverages the strengths of rank breaking and CML.
The major advantage of CML is that often marginal likelihood functions are much easier to optimize than the full likelihood function. However, for RUMs, even computing the marginal likelihood may take too much time, as CML needs to count the number of pairwise comparisons between alternatives in the rankings, which takes $O(m^2n)$ time, where $m$ is the number of alternatives and $n$ is the number of rankings. 
Therefore, standard CML becomes inefficient when $m$ or $n$ are large. RBCML overcomes such inefficiency by applying rank-breaking.
The computational complexity of rank-breaking can be $O(kmn)$ for any $k\le m$. Often a tradeoff between computational efficiency and statistical efficiency must be made. 

RBCML generalizes the algorithm proposed by~\citet{Khetan16:Data}, which focused on the Plackett-Luce model and whose optimization technique turns out to be CML with $\mw_{\text{u}}$.\footnote{\citet{Khetan16:Data}'s algorithm works for special partial orders. In this paper, we only focus on comparisons between RBCML and their algorithms restricted to  linear orders.} The comparison between RBCML and other related work is summarized in Table~\ref{tab:summary}.

\begin{table*}[htp]
\centering
\scalebox{1}{\begin{tabular}{|r|l|l|l|}
\hline Algorithms & Breaking& Optimization& RUM\\
\hline
\citep{Azari13:Generalized}& Uniform& GMM &Plackett-Luce\\
\hline
\citep{Azari14:Computing}& Uniform& GMM&\begin{tabular}{@{}l}RUMs with sym.~distributions\end{tabular}\\

\hline\citep{Khetan16:Data,Khetan16:Computational}& any & CML($\mw_{\text{u}})$& Plackett-Luce\\
\hline RBCML& any & general CML & \begin{tabular}{@{}l}Plackett-Luce and \\RUMs with sym.~distributions\end{tabular}\\
\hline
\end{tabular}
}
\caption{RBCML vs.  previous work. GMM stands for Generalized Method of Moments. \label{tab:summary}}
\end{table*}

Our theorems on {\em strict} log-concavity of composite likelihood function generalize~\citet{Hunter04:MM}'s result, which was proved for Plackett-Luce with $\mg_{\text u}$ and $\mw_{\text u}$. Our results can be applied to not only other $\mw$'s under Plackett-Luce, but also other RUMs where the PDFs of utility distributions are strictly log-concave, e.g.~Gaussians.  Technically, proving our results for general RUMs is much more challenging due to the lack of closed-form formulas for the likelihood function.  Another line of previous work proved (non-strict) log-concavity for special cases of RBCML~\citep{Azari12:Random,Khetan16:Computational,Khetan16:Data}. Again, our theorems are stronger because (1) our theorems work for a more general class of RBCML, and (2) strict log-concavity is more desirable than log-concavity because the formal implies the uniqueness of the solution. 

The key step in our proofs is the preservation of strict log-concavity under convolution (Theorem~\ref{thm:logc_conv}) and marginalization (Theorem~\ref{thm:logconcavemarginal}). Surprisingly, we were not able to find these theorems in the literature, despite that it is well-known that (non-strict) log-concavity and strong log-concavity are preserved under convolution and marginalization~\citep{Saumard2014:Log-concavity}. Our proofs of Theorems~\ref{thm:logc_conv} and~\ref{thm:logconcavemarginal} are based on a careful examination of the condition for equality in the Pr\'ekopa-Leindler inequality proved by~\citet{Dubuc1977:Critere}. We believe that Theorems~\ref{thm:logc_conv} and~\ref{thm:logconcavemarginal} are of independent interest. 


\citet{Xu2011:On-the-robustness} provided sufficient conditions for general CML methods to satisfy consistency and asymptotic normality. Unfortunately,  some of the conditions by~\citet{Xu2011:On-the-robustness} do not hold for RBCML. Therefore, we derive new proof of consistency and asymptotic normality for RBCML. 

\citet{Khetan16:Data,Khetan16:Computational} provide sufficient conditions on rank-breakings for CML with $\mw_{\text u}$ to be consistent under the Plackett-Luce model. It is an open question what are all consistent rank-breakings for CML, even with $\mw_{\text u}$. We answer this question for Plackett-Luce (Theorem~\ref{thm:cmlrbconsistencypl}), as well as a large class of other RUMs (Theorem~\ref{thm:cmlrbconsistencyrum}), and for all $\mw$'s.



\section{Preliminaries}
\label{prel}
Let $\mathcal{A}=\{a_1, a_2, \cdots, a_m\}$ denote the set of $m$ alternatives. Let $\mathcal L(\mathcal A)$ denote the set of all linear orders (rankings) over $\mathcal A$. A ranking $R\in\ml(\ma)$ is denoted by $a_{i_1}\succ a_{i_2}\succ\ldots\succ a_{i_m}$, where $a_{i_1}$ is ranked at the top, $a_{i_2}$ is ranked at the second position, etc. We write $a\succ_R b$ if $a$ is ranked higher than $b$ in $R$. Let $P=\{R_1, R_2, \ldots, R_n\}$ denote the collection of $n$ rankings, called a {\em preference profile}. 

\begin{dfn}[Random utility models (RUMs)] A {\em random utility model} $\mathcal M$ over $\ma$ associates each alternative $a_i$ with a utility distribution $\mu_i(\cdot|\vec\theta_i)$. The parameter space is $\Theta = \{\vec\theta=\{\vec\theta_i|i=1, 2, \ldots, m\}\}$. The sample space is $\mathcal L(\mathcal A)^n$. Each ranking is generated i.i.d.~in two steps. First, for each $i\le m$, a latent utility $u_i$ is generated from $\mu_i(\cdot|\vec\theta_i)$ independently, and second, the alternatives are ranked according to their utilities in the descending order. Given a parameter $\vec\theta$, the probability of generating $R=a_{i_1}\succ a_{i_2}\succ \ldots\succ a_{i_m}$ is
\begin{align*}
\Pr\nolimits_{\mm}(R|\vec{\theta})=&\int^\infty_{-\infty}\int^\infty_{u_{i_m}}\cdots\int^\infty_{u_{i_2}}\mu_{i_m}(u_{i_m}|\vec\theta_{i_m})\cdots\\
& \mu_{i_1}(u_{i_1}|\vec\theta_{i_1})du_{i_1}du_{i_2}\cdots du_{i_m}
\end{align*}
\end{dfn}
In this paper, we focus on the {\em location family}, where the shapes of the utility distributions are fixed and each utility distribution $\mu_i$ is only parameterized by its mean, denoted by $\theta_i$. Let $\pi_i$ denote the distribution obtained from $\mu_i(\cdot|\theta_i)$ by shifting the mean to $0$. For the location family, we have $\pi_i(u_i|\theta_i)=\pi(u_i-\theta_i)$. Because shifting the means of all alternatives by the same distance will not affect the distribution of the rankings, {\bf w.l.o.g.~we let $\mathbf{\theta_m=0}$ throughout the paper}. Moreover, we assume that the PDF of each utility distribution is continuous and positive everywhere. We further say that an RUM is {\em symmetric} if the PDF of each utility distribution is symmetric around its mean. We use Gaussian RUMs to denote the RUMs where all utility distributions are Gaussian.

For any combination of $m$ probability distributions $\pi_1,\ldots,\pi_m$ whose means are $0$, we let RUM$(\pi_1,\ldots,\pi_m)$ denote the RUM location family where the shapes of utility distributions are  $\pi_1,\ldots,\pi_m$. For any probability distribution $\pi$ whose mean is $0$, let RUM$(\pi)$ denote the RUM where the shapes of all utility distributions are $\pi$.

Given a profile $P$ and a parameter $\vec\theta$, we have $\Pr_{\mathcal M}(P|\vec\theta) = \prod^n_{j=1}\Pr_{\mathcal M}(R_j|\vec\theta)$. Because all utilities are drawn independently, the probability of pairwise comparison is $\Pr_\mm(a_{i_1}\succ a_{i_2}|\vec\theta)=\int^\infty_{-\infty}\int^\infty_{u_{i_2}}\mu_{i_1}(u_{i_1}|\vec\theta)\mu_{i_2}(u_{i_2}|\vec\theta)du_{i_1}du_{i_2}$.

\begin{ex}[Plackett-Luce model as an RUM] Let $\mu_i(\cdot|\theta_i)$ be the Gumbel distribution where $\mu_i(x_i|\theta_i)=e^{-(x_i-\theta_i)-e^{-(x_i-\theta_i)}}$. For any ranking $R=a_{i_1}\succ a_{i_2}\succ \ldots\succ a_{i_m}$, we have $\Pr_\text{PL}(R|\vec\theta)=\prod^{m-1}_{t=1}\frac {e^{\theta_{i_t}}} {\sum^m_{l=t}e^{\theta_{i_{l}}}}$. 
The probability of $a_{i_1}\succ a_{i_2}$ under the Plackett-Luce model is $\Pr_\text{PL}(a_{i_1}\succ a_{i_2}|\vec\theta)=\frac {e^{\theta_{i_1}}} {e^{\theta_{i_1}}+e^{\theta_{i_2}}}$.
\end{ex}

A {\em weighted (rank-)breaking} $\mg=\{g_{ii'}:i<i'\le m\}$ can be represented by a weighted undirected graph over positions $\{1,\ldots,m\}$, such that for any $g_{ii'}>0$, there is an edge between $i$ and $i'$ whose weight is $g_{ii'}$. We say that $\mg$ is {\em uniform}, if all weights are the same. Let $\mg_\text{u}$ denote the the uniform breaking where all weights are $1$. For any $1\le k\le m-1$, the position-$k$ breaking is the graph where for any $l>k$, there is an edge with weight $1$ between $k$ and $l$.
For any $\vec \theta\in {\mathbb R}^{m-1}$, any weighted rank-breaking $\mg$, any pair of alternatives $a_{i_1},a_{i_2}$, let $\mg_{a_{i_1}\succ a_{i_2}}(R)=g_{ii'}$ such that $a_{i_1}$ and $a_{i_2}$ are ranked at the $i$th position and the $i'$th position in $R$, respectively. Given a profile $P$, we define
$\kappa_{i_1i_2}=\frac {\sum^n_{j=1} \mg_{a_{i_1}\succ a_{i_2}}(R_j)} n$, and let $\bar\kappa_{i_1i_2} = E[\kappa_{i_1i_2}|\vec\theta]$. We note that $\kappa_{i_1i_2}$ is a function of the preference profile. $\bar\kappa_{i_1i_2}$ is the expected $\kappa_{i_1i_2}$ value for perfect data given $\vec\theta$, which means that it is a function of the ground truth parameter $\vec \theta$. 

\begin{figure}[htp]
\centering
\begin{tabular}{cc}
\includegraphics[trim=0cm 14cm 16.5cm 0cm, clip=true,width=.21\textwidth]{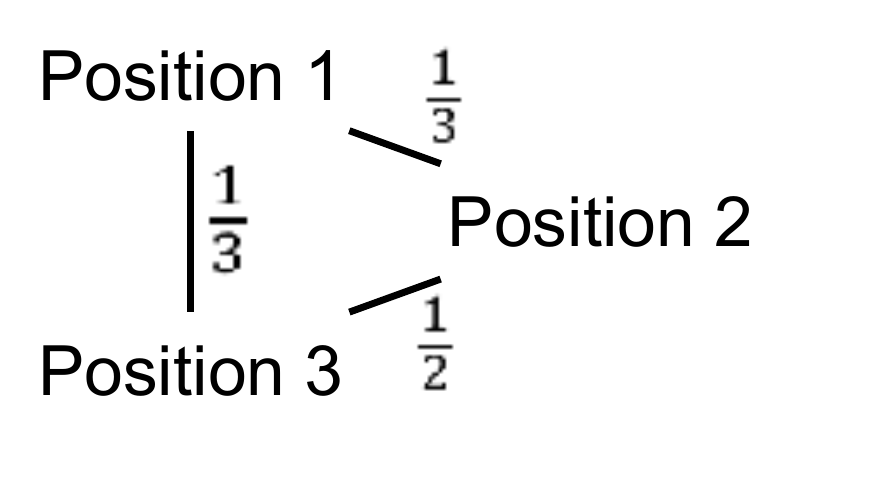}&
\includegraphics[trim=20cm 14cm 0cm 0cm, clip=true,width=.12\textwidth]{rbcml1.pdf}\\
(a) $\mg$. & (b) $\mw$.
\end{tabular}
\caption{A rank-breaking $\mg$ and a CML-weight vector $\mw$. \label{fig:ex}}
\end{figure}


{
\begin{ex}\label{ex:kappa} Let $m=3, n=2$. The profile $P=\{a_1\succ a_2\succ a_3, a_3\succ a_2\succ a_1\}$.
Let $\mg=\{g_{12}=g_{13}=\frac 1 3, g_{23}=\frac 1 2\}$ as shown in Figure~\ref{fig:ex} (a). Then we have $\kappa_{12}=\kappa_{13}=\frac 1 3 /n=\frac 1 6$, $\kappa_{23}=\frac 1 2/n = \frac 1 4$, $\kappa_{32}=\kappa_{31}=\frac 1 3/n=\frac 1 6$, $\kappa_{21} = \frac 1 2/n=\frac 1 4$. 
\end{ex}
}
\section{Composite Marginal Likelihood Methods}
Let $\mw=\{w_{ii'}:a_i,a_{i'}\in \ma \}$ denote a CML-weight vector. We say that $\mw$ is {\em symmetric}, if for any pair of alternatives $a_i,a_{i'}$, we have $w_{ii'}=w_{i'i}>0$. We say that $\mw$ is {\em uniform}, if all $w_{ii'}$'s are equal. Let  $\mw_\text{u}$ denote a uniform $\mw$. 

We note that vertices in $\mw$ corresponds to the alternatives while vertices in $\mg$ corresponds to positions in a ranking. For example, vertex $i$ in $\mw$ corresponds to $a_i$, while vertex $i$ in $\mg$ corresponds to the $i$th position in a ranking.

{
\begin{ex}\label{ex:w} A symmetric $\mw$ is shown in Figure~\ref{fig:ex} (b), where $w_{12}=w_{21}=1$ and $w_{23}=w_{32}=2$.
\end{ex}
}
Given $\mg$ and $\mw$, we propose the {\em rank-breaking-then-CML} framework for RUMs, denoted by $\rbcml{\mg}{\mw}$, to be the maximizer of composite log-marginal likelihood, which is defined below.
\begin{dfn}[Composite marginal likelihood for RUMs] Given an RUM $\mm$, for any preference profile $P$ and any $\theta$, let $p_{i_1i_2}(\vec\theta)=\Pr_\mm(a_{i_1}\succ a_{i_2}|\vec\theta)$. The composite marginal likelihood is
$\cl(\vec\theta,P)=\prod_{i_1\neq i_2}(p_{i_1i_2}(\vec\theta))^{\kappa_{i_1i_2}w_{i_1i_2}}$.
The composite log-marginal likelihood becomes: 
\begin{equation}\label{eq:cllrum}
\cll(\vec\theta,P) =\sum_{i_1\neq i_2}\kappa_{i_1i_2}w_{i_1i_2}\ln p_{i_1i_2}(\vec\theta)
\end{equation}
\end{dfn}
We let $\rbcml{\mg}{\mw}(P)=\arg\max_{\vec\theta}\cll(\vec\theta,P)$. For the Plackett-Luce model the composite (log-)marginal likelihood has a closed-form formula.
\begin{dfn}[CML for Plackett-Luce] For any $\vec\theta$ and preference profile $P$, the composite marginal likelihood for the Plackett-Luce model is $\clpl(\vec\theta,P)=\prod_{i_1 < i_2}(\frac {e^{\theta_{i_1}}} {e^{\theta_{i_1}}+e^{\theta_{i_2}}})^{\kappa_{i_1i_2}w_{i_1i_2}}(\frac {e^{\theta_{i_2}}} {e^{\theta_{i_1}}+e^{\theta_{i_2}}})^{\kappa_{i_2i_1}w_{i_2i_1}}$. 
The composite log-marginal likelihood is

\begin{align}
\cllpl(&\vec\theta,P) = \sum_{i_1 < i_2}(\kappa_{i_1i_2}w_{i_1i_2}\theta_{i_1}+\kappa_{i_2i_1}w_{i_2i_1}\theta_{i_2}\notag\\
&-(\kappa_{i_1i_2}w_{i_1i_2}+\kappa_{i_2i_1}w_{i_2i_1})\ln(e^{\theta_{i_1}}+e^{\theta_{i_2}}))\label{eq:cllpl}
\end{align}
\end{dfn}
The first order conditions are, for all $i$, 
$\frac {\partial \cllpl(\vec\theta,P)} {\partial\theta_i} =\sum_{i'\neq i}(\kappa_{ii'}w_{ii'}-(\kappa_{ii'}w_{ii'}+\kappa_{i'i}w_{i'i})\frac {e^{\theta_i}} {e^{\theta_i}+e^{\theta_{i'}}}).
$

{
\begin{ex}\label{ex:cllpl}
Continuing Example~\ref{ex:kappa} and Example~\ref{ex:w}, 
\begin{align*}
\cllpl(\vec\theta,P) 
&=\frac 1 6\theta_1+\frac 1 4\theta_2-(\frac 1 6+\frac 1 4)\ln(e^{\theta_1}+e^{\theta_2})\\
&+\frac 1 2\theta_2-(\frac 1 2+\frac 1 3)\ln(e^{\theta_2}+1)
\end{align*}
By solving the first order conditions, we have $e^{\theta_1}=1$ and $e^{\theta_2}=1.5$. So the outcome of RBCML is $\theta_1=0$, $\theta_2=\ln 1.5$. We recall that $\theta_3=0$  in this paper.
\end{ex}
}

\section{Preservation of Strict Log-Concavity}
\begin{dfn}[Log-concavity and strict log-concavity]
A function $f(\vec x)>0$ is {\em log-concave} if $\forall 0<\lambda<1$, we have
$
 f(\lambda\vec x+(1-\lambda)\vec y)\geq f(\vec x)^\lambda f(\vec y)^{1-\lambda}
 $. 
If the inequality is always strict, then $f$ is {\em strictly log-concave}.
\end{dfn}
\begin{thm}[Preservation under convolution]\label{thm:logc_conv} 
Let $f(x)$ and $g(x)$ be two continuous and strictly log-concave functions on $\mathbb R$. Then $f*g$ is also strictly log-concave.
\end{thm}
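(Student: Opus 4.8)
The plan is to reduce the claim to a one-parameter Pr\'ekopa--Leindler estimate and then to upgrade the resulting non-strict inequality to a strict one by analysing exactly when equality can occur. Throughout I would assume $f$ and $g$ are integrable (as they are densities in our intended application), so that $h:=f*g$ is a well-defined, continuous, strictly positive function; it then suffices to fix $0<\lambda<1$ and two points $z_1\neq z_2$, put $z=\lambda z_1+(1-\lambda)z_2$, and prove the strict inequality $h(z)>h(z_1)^\lambda h(z_2)^{1-\lambda}$.

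First I would introduce three ``slice'' functions of the convolution variable $s$, namely $F(s)=f(s)\,g(z_1-s)$, $G(s)=f(s)\,g(z_2-s)$, and $H(s)=f(s)\,g(z-s)$, whose integrals over $\mathbb{R}$ are precisely $h(z_1)$, $h(z_2)$, and $h(z)$. The key pointwise estimate is that for every decomposition $s=\lambda s_1+(1-\lambda)s_2$ one has $H(s)\ge F(s_1)^\lambda G(s_2)^{1-\lambda}$; this follows by multiplying the log-concavity inequality for $f$ at $(s_1,s_2)$ by the log-concavity inequality for $g$ at $(z_1-s_1,\,z_2-s_2)$, after noting $z-s=\lambda(z_1-s_1)+(1-\lambda)(z_2-s_2)$. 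Applying Pr\'ekopa--Leindler to $(F,G,H)$ then gives $\int H\ge(\int F)^\lambda(\int G)^{1-\lambda}$, i.e.\ the non-strict log-concavity of $h$.

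The substance of the theorem is the strictness, and here I would examine equality directly. Let $H^*(s)=\sup_{s=\lambda s_1+(1-\lambda)s_2}F(s_1)^\lambda G(s_2)^{1-\lambda}$ be the $\lambda$-sup-convolution of $F$ and $G$; the estimate above says $H\ge H^*$ pointwise, while Pr\'ekopa--Leindler says $\int H^*\ge(\int F)^\lambda(\int G)^{1-\lambda}$. If $h(z)=h(z_1)^\lambda h(z_2)^{1-\lambda}$, this sandwich collapses, forcing $\int H=\int H^*$ and hence $H=H^*$ almost everywhere. Since $F$ and $G$ decay (being products of densities), the supremum defining $H^*(s)$ is attained, say at $(s_1^*,s_2^*)$, for almost every $s$, and at any such $s$ the inequality $H(s)\ge F(s_1^*)^\lambda G(s_2^*)^{1-\lambda}$ becomes an equality. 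As its left side factors as $f(s)\,g(z-s)$ and its right side as $\big(f(s_1^*)^\lambda f(s_2^*)^{1-\lambda}\big)\big(g(z_1-s_1^*)^\lambda g(z_2-s_2^*)^{1-\lambda}\big)$, and each factor obeys its own log-concavity inequality, the product being an equality forces both factors to be equalities. Strict log-concavity of $f$ then forces $s_1^*=s_2^*$, and strict log-concavity of $g$ forces $z_1-s_1^*=z_2-s_2^*$; together these give $z_1=z_2$, a contradiction.

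The main obstacle is exactly the rigorous treatment of this equality step, and it is genuinely subtle: the Gaussian case shows that $F$ and $G$ are themselves translates of a single profile even when $z_1\neq z_2$, so one cannot argue from the shapes of $F$ and $G$ alone and must instead exploit that the optimal pairing $(s_1^*,s_2^*)$ pins two distinct densities down at coincident points. Cleanly justifying ``$H=H^*$ a.e.\ together with attainment of the sup'' is precisely what the characterization of the equality case of Pr\'ekopa--Leindler due to \citet{Dubuc1977:Critere} supplies, so I would quote that characterization to close the gap rather than rederive it; the only further hypotheses required are the mild integrability/decay of $f$ and $g$ ensuring that $h$ is finite and positive and that the sup-convolution is attained.
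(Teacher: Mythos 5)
Your proposal is correct, but the strictness step takes a genuinely different route from the paper's. Both proofs set up the same slice functions $F,G,H$ of the convolution variable, both verify the Pr\'ekopa--Leindler hypothesis by multiplying the log-concavity inequalities of $f$ and $g$, and both need integrability (you state it explicitly; the paper needs it implicitly for the convolution and for Dubuc's theorem to apply). The divergence is in how equality is ruled out: the paper quotes \citet{Dubuc1977:Critere}'s characterization of the equality case of Pr\'ekopa--Leindler, which yields the a.e.\ translation--scaling relations $F(x)=aH(x+b)$, $G(x)=a^{-1}H(x-b)$, and then contradicts strict log-concavity via the strict monotonicity of the ratios $f(x+c)/f(x)$; you instead sandwich $H$ against the sup-convolution $H^*$, force $H=H^*$ a.e., use attainment of the supremum (continuity, positivity, and decay of the slices) to localize the equality at a maximizer $(s_1^*,s_2^*)$, and then let strict log-concavity of $f$ and $g$ separately force $s_1^*=s_2^*$ and $z_1-s_1^*=z_2-s_2^*$, hence $z_1=z_2$. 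Your route is more elementary and self-contained: it uses only the Pr\'ekopa--Leindler \emph{inequality}, not its equality characterization, it handles general $\lambda$ rather than reducing to midpoints, and in fact it shows $H(s)>H^*(s)$ at \emph{every} point when $z_1\ne z_2$, so the strict inequality follows directly (no contradiction needed). The paper's route, by contrast, outsources the hard analysis to a citable result and avoids any discussion of attainment or of the measurability of $H^*$ (which in your argument should be noted: $H^*$ is a supremum of continuous functions, hence lower semicontinuous and measurable, so the sandwich is legitimate). One correction to your last paragraph: Dubuc's theorem does not assert ``$H=H^*$ a.e.\ together with attainment of the sup''---it gives the translation--scaling relations that the paper exploits---so citing it for that step is a mischaracterization; but this is harmless, because your own sandwich-plus-decay argument already establishes exactly what you need, making the citation unnecessary rather than load-bearing.
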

\begin{proof}
The proof is done by examining the equality condition for the Pr\'ekopa-Leindler inequality. 
Let $h=f*g$, namely, for any $y\in \mathbb R$, $h(y)=\int_{\mathbb R} f(y-x)g(x) dx$.
Because $f$ and $g$ are continuous, so does $h$. To prove the strict log-concavity of $h$, 
it suffices to prove that for any different $y_1,y_2\in\mathbb R$,  $h(\frac{y_1+y_2}{2})>\sqrt {h(y_1)h(y_2)}$.

Suppose for the sake of contradiction that this is not true. Since log-concavity preserves under convolution \citep{Saumard2014:Log-concavity}, $h$ is log-concave. So, there exist $y_1<y_2$ such that $h(\frac{y_1+y_2}{2})=\sqrt {h(y_1)h(y_2)}$. Let $\Lambda(x,y)=f(y-x)g(x)$. 
We further define 
\begin{align*}
H(x)&=\Lambda(x,\frac{y_1+y_2}{2})=f(\frac{y_1+y_2}{2}-x)g(x)\\
F(x)&=\Lambda(x,y_1)=f(y_1-x)g(x)\\
G(x)&=\Lambda(x,y_2)=f(y_2-x)g(x)
\end{align*}
Because (non-strict) log-concavity is preserved under convolution, $\Lambda(x,y)$ is log-concave. We have that for any $x\in\mathbb R$, $H(x)\ge \sqrt{F(x)G(x)}$. The Pr\'ekopa-Leindler inequality asserts that 
\begin{equation}\label{eq:pl}
\int_{\mathbb R}H(x) dx\ge \sqrt {\int_{\mathbb R}F(x) dx\int_{\mathbb R}G(x) dx}
\end{equation}

Because $h(\frac{y_1+y_2}{2})=\int_{\mathbb R}H(x) dx$, $h(y_1)=\int_{\mathbb R}F(x) dx$, $h(y_2)=\int_{\mathbb R}G(x) dx$, and  $h(\frac{y_1+y_2}{2})=\sqrt {h(y_1)h(y_2)}$, (\ref{eq:pl}) becomes an equation. It was proved by~\citet{Dubuc1977:Critere} that: there exist $a>0$ and $b\in \mathbb R$ such that the following conditions hold almost everywhere for $x\in \mathbb R$ (see the translation of Dubuc's result in English by~\citet{Ball2010:Stability}). 1.~$F(x)=aH(x+b)$, 2.~$G(x)=a^{-1}H(x-b)$.

The first condition means that for almost every $x\in\mathbb R$,
\begin{align}
&f(y_1-x)g(x)=af(\frac{y_1+y_2}{2}-x-b)g(x+b)\notag\\
&\Longleftrightarrow \frac{g(x)}{g(x+b)}=a \frac{f(\frac{y_1+y_2}{2}-x-b)}{f(y_1-x)}\label{eq:eqc1}
\end{align}
The second condition means that for almost all  $x\in\mathbb R$,
$
f(y_2-x)g(x)=a^{-1}f(\frac{y_1+y_2}{2}-x+b)g(x-b)
\Longleftrightarrow \frac{g(x-b)}{g(x)}=a \frac{f(y_2-x)}{f(\frac{y_1+y_2}{2}-x+b)}
$. Therefore, for almost all $x\in \mathbb R$, 

\begin{equation}\label{eq:eqc3}\frac{g(x)}{g(x+b)}=a \frac{f(y_2-x-b)}{f(\frac{y_1+y_2}{2}-x)}\end{equation}

Combining (\ref{eq:eqc1}) and (\ref{eq:eqc3}), for almost every $x\in \mathbb R$ we have
\begin{equation}\label{eq:eqc4}
\frac{g(x)}{g(x+b)}=a \frac{f(y_2-x-b)}{f(\frac{y_1+y_2}{2}-x)}=a \frac{f(\frac{y_1+y_2}{2}-x-b)}{f(y_1-x)}
\end{equation}
Because $f(x)$ is strictly log-concave, for any fixed $c\ne 0$, $\frac{f(x+c)}{f(x)}$ is strictly monotonic. Because $y_1\ne y_2$ and 
$y_2-x-b-(\frac{y_1+y_2}{2}-x)=\frac{y_1+y_2}{2}-x-b-(y_1-x)=\frac{y_2-y_1}{2}-b$, we must have that $\frac{y_2-y_1}{2}-b=0$, namely $b=\frac{y_2-y_1}{2}$. Therefore, (\ref{eq:eqc4}) becomes $\frac{g(x)}{g(x+\frac{y_2-y_1}{2})}=a$ for almost every $x\in\mathbb R$, which contradicts the strict log-concavity of $g$. This means that $h=f*g$ is strictly log-concave.
\end{proof}

\begin{thm}[Preservation under marginalization]\label{thm:logconcavemarginal}
Let $h(x,y)$ be a strictly log-concave function on $\mathbb R^2$. Then $\int_{\mathbb R}h(x,y) dx$ is strictly log-concave on $\mathbb R$.
\end{thm}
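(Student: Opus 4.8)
The plan is to mimic the strategy of Theorem~\ref{thm:logc_conv} very closely, since marginalization is structurally analogous to convolution: both produce a one-variable function by integrating out a variable, and in both cases non-strict log-concavity of the marginal is already classical, so the task is only to rule out equality. Let $m(y)=\int_{\mathbb R}h(x,y)\,dx$. Since $h$ is continuous (being strictly log-concave) and positive, $m$ is continuous, so it suffices to prove the midpoint strict inequality $m(\frac{y_1+y_2}{2})>\sqrt{m(y_1)m(y_2)}$ for all $y_1\neq y_2$. First I would assume for contradiction that equality holds for some $y_1<y_2$, which is possible because $m$ is log-concave by the classical Pr\'ekopa result.

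Next I would set up the three slice functions $H(x)=h(x,\frac{y_1+y_2}{2})$, $F(x)=h(x,y_1)$, and $G(x)=h(x,y_2)$. Because $h$ is (strictly, hence non-strictly) log-concave on $\mathbb R^2$, the midpoint inequality along the line segment joining $(x,y_1)$ and $(x,y_2)$ gives $H(x)\ge\sqrt{F(x)G(x)}$ pointwise, and integrating yields the Pr\'ekopa--Leindler inequality $\int H\ge\sqrt{\int F\int G}$. The equality assumption forces this integral inequality to be an equality as well, so I can again invoke Dubuc's equality characterization~\citep{Dubuc1977:Critere}: there exist $a>0$ and $b\in\mathbb R$ such that $F(x)=aH(x+b)$ and $G(x)=a^{-1}H(x-b)$ almost everywhere.

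Translating these two conditions back into statements about $h$, I get $h(x,y_1)=a\,h(x+b,\frac{y_1+y_2}{2})$ and $h(x,y_2)=a^{-1}h(x-b,\frac{y_1+y_2}{2})$ a.e. The goal is to extract a contradiction with strict log-concavity of $h$ in the $y$-direction. Writing these as relations among values of $h$ at three collinear-in-$y$ points and eliminating the common $H$ factor, I would obtain, after the substitution aligning the two conditions, an equation asserting that along a suitable affine path the ratio of $h$-values is a constant $a$ independent of $x$. Concretely, combining the two displays (shifting $x$ appropriately in one of them) should produce an identity of the form $h(x,y_1)\,h(x,y_2)=h(x+b,\frac{y_1+y_2}{2})\,h(x-b,\frac{y_1+y_2}{2})$ holding a.e., i.e.\ pointwise equality in the two-dimensional midpoint log-concavity inequality of $h$ evaluated at the pair of points $(x+b,\frac{y_1+y_2}{2})$ and $(x-b,\frac{y_1+y_2}{2})$ whose midpoint is $(x,\frac{y_1+y_2}{2})$ --- which already contradicts strict log-concavity of $h$ unless those two argument points coincide, and they coincide only when $b=0$, whereupon the conditions degenerate to $h(x,y_1)=a\,h(x,\frac{y_1+y_2}{2})=a^2h(x,y_2)$, again flatly contradicting the strict inequality required by strict log-concavity of $h$ on the segment from $(x,y_1)$ to $(x,y_2)$.

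The main obstacle I anticipate is the same one that arises in Theorem~\ref{thm:logc_conv}: pinning down the shift constant $b$ and ensuring the resulting equality genuinely violates \emph{strict} two-dimensional log-concavity rather than merely the one-dimensional version. In the convolution proof the authors exploited that $f(x+c)/f(x)$ is strictly monotonic for fixed $c\ne0$ to force $b=\frac{y_2-y_1}{2}$; here the analogous leverage must come directly from the two-variable strict log-concavity of $h$, so I would need to argue carefully that the Dubuc relations, read as pointwise equalities in a genuine midpoint inequality for $h$ at two distinct points of $\mathbb R^2$, cannot hold on a set of positive measure. Handling the ``almost everywhere'' qualifier cleanly --- upgrading an a.e.\ equality to a contradiction with everywhere-strict log-concavity, using the continuity of $h$ --- is the delicate bookkeeping step, but it is routine given that $h$ is continuous.
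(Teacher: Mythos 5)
Your setup tracks the paper's proof exactly---the slice functions $H(x)=h(x,\frac{y_1+y_2}{2})$, $F(x)=h(x,y_1)$, $G(x)=h(x,y_2)$, the reduction to equality in Pr\'ekopa--Leindler, and the appeal to Dubuc's equality characterization are all the same---but your final combination step has a genuine gap. Writing $M=\frac{y_1+y_2}{2}$, Dubuc gives you, almost everywhere, $h(x,y_1)=a\,h(x+b,M)$ and $h(x,y_2)=a^{-1}h(x-b,M)$. You multiply these at the \emph{same} $x$, obtaining $h(x,y_1)\,h(x,y_2)=h(x+b,M)\,h(x-b,M)$, and read this as equality in the midpoint log-concavity inequality for the pair $(x+b,M)$, $(x-b,M)$. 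It is not: equality in that inequality would read $h(x,M)^2=h(x+b,M)\,h(x-b,M)$, whereas your left-hand side is $h(x,y_1)h(x,y_2)$, which strict log-concavity only bounds \emph{above} by $h(x,M)^2$. Both sides of your identity are dominated by $h(x,M)^2$, so no contradiction follows when $b\neq 0$. Worse, the identity is genuinely satisfiable by strictly log-concave functions: for the Gaussian $h(x,y)=e^{-(x^2+y^2)}$ it holds identically in $x$ whenever $b=\pm\frac{y_2-y_1}{2}$. So your argument disposes only of the case $b=0$ (where your degenerate chain $h(x,y_1)=a\,h(x,M)=a^2h(x,y_2)$ does work); the case $b\neq 0$, which your concluding sentence waves away as ``points coinciding,'' is exactly where the content lies.

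The repair is a different pairing, and it is what the paper does: evaluate the second Dubuc relation at $x+2b$ instead of at $x$, giving $h(x+2b,y_2)=a^{-1}h(x+b,M)$ almost everywhere, and multiply it with $h(x,y_1)=a\,h(x+b,M)$ to get $h(x,y_1)\,h(x+2b,y_2)=h(x+b,M)^2$. Now the two points $(x,y_1)$ and $(x+2b,y_2)$ are distinct for every $b$ (because $y_1\neq y_2$), and their midpoint is exactly $(x+b,M)$, so this \emph{is} equality in the two-dimensional midpoint inequality and contradicts strict log-concavity of $h$ with no case distinction on $b$. Two of your anticipated obstacles then dissolve: unlike in Theorem~\ref{thm:logc_conv}, there is no need to pin down the value of $b$ at all, and the ``almost everywhere'' issue is trivial---the product identity holds for a.e.\ $x$ (translation preserves null sets), while the strict inequality holds for every $x$, so any single admissible $x$ yields the contradiction.
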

Again, the proof is done by examining the equality condition for the Pr\'ekopa-Leindler inequality. All missing proofs can be found in the supplementary material.

\section{Strict Log-Concavity of CML}

For any profile $P$, let $G(P)$ denote the weighted directed graph where each represents an alternative. For any $1\le i\neq i'\le m$, the weight on the edge from $i$ to $i'$ is $\kappa_{ii'}$. A weighted directed graph is {\em (weakly) connected}, if after removing the directions on all edges, the resulting undirected graph is connected. A weighted directed graph is {\em strongly connected}, if there is a directed path with positive weights between any pair of vertices. Given any pair of weighted graphs $G_1$ and $G_2$, we let $G_1\otimes G_2$ denote the weighted graph where the weights on each edge is the multiplication of the weights of same edge in $G_1$ and $G_2$.
\begin{thm}\label{thm:logconcavepl}
Given any profile $P$, the composite likelihood function for Plackett-Luce, i.e.~$\clpl(\vec\theta,P)$, is strictly log-concave if and only if $\mw\otimes G(P)$ is weakly connected.  $\arg\max_{\vec\theta}\clpl(\vec\theta,P)$ is bounded if and only if $\mw\otimes G(P)$ is strongly connected.
\end{thm}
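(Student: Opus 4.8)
The plan is to pass to the log and work with the concave function $\cllpl(\vec\theta,P)$ directly, since $\clpl=\exp(\cllpl)$ is strictly log-concave exactly when $\cllpl$ is strictly concave on $\mathbb R^{m-1}$ (recall $\theta_m=0$). Abbreviate $c_{i_1i_2}=\kappa_{i_1i_2}w_{i_1i_2}$, the weight of the directed edge $i_1\to i_2$ in $\mw\otimes G(P)$, and $p_{i_1i_2}=e^{\theta_{i_1}}/(e^{\theta_{i_1}}+e^{\theta_{i_2}})$. First I would differentiate \eqref{eq:cllpl} twice with respect to all $m$ coordinates: the linear terms vanish and each log-sum-exp term contributes a rank-one block, giving
\[
\nabla^2\cllpl=-\sum_{i_1<i_2}(c_{i_1i_2}+c_{i_2i_1})\,p_{i_1i_2}(1-p_{i_1i_2})\,(\vec e_{i_1}-\vec e_{i_2})(\vec e_{i_1}-\vec e_{i_2})^\top .
\]
The crucial observation is that the full matrix $-\nabla^2\cllpl$ is a weighted graph Laplacian whose edge $\{i_1,i_2\}$ carries the strictly positive weight $(c_{i_1i_2}+c_{i_2i_1})p_{i_1i_2}(1-p_{i_1i_2})$ precisely when $c_{i_1i_2}+c_{i_2i_1}>0$, i.e.\ exactly when $\{i_1,i_2\}$ is an undirected edge of $\mw\otimes G(P)$; moreover the pattern of positive weights is independent of $\vec\theta$.

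For the first equivalence I would use the standard fact that the reduced Laplacian obtained by deleting the row and column of vertex $m$ (which is exactly the parameter Hessian after imposing $\theta_m=0$, up to sign) is positive definite if and only if the underlying undirected graph is connected. Its edge set is the undirected skeleton of $\mw\otimes G(P)$, so when $\mw\otimes G(P)$ is weakly connected this reduced Hessian is negative definite at every $\vec\theta$ and $\cllpl$ is strictly concave. Conversely, if $\mw\otimes G(P)$ is not weakly connected there is a component omitting vertex $m$; translating all coordinates of that component by a common constant leaves every nonvanishing summand of \eqref{eq:cllpl} unchanged, so $\cllpl$ is constant along a line and is not strictly concave.

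For boundedness I would study $\cllpl$ along rays. Using $\ln(e^{a}+e^{b})=\max(a,b)+O(1)$ I would show that for every direction $\vec d$, $\cllpl(\vec\theta+t\vec d,P)=-A(\vec d)\,t+O(1)$ as $t\to\infty$, where
\[
A(\vec d)=\sum_{i_1\ne i_2}c_{i_1i_2}\,(d_{i_2}-d_{i_1})^{+}\ \ge\ 0 .
\]
As $A$ is continuous and positively homogeneous, $\cllpl$ is coercive — hence its maximum is attained on a compact set — if and only if $A(\vec d)>0$ for all $\vec d\ne0$. Now $A(\vec d)=0$ says no positive-weight edge of $\mw\otimes G(P)$ runs from a lower-$d$ to a higher-$d$ vertex. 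I would show a nonzero such $\vec d$ with $d_m=0$ exists iff $\mw\otimes G(P)$ is not strongly connected: if it is not, let $T$ be the set reachable by positive-weight paths from a vertex whose reach misses some other vertex; then $T$ is proper and no positive edge leaves $T$, so a vector that is constant on $T$ and on its complement (shifted so $d_m=0$) gives $A(\vec d)=0$. Conversely, strong connectivity yields a positive path from outside the top $d$-level set into it, whose boundary-crossing edge runs strictly uphill, forcing $A(\vec d)>0$.

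Combining the pieces finishes the proof: strong connectivity gives coercivity and, via the first part, a unique bounded maximizer, whereas its failure yields a direction $\vec d$ with $A(\vec d)=0$. The step I expect to be the main obstacle is turning $A(\vec d)=0$ into a genuine statement about the argmax, since it only bounds $\cllpl$ along $\vec d$ rather than pinning it to the supremum. Here I would exploit that, when $\mw\otimes G(P)$ is weakly but not strongly connected, the reverse direction satisfies $A(-\vec d)>0$, so along the ray $\vec\theta^{*}+t\vec d$ the restriction of $\cllpl$ is concave, nondecreasing, and strictly below its finite limit; were a maximizer $\vec\theta^{*}$ to exist, concavity would force the whole ray into the argmax, contradicting the uniqueness from the first part. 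Thus no finite maximizer exists and the argmax is not a nonempty bounded set; the remaining non-weakly-connected case is dispatched by the same translation invariance used above.
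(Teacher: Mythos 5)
Your proof is correct, and it takes a genuinely different route from the paper's. For strict log-concavity the paper follows Hunter's Bradley--Terry argument: it applies H\"older's inequality to each term $-(\kappa_{i_1i_2}w_{i_1i_2}+\kappa_{i_2i_1}w_{i_2i_1})\ln(e^{\theta_{i_1}}+e^{\theta_{i_2}})$ and then analyzes the equality case ($e^{\theta^{(1)}_i}=\zeta e^{\theta^{(2)}_i}$, with a common $\zeta$ propagated along the connected graph), whereas you differentiate twice and recognize $-\nabla^2\cllpl$ as a weighted graph Laplacian whose grounded (row/column $m$ deleted) form is positive definite iff the undirected skeleton of $\mw\otimes G(P)$ is connected. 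Both are sound; the H\"older route is first-order and works straight from the definition of concavity, while your Laplacian route makes the role of connectivity transparent and reduces it to a standard linear-algebra fact. The larger divergence is boundedness: the paper dispenses with this part in one sentence (``similar to Hunter'') and gives no argument, whereas you work it out in full via the recession function $A(\vec d)$, its graph-cut characterization ($A$ vanishes at some nonzero $\vec d$ iff a reachability set has no outgoing positive edges, i.e.\ iff strong connectivity fails), and the concave-ray argument ruling out a finite maximizer. Your treatment also pins down a point the paper glosses over: when $\mw\otimes G(P)$ is weakly but not strongly connected the supremum may simply fail to be attained (e.g.\ two alternatives with unanimous comparisons), so ``bounded'' must be read as ``nonempty and bounded,'' and your final paragraph handles exactly this case. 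Two sketch-level details to make explicit in a full write-up: (i) in the construction of $\vec d$ from the reachable set $T$, the constant on $T$ must be the \emph{smaller} one, since positive edges may enter $T$ from outside; (ii) ray-wise decay of $\cllpl$ implies coercivity here either because $\cllpl$ is concave (an unbounded closed convex superlevel set would contain a ray) or because $\ln(e^a+e^b)-\max(a,b)\in[0,\ln 2]$ makes your $O(1)$ error uniform---one of these observations is needed to justify the ``hence its maximum is attained'' step.
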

The proof is similar to the log-concavity of likelihood for BTL by~\citep{Hunter04:MM}. For general RUMs we prove a similar theorem.
\begin{thm} \label{thm:logconcaverum}
Let $\mm$ be an RUM where the CDF of each utility distribution is strictly log-concave. 
Given any profile $P$, the composite likelihood function for $\mm$, i.e.~$\cl(\vec\theta,P)$, is strictly log-concave if and only if $\mw\otimes G(P)$ is weakly connected.  $\arg\max_{\vec\theta}\cl(\vec\theta,P)$ is bounded if and only if $\mw\otimes G(P)$ is strongly connected.
\end{thm}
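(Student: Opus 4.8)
The plan is to work with the composite log-likelihood $\cll(\vec\theta,P)=\sum_{i_1\ne i_2}\kappa_{i_1i_2}w_{i_1i_2}\ln p_{i_1i_2}(\vec\theta)$, since $\cl=\exp(\cll)$ is strictly log-concave if and only if $\cll$ is strictly concave. The central structural observation is that $p_{i_1i_2}(\vec\theta)=\Pr_\mm(a_{i_1}\succ a_{i_2}\mid\vec\theta)$ depends on $\vec\theta$ only through the scalar $t=\theta_{i_1}-\theta_{i_2}$. I would therefore first establish that each map $t\mapsto p_{i_1i_2}$ is strictly log-concave on $\mathbb R$ (the new, RUM-specific ingredient), and then reduce both assertions to a purely combinatorial analysis of which directions keep the sum of these one-dimensional concave pieces linear, exactly as in the Plackett--Luce case (Theorem~\ref{thm:logconcavepl}).

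The hard part is the first step, and here I would exploit that $p_{i_1i_2}$ is the CDF of a difference of latent utilities. Conditioning on $U_{i_1}$ and shifting by its mean gives the representation $p_{i_1i_2}=\int_{\mathbb R}F_{i_2}(x+t)\,\pi_{i_1}(x)\,dx$, where $F_{i_2}$ is the CDF of the shape $\pi_{i_2}$. This exhibits $p_{i_1i_2}$ as the marginalization of the two-dimensional function $H(x,t)=F_{i_2}(x+t)\,\pi_{i_1}(x)$. Since $\ln H=\ln F_{i_2}(x+t)+\ln\pi_{i_1}(x)$ is degenerate only along $(1,-1)$ for the first summand and along $(0,1)$ for the second, and these directions meet only at the origin, $H$ is strictly log-concave on $\mathbb R^2$ provided both $F_{i_2}$ and $\pi_{i_1}$ are strictly log-concave; Theorem~\ref{thm:logconcavemarginal} then yields strict log-concavity of $p_{i_1i_2}$. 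An equivalent route writes the difference density as the convolution $\pi_{i_2}*\tilde\pi_{i_1}$, invokes Theorem~\ref{thm:logc_conv}, and passes from a strictly log-concave density to a strictly log-concave CDF. This is the main obstacle: unlike Plackett--Luce there is no closed form, so strict log-concavity can only be reached through the preservation results of Theorems~\ref{thm:logc_conv} and~\ref{thm:logconcavemarginal}; it is also here that I use the hypothesis on the CDF together with log-concavity of the density (both hold for the Gaussian and Gumbel shapes).

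Granting that each $\ln p_{i_1i_2}$ is strictly concave in $\theta_{i_1}-\theta_{i_2}$, strict concavity of $\cll$ along a direction $\vec v$ fails if and only if every active term---i.e.\ every pair with $\kappa_{i_1i_2}w_{i_1i_2}>0$, which is precisely an edge of $\mw\otimes G(P)$---satisfies $v_{i_1}=v_{i_2}$. Such a $\vec v$ must be constant on each connected component of the underlying undirected graph, and with the normalization $\theta_m=0$ the only globally constant direction is $\vec v=\vec0$. Hence $\cll$ is strictly concave if and only if no nonzero feasible $\vec v$ survives, i.e.\ if and only if $\mw\otimes G(P)$ is weakly connected; conversely, disconnectedness produces a nonzero $\vec v$ along which $\cll$ is constant, proving the ``only if''. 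This step I would carry out exactly as in Theorem~\ref{thm:logconcavepl}.

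For boundedness, note $\ln p_{i_1i_2}\le 0$, so $\cll\le0$, and along any ray $\vec v\ne\vec0$ (necessarily nonconstant since $v_m=0$) the objective tends to $-\infty$ if and only if some active directed edge $i_1\to i_2$ has $v_{i_1}<v_{i_2}$, because then $\theta_{i_1}-\theta_{i_2}\to-\infty$ forces $p_{i_1i_2}\to0$. Taking $T=\{i: v_i\ \text{maximal}\}$, strong connectivity of $\mw\otimes G(P)$ supplies an active edge entering $T$ from its complement, so the objective is coercive and the maximizer is bounded. If $\mw\otimes G(P)$ is not strongly connected, the set $S$ of vertices that can reach a fixed unreachable target has no active edge entering it, and the direction $\vec v=\mathbf 1_S$ (shifted so that $v_m=0$) keeps every active difference nondecreasing, giving an unbounded maximizing sequence. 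This mirrors Hunter's coercivity analysis for the Bradley--Terry/Plackett--Luce likelihood, and completes the characterization.
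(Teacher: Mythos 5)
Your proposal is correct, and it follows the paper's overall strategy: reduce strict log-concavity of $\cl(\vec\theta,P)$ to strict log-concavity of each pairwise probability $p_{i_1i_2}$ as a function of the scalar difference $\theta_{i_1}-\theta_{i_2}$, then settle the connectivity and boundedness claims by the Hunter-style directional and coercivity analysis of Theorem~\ref{thm:logconcavepl}. Where you genuinely differ is in the key analytic step. The paper writes the density of $u_{i_1}-u_{i_2}$ as the convolution $\pi_{i_1}*\pi^*_{i_2}$ (one shape flipped), applies Theorem~\ref{thm:logc_conv} to get a strictly log-concave density, and then needs a separate lemma (Lemma~\ref{lem:tailconc}, a strict version of the Bagnoli--Bergstrom argument) to pass from a strictly log-concave density to a strictly log-concave tail/CDF. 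Your primary route instead exhibits $p_{i_1i_2}=\int_{\mathbb R}F_{i_2}(x+t)\,\pi_{i_1}(x)\,dx$ as the marginalization of $H(x,t)=F_{i_2}(x+t)\,\pi_{i_1}(x)$, notes that the degeneracy directions $(1,-1)$ and $(0,1)$ of the two summands of $\ln H$ meet only at the origin so that $H$ is strictly log-concave on $\mathbb R^2$, and applies Theorem~\ref{thm:logconcavemarginal} once; this replaces ``Theorem~\ref{thm:logc_conv} plus tail lemma'' with a single marginalization and engages the stated CDF hypothesis directly (your ``equivalent route'' is exactly the paper's proof). One caveat applies to both arguments equally: each invokes strict log-concavity of the \emph{density} $\pi_{i_1}$ (the paper's route needs it for both densities), which is strictly stronger than the theorem's literal hypothesis that the CDFs are strictly log-concave; you flag this, and it does hold for the Gaussian and Gumbel shapes at issue. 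Your explicit arguments for the two graph conditions---directions constant on connected components of $\mw\otimes G(P)$ being killed by weak connectivity together with $\theta_m=0$, and coercivity along every ray via an active edge entering the set of maximal coordinates under strong connectivity---are sound, and in fact spell out what the paper leaves implicit by deferring to the Plackett--Luce/BTL case.
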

\begin{sketch} It is not hard to check that when $\mw\otimes G(P)$ is not connected, there exist $\vec\theta^{(1)}$ and $\vec\theta^{(2)}$ such that for any $0<\lambda<1$ we have $\cllpl(\vec\theta^{(1)},P)=\cllpl(\vec\theta^{(2)},P)=\lambda\cllpl(\vec\theta^{(1)},P)+(1-\lambda)\cllpl(\vec\theta^{(2)},P)$, which violates strict log-concavity. Suppose $\mw\otimes G(P)$ is weakly connected, it suffices to  prove for any $i_1\neq i_2$, $\Pr(a_{i_1}\succ a_{i_2}|\vec\theta)$ is strictly log-concave. We can write this as an integral over $u_{i_2}-u_{i_1}$: $\Pr(u_{i_1}>u_{i_2}|\vec\theta) = \int^\infty_0\Pr(u_{i_2}-u_{i_1}=s|\vec\theta)ds$.

Let $\pi^\ast_{i_2}(\cdot|\vec\theta)$ denote the flipped distribution of $\pi_{i_2}(\cdot|\vec\theta)$ around $x=s$, then we have $\pi^*_{i_2}(s-x|\vec\theta)=\pi_{i_2}(s+x|\vec\theta)$. Further we have
$\Pr(u_{i_1}>u_{i_2}|\vec\theta) = \int^\infty_0\int^\infty_{-\infty} \pi_{i_1}(x|\theta_{i_1})\pi_{i_2}(x+s|\theta_{i_2})dxds = \int^\infty_0 \pi_{i_1} * \pi_{i_2}^\ast ds$.

By Theorem~\ref{thm:logc_conv}, $\pi_{i_1} * \pi_{i_2}^\ast$ is strictly log-concave. 
Then we prove that tail probability of a strictly log-concave distribution is also strictly log-concave.
%
%

The proof for boundedness is similar to the proof of a similar condition for BTL by~\citet{Hunter04:MM}.
\end{sketch}

\section{Asymptotic Properties of RBCML}
\label{sec:asymptotic}
Given any RUM $\mm$ and any parameter $\vec\theta$, we define $\exll(\vec\theta)=E[\cll(\vec\theta, R)]$ and let $\nabla\exll(\vec\theta)$ be the gradient of $\exll(\vec\theta)$, whose $i$th element is $\nabla_i \exll(\vec\theta)=\sum_{i'\neq i}(\frac {\bar\kappa_{ii'}w_{ii'}} {p_{ii'}(\vec\theta)}\frac {\partial p_{ii'}(\vec\theta)} {\partial\theta_i}+\frac {\bar\kappa_{i'i}w_{i'i}} {p_{i'i}(\vec\theta)}\frac {\partial p_{i'i}(\vec\theta)} {\partial\theta_i})$.
Let $H(\vec\theta, P)$ be the Hessian matrix evaluated at $\vec\theta$. And let $H_0(\vec\theta_0)$ denote the expected Hessian of $\cll(\vec\theta, P)$ at $\vec\theta_0$, where $\vec\theta_0$ is the ground truth parameter.
\begin{thm}[Consistency and asymptotic normality]\label{thm:asymptotic}
Given any RUM $\mm$, any $\vec\theta_0$ and any profile $P$ with $n$ rankings. Let $\vec\theta^*$ be the output of $\rbcml{\mg}{\mw}$. When $n\rightarrow\infty$, we have  $\vec\theta^*\xrightarrow{p}\vec\theta_0$ and

\scalebox{0.9}{
$\sqrt{n}(\vec\theta^*-\vec\theta_0)\xrightarrow{d}N(0, H^{-1}_0(\vec\theta_0)\text{Var}[\nabla \cll(\vec\theta_0, R)]H^{-1}_0(\vec\theta_0))$
} if and only if $\vec\theta_0$ is the only solution to 
 \begin{equation}\label{eqfirstorder}
 \nabla \exll(\vec\theta)=\vec 0,
 \end{equation}
 \end{thm}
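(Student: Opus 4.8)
The plan is to recognize $\rbcml{\mg}{\mw}$ as a \emph{Z-estimator} (estimating-equation estimator) and to invoke the standard two-step machinery---consistency first, then a Taylor expansion of the score---while tracking the role of the condition $\nabla\exll(\vec\theta)=\vec 0$ throughout. The starting observation is that the objective is a normalized sum of i.i.d.\ per-ranking contributions: from the definition of $\kappa_{i_1i_2}$ one checks $\cll(\vec\theta,P)=\frac1n\sum_{j=1}^n\cll(\vec\theta,R_j)$, and correspondingly $\exll(\vec\theta)=E_{\vec\theta_0}[\cll(\vec\theta,R)]$. Since $\vec\theta^*$ maximizes $\cll(\cdot,P)$, it solves the estimating equation $\nabla\cll(\vec\theta^*,P)=\frac1n\sum_{j=1}^n\nabla\cll(\vec\theta^*,R_j)=\vec 0$, whose population counterpart is exactly $\nabla\exll(\vec\theta)=\vec 0$.

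For the \textbf{sufficiency} (``if'') direction I would first upgrade the pointwise law of large numbers $\nabla\cll(\vec\theta,P)\xrightarrow{p}\nabla\exll(\vec\theta)$ to a uniform law of large numbers on a compact neighborhood of $\vec\theta_0$, using continuity and positivity of the utility PDFs to dominate the per-ranking score by an integrable envelope. The hypothesis that $\vec\theta_0$ is the \emph{only} zero of $\nabla\exll$, together with continuity of $\nabla\exll$, then yields the well-separation property $\inf_{\|\vec\theta-\vec\theta_0\|\ge\epsilon}\|\nabla\exll(\vec\theta)\|>0$, which is the identification condition for Z-estimators and gives $\vec\theta^*\xrightarrow{p}\vec\theta_0$. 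For asymptotic normality I would Taylor-expand the sample score about $\vec\theta_0$,
\[
\vec 0=\nabla\cll(\vec\theta_0,P)+H(\bar{\vec\theta},P)(\vec\theta^*-\vec\theta_0),
\]
with $\bar{\vec\theta}$ on the segment between $\vec\theta^*$ and $\vec\theta_0$, so that $\sqrt n(\vec\theta^*-\vec\theta_0)=-H(\bar{\vec\theta},P)^{-1}\sqrt n\,\nabla\cll(\vec\theta_0,P)$. Because $\nabla\exll(\vec\theta_0)=\vec 0$ makes the per-ranking score $\nabla\cll(\vec\theta_0,R)$ mean-zero, the multivariate CLT gives $\sqrt n\,\nabla\cll(\vec\theta_0,P)\xrightarrow{d}N(\vec 0,\text{Var}[\nabla\cll(\vec\theta_0,R)])$; consistency of $\vec\theta^*$ squeezes $\bar{\vec\theta}\xrightarrow{p}\vec\theta_0$, and a law of large numbers for the Hessian gives $H(\bar{\vec\theta},P)\xrightarrow{p}H_0(\vec\theta_0)$. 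Slutsky's theorem then assembles the sandwich covariance $H_0^{-1}(\vec\theta_0)\,\text{Var}[\nabla\cll(\vec\theta_0,R)]\,H_0^{-1}(\vec\theta_0)$, provided $H_0(\vec\theta_0)$ is nonsingular.

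For the \textbf{necessity} (``only if'') direction I would argue the contrapositive: if $\vec\theta_0$ is not the unique zero of $\nabla\exll$, then $\vec\theta^*\not\xrightarrow{p}\vec\theta_0$. If $\nabla\exll(\vec\theta_0)\neq\vec 0$---the composite score is \emph{biased} at the truth, precisely the failure mode a poorly chosen rank-breaking can cause---then any probability limit of $\vec\theta^*$ must be a genuine zero of $\nabla\exll$ and hence cannot be $\vec\theta_0$. If instead $\nabla\exll$ has several zeros, identification is lost and $\vec\theta^*$ cannot be pinned to $\vec\theta_0$. In either case consistency fails, so the centered limit law, which presupposes $\vec\theta^*\to\vec\theta_0$, cannot hold.

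I expect the main obstacle to lie in the necessity direction and in the non-automatic nature of the centering condition. Unlike the full-likelihood MLE, the composite score built from rank-breaking is not guaranteed to be Fisher-consistent, so $\nabla\exll(\vec\theta_0)=\vec 0$ is a genuine constraint rather than an identity; establishing that its failure forces inconsistency---and conversely that it suffices for an unbiased, asymptotically normal estimator---is the crux of the proof. A secondary technical hurdle is justifying the uniform law of large numbers and the nonsingularity of $H_0(\vec\theta_0)$ for \emph{arbitrary} RUMs without appealing to log-concavity, which will require using the continuity and positivity of the utility densities to control the score and Hessian envelopes and to rule out a degenerate (singular) expected Hessian at the unique identified point.
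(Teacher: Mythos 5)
Your overall architecture---consistency first, then a mean-value expansion of the score, a CLT for $\sqrt n\,\nabla\cll(\vec\theta_0,P)$, Slutsky, and the sandwich covariance $H_0^{-1}\text{Var}[\nabla\cll(\vec\theta_0,R)]H_0^{-1}$---is the same as the paper's, and your asymptotic-normality step matches the paper's almost verbatim. The genuine gaps are in the two places where you deliberately avoid log-concavity, which is exactly the ingredient the paper's proof leans on. First, your consistency step claims that uniqueness of the zero of $\nabla\exll$ plus continuity yields the well-separation property $\inf_{\|\vec\theta-\vec\theta_0\|\ge\epsilon}\|\nabla\exll(\vec\theta)\|>0$. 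On the non-compact parameter space $\mathbb{R}^{m-1}$ this inference is false: uniqueness plus continuity only gives a positive infimum over compact annuli, and for RUMs the composite score genuinely decays to zero along divergent parameter sequences---the paper's Lemma~\ref{lem:pdr} and Lemma~\ref{lem:infty} establish precisely this decay, and exploit it to build counterexamples. So nothing in your argument prevents $\vec\theta^*$ from escaping every compact set. The paper closes this hole with strict concavity of $\exll$ (inherited from Theorems~\ref{thm:logconcavepl} and~\ref{thm:logconcaverum}): a strictly concave function with a critical point at $\vec\theta_0$ has compact upper level sets $\Theta_\epsilon$ with unique maximum $\vec\theta_0$, and the paper's Wald-type comparison of $\cll$ and $\exll$ at $\vec\theta^*$ and $\vec\theta_0$ then traps $\vec\theta^*$ in $\Theta_\epsilon$ with probability tending to $1$.

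Second, in the necessity direction your multiple-zeros case is dispatched with ``identification is lost,'' which is not an argument---and without concavity the claim is not even true: if $\vec\theta_0$ were the unique global maximizer of $\exll$ while $\nabla\exll$ had additional zeros at saddle points or local minima, the maximizer $\vec\theta^*$ would still converge to $\vec\theta_0$, so non-uniqueness of the zero would not by itself destroy consistency or the centered limit law. The paper makes this case vacuous by strict concavity: a strictly concave function has at most one critical point, so the negation of the theorem's hypothesis can only mean that the unique solution of $\nabla\exll(\vec\theta)=\vec 0$ is some $\vec\theta_1\neq\vec\theta_0$, to which the estimator then converges. In short, both directions of the theorem hold \emph{because} strict (log-)concavity of the composite likelihood is available from the earlier theorems; importing it, rather than trying to work around it with generic Z-estimator machinery, is the missing idea in your proposal.
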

\begin{proof} The ``only if" direction is straightforward. The solution to \eqref{eqfirstorder} is unique because $\cll(\vec\theta, P)$ is strictly concave. Suppose $\vec\theta_1$, other than $\vec\theta_0$, is the solution to \eqref{eqfirstorder}, then when $n\ra\infty$, $\vec\theta_1$ will be the estimate of $\rbcml{\mg}{\mw}$, which means $\rbcml{\mg}{\mw}$ is not consistent.

Now we prove the ``if" direction. First we prove consistency. {It is required by \citet{Xu2011:On-the-robustness} that for different parameters, the probabilities for any composite likelihood event are different, which is not true in our case. A simple counterexample is $\theta^{(1)}_1=1, \theta^{(2)}_1=2, \theta^{(1)}_2=\theta^{(1)}_3=\theta^{(2)}_2=\theta^{(2)}_3=0$. Then $\Pr(a_2\succ a_3|\vec\theta^{(1)})=\Pr(a_2\succ a_3|\vec\theta^{(2)})$.}

By the law of large numbers, we have for any $\epsilon$, $\Pr(|\cll( \vec\theta,P)-\exll(\vec\theta)|\leq \epsilon/2)\rightarrow 1$ as $n\rightarrow\infty$. This implies $\lim_{n\ra\infty}\Pr(\cll( \vec\theta^*,P)\leq \exll(\vec\theta^*)+\epsilon/2)=1$. Similarly we have $\lim_{n\ra\infty}\Pr(\exll(\vec\theta_0)\leq \cll( \vec\theta_0,P)+\epsilon/2)=1$. Since $\vec\theta^*$ maximize $\cll( \vec\theta,P)$, we have $\Pr(\cll( \vec\theta_0,P)\leq \cll( \vec\theta^*,P))=1$.
The above three equations imply that $\lim_{n\ra\infty}\Pr(\exll(\vec\theta_0)-\exll(\vec\theta^\ast)\leq \epsilon)=1$.

Let $\Theta_\epsilon$ be the subset of parameter space s.t. $\forall \vec\theta\in\Theta_\epsilon$, $\exll(\vec\theta_0)-\exll(\vec\theta)\leq \epsilon$. Because $\exll(\vec\theta)$ is strictly concave, $\Theta_\epsilon$ is compact and has a unique maximum at $\vec\theta_0$. Thus for any $\epsilon>0$, $\lim_{n\ra\infty}\Pr(\vec\theta^*\in\Theta_\epsilon)=1$. This implies consistency, i.e., $\vec\theta^*\xrightarrow{p}\vec\theta_0$.

Now we prove asymptotic normality. By mean value theorem, we have
$0=\nabla \cll( \vec\theta^*,P)
=\nabla \cll( \vec\theta_0,P)+H(\alpha\vec\theta^*+(1-\alpha)\vec\theta_0, P)(\vec\theta^*-\vec\theta_0)$, 
where $0\leq \alpha\leq 1$. Therefore, we have
$\sqrt n(\vec\theta^*-\vec\theta)
=-H^{-1}(\alpha\vec\theta^*+(1-\alpha)\vec\theta_0, P)(\sqrt n\nabla \cll( \vec\theta_0,P))
$. 
Since $\nabla \cll( \vec\theta_0,P)=\frac 1 n\sum^n_{j=1}\nabla \cll(\vec\theta_0, R_j)$, by the central limit theorem, we have

$\hfill
\sqrt n\nabla \cll( \vec\theta_0,P)\xrightarrow{d} N(0, \text{Var}[\nabla \cll(\vec\theta_0, R)])
\hfill$

Because $\vec\theta^*\xrightarrow{p}\vec\theta_0$ and $H$ is continuous, we have $H(\alpha\vec\theta^*+(1-\alpha)\vec\theta_0, P)\xrightarrow{p} H(\vec\theta_0,P)$. Since $H(\vec\theta, P)=\frac 1 n\sum^n_{j=1}H(\vec\theta, R_j)$, by law of large numbers, we have $H(\vec\theta, P)\xrightarrow{p} H_0(\vec\theta_0)$. Therefore, we have
$$\sqrt n(\vec\theta^*-\vec\theta)=-H^{-1}_0(\vec\theta_0)(\sqrt n\nabla \cll( \vec\theta_0,P)),$$
which implies that $\text{Var}[\sqrt n(\vec\theta^*-\vec\theta)]=H_0^{-1}(\vec\theta_0)\text{Var}[\nabla \cll(\vec\theta_0, R)]H_0^{-1}(\vec\theta_0).$
\end{proof}

%
%
%

\section{Consistency of RBCML}

Formal proofs of theorems in this section depends on a series of lemmas, which can be found in the appendix. The full proofs can also be found in the appendix.


\begin{thm}\label{thm:pl}
$\rbcml{\mg}{\mw_\text{u}}$ is consistent for Plackett-Luce  if and only if the breaking is weighted union of position-$k$ breakings.
\end{thm}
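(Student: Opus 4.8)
The plan is to reduce consistency to a first-order condition and then exploit the sequential (Luce/IIA) structure of the Plackett-Luce model. By Theorem~\ref{thm:asymptotic}, $\rbcml{\mg}{\mw_\text{u}}$ is consistent at a ground truth $\vec\theta_0$ exactly when $\vec\theta_0$ is the unique zero of $\nabla\exll$; and provided $\mg$ has at least one edge the expected breaking graph is complete, hence connected, so by the expectation version of Theorem~\ref{thm:logconcavepl} $\exll$ is strictly concave with a single stationary point (a degenerate $\mg$ is trivially inconsistent). Thus the whole statement reduces to characterizing the breakings for which $\nabla\exll(\vec\theta_0)=\vec 0$ holds at \emph{every} $\vec\theta_0$. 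Writing $\lambda_i=e^{\theta_{0,i}}$ and $p_{ii'}=\lambda_i/(\lambda_i+\lambda_{i'})$, the first-order conditions stated before Example~\ref{ex:cllpl} (with $\mw_\text{u}$) rearrange, using $1-p_{ii'}=p_{i'i}$, into
\begin{equation*}
\sum_{i'\neq i}\frac{\bar\kappa_{ii'}\lambda_{i'}-\bar\kappa_{i'i}\lambda_i}{\lambda_i+\lambda_{i'}}=0 .
\end{equation*}
So the first-order condition is flow conservation for the antisymmetric flow $\phi_{ii'}=(\bar\kappa_{ii'}\lambda_{i'}-\bar\kappa_{i'i}\lambda_i)/(\lambda_i+\lambda_{i'})$, and a clean sufficient condition is the detailed-balance identity $\bar\kappa_{ii'}\lambda_{i'}=\bar\kappa_{i'i}\lambda_i$ for every pair.

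For the ``if'' direction I would first record the linearity $\bar\kappa_{ii'}=\sum_k x_k\,\bar\kappa^{(k)}_{ii'}$ when $\mg=\sum_k x_k B_k$ is the weighted union of the position-$k$ breakings $B_k$, so that $\exll=\sum_k x_k\,\exll^{(k)}$ and it suffices to verify detailed balance for a single position-$k$ breaking. For $B_k$ the comparison $a_i\succ a_{i'}$ is generated iff $a_i$ sits at position $k$ and $a_{i'}$ somewhere after it; conditioning on the unordered set $S$ of the first $k-1$ alternatives, the IIA property gives that the alternative chosen at position $k$ from the remaining set $T=\ma\setminus S$ is $a_i$ with probability $\lambda_i/\Lambda_T$, where $\Lambda_T=\sum_{j\in T}\lambda_j$. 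Summing over $S$ not containing $a_i,a_{i'}$ yields $\bar\kappa^{(k)}_{ii'}=\sum_S q_S\,\lambda_i/\Lambda_T$ and $\bar\kappa^{(k)}_{i'i}=\sum_S q_S\,\lambda_{i'}/\Lambda_T$ with the \emph{same} weights $q_S$, so the ratio is $\lambda_i/\lambda_{i'}$ and detailed balance holds term by term; hence $\phi\equiv\vec 0$, and by linearity $\nabla\exll(\vec\theta_0)=\vec 0$ for all $\vec\theta_0$, giving consistency.

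For the ``only if'' direction I would argue the contrapositive: if $g_{k l_1}\neq g_{k l_2}$ for some $k<l_1<l_2$, then flow conservation fails at some $\vec\theta_0$. Since $\nabla\exll(\vec 0)=\vec 0$ automatically (at $\vec\theta_0=\vec 0$ all $\bar\kappa_{ii'}$ are equal and $p_{ii'}=1/2$), the useful information sits in the Jacobian at the symmetric point: I would linearize $\bar\kappa_{ii'}=\sum_{k<l}g_{kl}\Pr(a_i\text{ at }k,\,a_{i'}\text{ at }l)$, compute $\partial_{\theta_j}\bar\kappa_{ii'}\big|_{\vec 0}$, and show the resulting homogeneous linear system in the $g_{kl}$ has solution space of dimension at most $m-1$. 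Combined with the ``if'' direction, which already places $\mathrm{span}\{B_k\}$ (also dimension $m-1$) inside that solution space, this pins the space down to $\mathrm{span}\{B_k\}$; any $\mg$ with $g_{kl}$ depending on $l$ then lies outside it, violates the condition to first order, and is inconsistent. As an independent cross-check I would use a freezing limit: sending the utilities of a chosen set to $+\infty$ pins them into positions $1,\dots,k-1$ and reduces $\mg$ to a Plackett-Luce sub-instance on the remaining positions, where consistency forces the edges leaving the new top position (the original $g_{kl}$, $l>k$) to carry equal weight.

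The main obstacle is the ``only if'' direction: the $\bar\kappa_{ii'}$ are expectations over all $m!$ rankings and are genuinely nonlinear in $\vec\theta_0$, so the hard part is controlling them precisely enough---either evaluating the Jacobian $\partial_{\theta_j}\bar\kappa_{ii'}|_{\vec 0}$ in closed form or rigorously justifying the freezing limit---to certify that unequal weights $g_{kl}$ for a fixed top position $k$ must break detailed balance, and hence flow conservation, at some ground truth. By contrast the ``if'' direction follows almost immediately from the IIA identity once the conditioning on the first $k-1$ alternatives is set up.
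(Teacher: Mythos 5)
Your ``if'' direction is correct and complete, and it is actually more self-contained than the paper's treatment: the paper simply cites \citet{Khetan16:Data} for that direction, whereas your IIA/conditioning argument (condition on the set $S$ of the first $k-1$ alternatives, use that the choice at position $k$ from $T=\ma\setminus S$ is $a_i$ with probability $\lambda_i/\Lambda_T$) gives detailed balance $\bar\kappa^{(k)}_{ii'}\lambda_{i'}=\bar\kappa^{(k)}_{i'i}\lambda_i$ per position-$k$ breaking, and detailed balance is linear in $\mg$, so it survives weighted unions. This matches the mechanism behind the paper's Lemma~\ref{lem:consistency}~1(b) and is sound.

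The ``only if'' direction, however, has a genuine gap, and you have flagged it yourself: your primary plan rests on the claim that the linearized conditions $\partial_{\theta_j}\nabla_i\exll(\vec\theta)\big|_{\vec\theta=\vec 0}=0$ cut the $\binom{m}{2}$-dimensional space of breakings down to a subspace of dimension at most $m-1$. You never compute this Jacobian nor bound its rank; the zeroth-order condition at $\vec 0$ is vacuous (all $\bar\kappa_{ii'}$ are equal there), so everything hinges on this unproven rank statement. It does check out for $m=3$ (a direct expansion gives $\partial_{\theta_1}\nabla_1\exll\propto g_{13}-g_{12}$), but for general $m$ it is exactly the ``hard part'' you defer, so the proposal is a plan rather than a proof. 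Your fallback, the freezing limit, is essentially the paper's actual route (its Lemma~\ref{lem:subgraph}, made rigorous via Lemmas~\ref{lem:pdr} and~\ref{lem:infty} to control the cross-terms involving frozen alternatives), but as you sketch it, it cannot close the argument: freezing only yields the induction hypothesis on proper contiguous windows, and the two maximal such windows $[1,m-1]$ and $[2,m]$ determine every weight \emph{except} $g_{1m}$, since the edge $\{1,m\}$ lies in no proper window. This is precisely where the paper needs its dedicated Lemma~\ref{pl1m} --- an explicit computation (at $\theta_1=\ln 2$, all other $\theta_i=0$) showing the single-edge breaking $\{g_{1m}=C\}$ is inconsistent --- combined with the additivity Lemma~\ref{lem:plusminus} to subtract off a consistent union of position-$k$ breakings and isolate that single edge. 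Your proposal contains the linearity observation but is missing this single-edge counterexample, which is the one irreducible piece of the ``only if'' proof.
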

\begin{sketch}
The ``if" direction is proved in \citep{Khetan16:Data}. We only prove the ``only if" direction by induction on $m$. When $m=2$, the only breaking is the comparison between the two alternatives. The conclusion holds. 

Suppose it holds for $m=l$, then when $m=l+1$, we first prove a lemma which says that  by restricting $\mg$ to any set of continuous positions, the theorem must hold for the subgraph. Then, we focus on $\mg_{[2, m]}$, which is the subgraph of $\mg$ on \{2,\ldots,m\}. $\mg_{[2,m]}$ must be a weighted union of position-$k$ breakings. Then we focus on $\mg_{[1, m-1]}$. The only remaining case is to prove that 
the weight on edge $\{1,m\}$ is the same as the weight on edges $\{1,i\}$ for all $i\le m-1$.

Suppose for the sake of contradiction this is not true, then we can subtract a weighted union of position-$k$ breakings from the graph, so that the remaining graph has a single edge $\{1,m\}$. We then prove that such an single-edge breaking is inconsistent by proving that~\eqref{eqfirstorder} is not satisfied, which leads to a contradiction.
\end{sketch}

\begin{figure*}[htp]
\centerline{\includegraphics[width=0.5\textwidth]{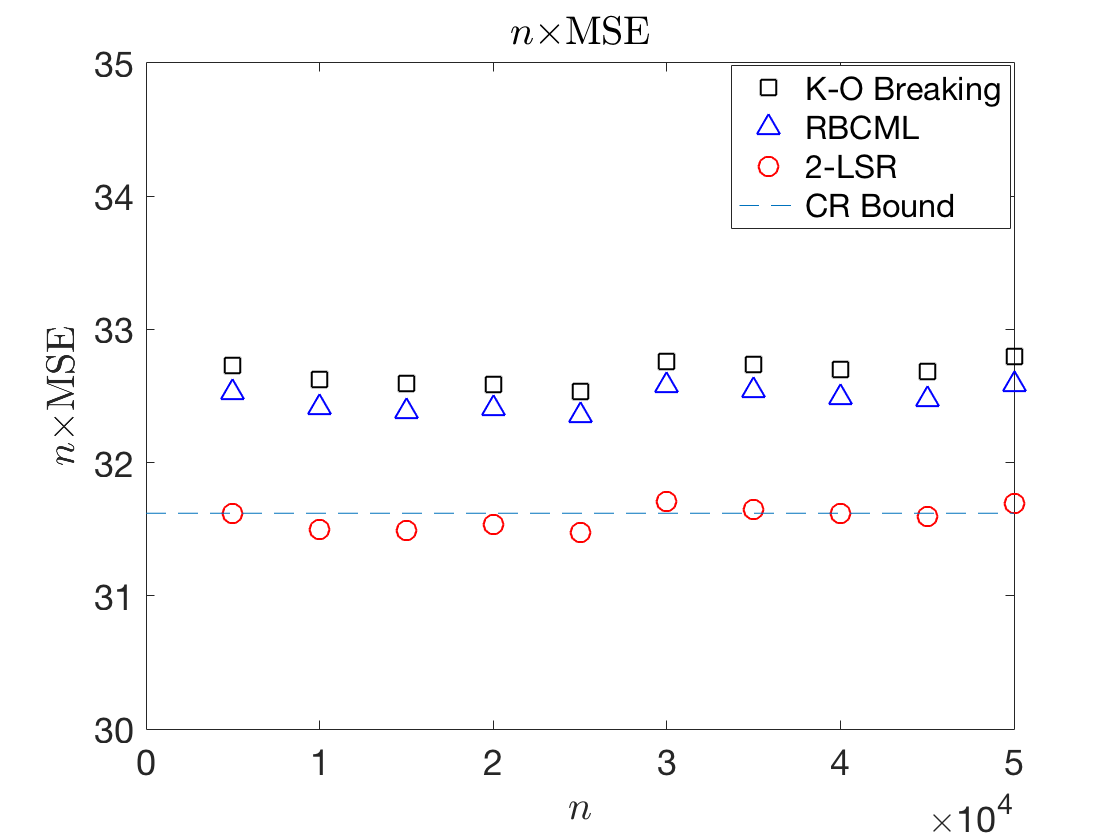}\includegraphics[width=0.5\textwidth]{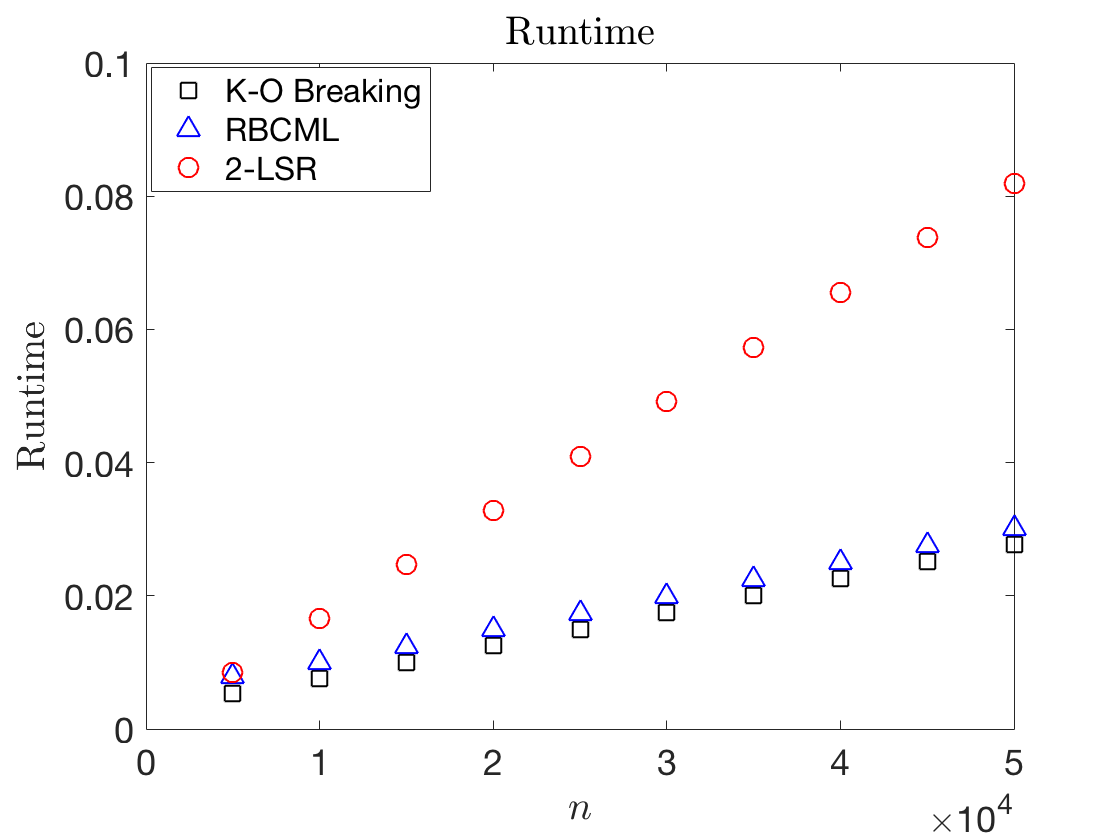}}
\caption{The $n\times$MSE and running time for the Plackett-Luce model. Values are calculated over 50000 trials. ``K-O Breaking" denotes the algorithm by~\citet{Khetan16:Data}, ``RBCML" denotes the proposed RBCML with heuristic $\mw$, ``2-LSR" denotes the 2-iteration I-LSR algorithm by~\citet{Maystre15:Fast}. ``CR Bound" line is the lower bound of $n\times$MSE for any unbiased estimator.}
\label{fig:pl}
\end{figure*}

\begin{thm}\label{thm:rum} Let $\pi_1, \pi_2, \ldots, \pi_m$ denote the utility distributions for a symmetric RUM. Suppose there exists $\pi_i$ s.t.~(1) $(\ln\pi_i(x))'$ is monotonically decreasing, and (2) $\lim_{x\ra -\infty}(\ln\pi_i(x))'\ra \infty$. Then, $\rbcml{\mg}{\mw_{\text u}}$ is consistent if and only if $\mg$ is uniform.
\end{thm}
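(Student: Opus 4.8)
The plan is to reduce consistency to a first-order stationarity condition and then analyze it in two directions. By Theorem~\ref{thm:asymptotic}, $\rbcml{\mg}{\mw_\text{u}}$ is consistent if and only if, for every ground truth $\vec\theta_0$, the point $\vec\theta_0$ is the unique solution of $\nabla\exll(\vec\theta)=\vec0$ in \eqref{eqfirstorder}. For the \emph{only-if} direction it suffices that stationarity \emph{fail} at some $\vec\theta_0$ to certify inconsistency, whereas for the \emph{if} direction strict concavity of $\exll$ (from log-concavity, cf.\ Theorem~\ref{thm:logconcaverum}) guarantees a stationary point is the unique maximizer. Writing $\bar\kappa_{ii'}(\vec\theta_0)=\sum_{k<l}g_{kl}\Pr_{\vec\theta_0}(a_i\text{ at }k,\,a_{i'}\text{ at }l)$ and using $p_{ii'}+p_{i'i}=1$ (hence $\partial_{\theta_i}p_{ii'}=-\partial_{\theta_i}p_{i'i}$), the $i$th equation of \eqref{eqfirstorder} evaluated at $\vec\theta_0$ collapses to $\sum_{i'\ne i}q_{ii'}\bigl(\tfrac{\bar\kappa_{ii'}}{p_{ii'}}-\tfrac{\bar\kappa_{i'i}}{p_{i'i}}\bigr)=0$, where $q_{ii'}=\partial_{\theta_i}p_{ii'}(\vec\theta_0)>0$. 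Everything below is an analysis of this single identity.

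The \emph{if} direction is then immediate. If $\mg$ is uniform, rescale so all weights equal $1$; then $\bar\kappa_{ii'}(\vec\theta_0)=\sum_{k<l}\Pr_{\vec\theta_0}(a_i\text{ at }k,\,a_{i'}\text{ at }l)=\Pr_{\vec\theta_0}(a_i\succ a_{i'})=p_{ii'}(\vec\theta_0)$, so every ratio $\bar\kappa_{ii'}/p_{ii'}$ equals $1$ and each bracket in the stationarity identity vanishes termwise, giving $\nabla\exll(\vec\theta_0)=\vec0$. Note this step uses neither symmetry nor condition~(2).

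For the \emph{only-if} direction I would argue the contrapositive by induction on $m$, mirroring Theorem~\ref{thm:pl}. First a restriction lemma: sending the means of a chosen set of alternatives to $+\infty$ and another to $-\infty$ pins them to the top and bottom positions, so the remaining alternatives realize a symmetric sub-RUM on a block of consecutive positions and the stationarity identity degenerates into the one for $\mg$ restricted to that block. Keeping the distinguished $\pi_i$ satisfying (1)--(2) among the active alternatives, the induction hypothesis forces every restriction of $\mg$ to a proper block of consecutive positions to be uniform. Combining the blocks $\{1,\dots,m-1\}$ and $\{2,\dots,m\}$ shows all edges except possibly $\{1,m\}$ share a common weight $w$; since the stationarity residual is linear in the breaking weights and the uniform breaking contributes zero residual (the if direction), subtracting $w$ times the uniform breaking leaves a single residual edge $\{1,m\}$, and it remains to show that a nonzero single-edge breaking is inconsistent.

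The crux, which I expect to be the hardest step, is exactly this single-edge inconsistency, and it is where condition~(2) is essential. At $\vec\theta_0=\vec0$ symmetry makes the residual vanish, so the witness must be a highly separated truth, say $\theta_{0,1}\gg\cdots\gg\theta_{0,m}$. There $\bar\kappa_{ii'}=\Pr_{\vec\theta_0}(a_i\text{ top},\,a_{i'}\text{ bottom})$, and the $i=1$ stationarity equation is dominated by the $i'=m$ term $q_{1m}\bigl(\tfrac{\bar\kappa_{1m}}{p_{1m}}-\tfrac{\bar\kappa_{m1}}{p_{m1}}\bigr)\to q_{1m}>0$, with no other edge available to cancel it; the remaining terms are ratios of rare-event probabilities that must be shown to stay bounded (indeed to vanish) rather than conspire to restore balance. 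The super-exponentially light tail in condition~(2), i.e.\ $(\ln\pi_i)'\to+\infty$, controls the decay rates of the competing misranking events and rules out such cancellation; I would make this quantitative via Laplace/large-deviation estimates on the strictly log-concave density $\pi_i*\pi_{i'}^\ast$ (condition~(1) and Theorem~\ref{thm:logc_conv}). The same computation explains the contrast with Plackett--Luce (Theorem~\ref{thm:pl}): Gumbel tails have $(\ln\pi)'\to$ const, making the rates scale-free so that position-dependent weights can balance, whereas the light tail of~(2) destroys this scale-invariance and forces all edge weights to coincide, i.e.\ $\mg$ uniform.
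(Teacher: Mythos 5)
Your inductive step is sound and matches the paper's (restrict to the blocks $\{1,\dots,m-1\}$ and $\{2,\dots,m\}$ via a subgraph lemma, extract a common weight $w$, subtract $w\mg_{\text u}$ using linearity of the stationarity residual, and reduce to a single residual edge $\{1,m\}$), and your ``if'' direction is exactly the paper's. But the block-combination argument only forces a \emph{common} weight when the two blocks share at least one edge, i.e.\ when $m\ge 4$. The true base case of the induction is therefore $m=3$, and there your reduction collapses: the blocks $\{1,2\}$ and $\{2,3\}$ are single edges, nothing forces $g_{12}=g_{23}$, and a breaking $\mg=\{g_{12}=x,\,g_{23}=y,\,g_{13}=z\}$ with $x\ne y$ (e.g.\ $\{g_{12}=2,\,g_{23}=0,\,g_{13}=1\}$) is \emph{not} of the form (uniform) plus (a multiple of the edge $\{1,3\}$), so ``single-edge inconsistency'' never applies to it. The paper spends most of its effort exactly here: when $x+y\ne 2z$ it uses a flip-symmetrization (Lemma~\ref{lem:flip}, valid because the RUM is symmetric) plus additivity (Lemma~\ref{lem:plusminus}) to reduce to the single edge $\{g_{13}=2z-(x+y)\}$ and contradict Lemma~\ref{lem:rum1m}; but when $x+y=2z$ the symmetrized breaking $\mg+\mg^*$ equals $(x+y)\mg_{\text u}$ exactly and yields no information, so this case requires a separate and genuinely harder argument (Lemma~\ref{lem:g210}: the two-edge breaking $\{g_{12}=2,\,g_{13}=1\}$ is inconsistent). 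That lemma is the \emph{only} place where hypotheses (1)--(2) on $(\ln\pi_i)'$ enter. Your proposal never reaches any of these configurations.

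Relatedly, you misplace the crux. Single-edge inconsistency, which you call the hardest step and the home of condition (2), needs no tail hypothesis at all: Lemma~\ref{lem:rum1m} proves it for every symmetric location family by a finite witness --- take $\theta_1=L,\ \theta_2=\theta_3=0$ (or $\theta_1=\theta_2=L$, rest $0$, for $m\ge 4$), use the ties to collapse the gradient to a positive multiple of a single bracket, and observe $p_{132}\ne p_{231}$ while $p_{312}=p_{213}$. Your alternative plan (fully separated $\vec\theta_0$, domination by the $i'=m$ term) is also more fragile than you suggest: as the separation grows, \emph{every} term of the gradient vanishes, and the term you call dominant carries the prefactor $q_{1m}=\partial_{\theta_1}p_{1m}$, which is the \emph{smallest} of all the derivative factors since positions $1$ and $m$ are farthest apart (cf.\ Lemma~\ref{lem:pdr}); whether $q_{1m}\cdot(\approx 1)$ beats $q_{12}\cdot(\text{small bracket})$ is precisely a comparison of tail decay rates, i.e.\ the Laplace estimates you defer are the entire content of the step. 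So even granting your reduction, the proof is incomplete at $m=3$, and the role of the hypothesis $(\ln\pi_i)'\ra\infty$ is attributed to a lemma that does not need it.
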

\begin{sketch}
Define the single-edge breaking $\mg_1=\{g_{1m}=1\}$. We first prove $\rbcml{\mg_1}{\mw_{\text u}}$ is not consistent. Then we prove the theorem by induction on $m$. $m=2$ is trivial because the only breaking is uniform. For $m=3$, we first prove that the single-edge breaking $\mg_1=\{g_{13}=1\}$ is not consistent. Suppose the breaking is $\mg=\{g_{12}=x, g_{23} = y, g_{13} = z\}$. Let $\mg^*=\{g_{12}=y, g_{23} = x, g_{13} = z\}$. We prove that $\rbcml{\mg^*}{\mw_{\text u}}$ is consistent for $\mm^\ast$, which is the RUM obtained from $\mm$ by flipping the shapes of the utility distributions. Because $\mm$ is symmetric, we have $\mm^*=\mm$. Then we prove that  $\rbcml{\mg+\mg^*}{\mw_{\text u}}$ is consistent. If $x+y<2z$, We subtract $(x+y)\mg_{\text u}$ from  $\mg+\mg^*$ and get a consistent breaking $(2z-(x+y))\mg_{1}$, which is a contradiction.
%
For the case where $x+y=2z$ we use the premise in the theorem statement to directly prove that the breaking is inconsistent.


Suppose the theorem holds for $m=k$. When $m=k+1$, W.l.o.g. we let $\pi_2$ satisfy the conditions that $(\ln\pi_i(x))'$ is monotonically decreasing and $\lim_{x\ra -\infty}(\ln\pi_i(x))'\ra \infty$. Let $\theta_1=L$, $\theta_m=-L$, and $\theta_2=\ldots=\theta_{m-1}=0$. So when $L\ra\infty$, with probability goes to $1$, $a_1$ is ranked at the top and $a_m$ is ranked at the bottom. We then focus on  $\mg_{[2,m]}$ and $\mg_{[1, m-1]}$. By induction hypothesis, $\mg_{[2,m]}$ (respectively, $\mg_{[1, m-1]}$) is either uniform or empty. If $\mg_{[2,m]}$ is empty, then $\mg_{[1, m-1]}$ is also empty. Because $\mg$ is nonempty, we must have $\mg=C\mg_1$, where $C>0$. This is a contradiction. If $\mg_{[2,m]}$ is uniform but $\mg$ is not uniform, then the single edge breaking $\mg_1$ must be consistent, which is a contradiction.
\end{sketch}

\begin{coro} 
Theorem~\ref{thm:rum} holds for any RUM with symmetric distributions where any single distribution is Gaussian.
\end{coro}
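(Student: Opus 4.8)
The plan is to verify that a zero-mean Gaussian shape satisfies the two hypotheses of Theorem~\ref{thm:rum}, so that the corollary follows immediately by instantiating that theorem with $\pi_i$ taken to be the Gaussian component. Since the corollary already assumes the RUM is symmetric and contains at least one Gaussian distribution, the entire content of the proof reduces to checking conditions (1) and (2) for the Gaussian.

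First I would write down the relevant log-derivative. Working in the location family with means shifted to $0$, a Gaussian utility shape is $\pi_i(x)=\frac{1}{\sqrt{2\pi}\,\sigma}e^{-x^2/(2\sigma^2)}$ for some $\sigma>0$, so $\ln\pi_i(x)=-\ln(\sqrt{2\pi}\,\sigma)-\frac{x^2}{2\sigma^2}$ and hence $(\ln\pi_i(x))'=-x/\sigma^2$. Next I would check condition (1): the map $x\mapsto -x/\sigma^2$ has derivative $-1/\sigma^2<0$, so it is strictly (hence monotonically) decreasing on $\mathbb R$. For condition (2), $\lim_{x\ra-\infty}(\ln\pi_i(x))'=\lim_{x\ra-\infty}(-x/\sigma^2)=+\infty$. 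Both hypotheses therefore hold for the Gaussian shape, and note that the Gaussian is also symmetric about its mean, consistent with the standing symmetry assumption.

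Invoking Theorem~\ref{thm:rum} with this $\pi_i$ then yields directly that $\rbcml{\mg}{\mw_{\text u}}$ is consistent if and only if $\mg$ is uniform, which is the claim. There is no genuine obstacle here beyond the elementary derivative computation; the only subtlety worth flagging is that Theorem~\ref{thm:rum} requires merely a \emph{single} distribution $\pi_i$ to satisfy (1)--(2), so it suffices that one Gaussian component be present rather than all of the utility distributions being Gaussian.
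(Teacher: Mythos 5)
Your proposal is correct and is exactly the intended argument: the paper states this corollary without a separate proof precisely because it follows by verifying that the Gaussian shape satisfies conditions (1) and (2) of Theorem~\ref{thm:rum}, which is what you do via $(\ln\pi_i(x))'=-x/\sigma^2$. Your remark that only a single Gaussian component is needed matches the theorem's hypothesis as stated, so nothing is missing.
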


The following two theorems give stronger characterizations by leveraging Theorems~\ref{thm:pl} and~\ref{thm:rum}.

\begin{thm}\label{thm:cmlrbconsistencypl}
 $\rbcml{\mg}{\mw}$ for Plackett-Luce is consistent if and only if $\mg$ is the weighted union of position-$k$ breakings and $\mw$ is connected and symmetric.
\end{thm}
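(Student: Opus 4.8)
The plan is to first invoke Theorem~\ref{thm:asymptotic} to turn consistency into a statement about the stationary points of the expected composite log-likelihood $\exll$. Since consistency must hold for \emph{every} ground truth $\vec\theta_0$, and since by the expected analogue of Theorem~\ref{thm:logconcavepl} the function $\exll(\cdot)$ is strictly concave precisely when $\mw\otimes\bar G$ is weakly connected (where $\bar G$ has edge weights $\bar\kappa_{ii'}$), consistency is equivalent to: $\mw\otimes\bar G$ is weakly connected and $\nabla\exll(\vec\theta_0)=\vec0$ for all $\vec\theta_0$. Writing out the $i$th gradient component one gets $\nabla_i\exll(\vec\theta_0)=\sum_{i'\neq i}\big(\bar\kappa_{ii'}w_{ii'}p_{i'i}(\vec\theta_0)-\bar\kappa_{i'i}w_{i'i}p_{ii'}(\vec\theta_0)\big)$, so the central object is the antisymmetric quantity $D_{ii'}(\vec\theta_0):=\bar\kappa_{ii'}p_{i'i}(\vec\theta_0)-\bar\kappa_{i'i}p_{ii'}(\vec\theta_0)$.

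For the ``if'' direction I would isolate the following key lemma: if $\mg$ is a weighted union of position-$k$ breakings, then $\bar\kappa_{ii'}/\bar\kappa_{i'i}=e^{\theta_{0,i}}/e^{\theta_{0,i'}}=p_{ii'}(\vec\theta_0)/p_{i'i}(\vec\theta_0)$ for every pair, i.e.\ $D_{ii'}\equiv0$. The proof conditions each position-$k$ breaking on the set $S$ of alternatives filling the first $k-1$ positions: by Luce's choice axiom the alternative chosen at position $k$ is $a_i$ (resp.\ $a_{i'}$) with probability proportional to $e^{\theta_{0,i}}$ (resp.\ $e^{\theta_{0,i'}}$) among $\ma\setminus S$, so the ratio $\bar\kappa_{ii'}^{(k)}/\bar\kappa_{i'i}^{(k)}$ equals $e^{\theta_{0,i}}/e^{\theta_{0,i'}}$ independently of $S$ and of $k$, and this ratio is preserved under nonnegative combinations. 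Granting the lemma, each gradient term collapses to $E_{ii'}(\vec\theta_0)\,(w_{ii'}-w_{i'i})$ with $E_{ii'}=\bar\kappa_{ii'}\bar\kappa_{i'i}/(\bar\kappa_{ii'}+\bar\kappa_{i'i})>0$; symmetry of $\mw$ kills every term, giving $\nabla\exll(\vec\theta_0)=\vec0$. Finally, because the Plackett--Luce model has full support, $\bar\kappa_{ii'}>0$ for all pairs, so $\mw\otimes\bar G$ is weakly connected exactly when $\mw$ is connected; Theorem~\ref{thm:logconcavepl} then gives strict concavity and hence $\vec\theta_0$ as the unique stationary point, which is consistency.

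For the ``only if'' direction I would argue the three required properties in turn. Connectedness of $\mw$ is necessary because if $\mw$ splits the alternatives into components, $\exll$ decomposes and the overall location of any component not containing $a_m$ is unidentified, producing a ray of maximizers and hence inconsistency. The remaining task is to show that the functional identity $\sum_{i'\neq i}\big(\bar\kappa_{ii'}w_{ii'}p_{i'i}-\bar\kappa_{i'i}w_{i'i}p_{ii'}\big)\equiv0$ (in $\vec\theta_0$) forces $\mg$ to be a weighted union of position-$k$ breakings and $\mw$ to be symmetric. I would do this by induction on $m$ following the restriction-to-intervals strategy of Theorem~\ref{thm:pl}: restricting $\mg$ to the position intervals $[1,m-1]$ and $[2,m]$, and using ground truths in which $a_1$ is almost surely ranked first and $a_m$ almost surely ranked last to decouple the top and bottom positions, lets the induction hypothesis apply to the two subgraphs and reduces the only open case to ruling out a single surviving edge $\{1,m\}$, exactly as in Theorem~\ref{thm:pl}. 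Once $\mg$ is known to be a weighted union, the lemma gives $D_{ii'}\equiv0$, the identity reduces to $\sum_{i'}E_{ii'}(\vec\theta_0)(w_{ii'}-w_{i'i})\equiv0$, and taking ground truths in which two alternatives $a_i,a_{i'}$ dominate all others isolates the single positive coefficient $E_{ii'}$ in the $i$th equation and forces $w_{ii'}=w_{i'i}$, i.e.\ symmetry.

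The main obstacle is the disentanglement step in the ``only if'' direction: separating the contribution of the breaking graph (through $\bar\kappa$, which itself depends nonlinearly on $\vec\theta_0$) from that of the CML weights $\mw$. The danger is that a carefully chosen non-uniform $\mw$ might ``rescue'' a breaking that is not a weighted union of position-$k$ breakings; ruling this out is what forces the limiting and inductive structural analysis above rather than a one-line computation, and is the technically delicate part of the argument.
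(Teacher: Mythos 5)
Your ``if'' direction is correct and is essentially the paper's argument: the pairwise proportionality property of weighted unions of position-$k$ breakings ($\bar\kappa_{ii'}/\bar\kappa_{i'i}=e^{\theta_{0,i}}/e^{\theta_{0,i'}}=p_{ii'}(\vec\theta_0)/p_{i'i}(\vec\theta_0)$, your $D_{ii'}\equiv 0$) makes each gradient term proportional to $w_{ii'}-w_{i'i}$, so symmetry of $\mw$ gives $\nabla\exll(\vec\theta_0)=\vec 0$; connectivity of $\mw$ gives strict concavity via Theorem~\ref{thm:logconcavepl}, hence uniqueness, hence consistency via Theorem~\ref{thm:asymptotic}. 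Your derivation of the symmetry of $\mw$ (ground truths where two alternatives dominate, isolating the coefficient $E_{ii'}$) likewise matches the mechanism in the paper's Lemma~\ref{lem:consistency}, parts 1(b)--1(c).

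The genuine gap is in how you propose to extract the structure of $\mg$ in the ``only if'' direction. You plan to rerun the induction of Theorem~\ref{thm:pl} (interval restrictions $\mg_{[1,m-1]}$, $\mg_{[2,m]}$, then rule out a surviving edge $\{1,m\}$) \emph{with a general, possibly asymmetric and non-uniform} $\mw$. But all of the machinery that induction rests on --- the subgraph-restriction lemma (Lemma~\ref{lem:subgraph}), the inconsistency of the single-edge breaking (Lemma~\ref{pl1m}), and the additivity of consistency over breakings (Lemma~\ref{lem:plusminus}) --- is established only for $\mw_{\text u}$, and you yourself flag the resulting danger (that some clever $\mw$ could ``rescue'' a bad breaking) as unresolved. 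That is exactly the hole: without closing it, the induction does not go through. The paper closes it with a different, prior reduction that your proposal is missing: Lemma~\ref{lem:consistency} 2(c) shows that consistency of $\rbcml{\mg}{\mw}$ forces consistency of $\rbcml{\mg}{\mw_\text{u}}$, by exploiting that consistency must hold for \emph{every} ground truth and that all Plackett--Luce utility distributions have the same shape: one sums the first-order conditions $\nabla\exll(\vec\theta_0)=\vec 0$ over all $m!$ relabelings of the alternatives, under which each $w_{ii'}$ occupies each pairwise slot $(m-2)!$ times, so the averaged identity is precisely the first-order condition for a uniform weight vector. After this symmetrization, Theorem~\ref{thm:pl} applies verbatim to pin down $\mg$, and only then is symmetry and connectivity of $\mw$ deduced (Lemma~\ref{lem:consistency} 1(c), which is applicable because a nonempty weighted union of position-$k$ breakings always contains an adjacent edge of positive weight). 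So the order of deductions matters: reduce $\mw$ to uniform first, get $\mg$ second, get properties of $\mw$ third; your plan inverts the first two steps and thereby inherits an induction you cannot complete as stated.
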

\begin{thm}\label{thm:cmlrbconsistencyrum}
Let $\pi$ be any symmetric distribution that satisfies the condition in Theorem~\ref{thm:rum}. Then $\rbcml{\mg}{\mw}$ is consistent for RUM$(\pi)$ if and only if $\mg$ is uniform and $\mw$ is connected and symmetric.
\end{thm}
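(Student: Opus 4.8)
The plan is to route everything through the first-order condition and then treat the two directions separately. By Theorem~\ref{thm:asymptotic}, $\rbcml{\mg}{\mw}$ is consistent for $\rum(\pi)$ exactly when, for every ground truth $\vec\theta_0$, the point $\vec\theta_0$ is the \emph{unique} solution of $\nabla\exll(\vec\theta)=\vec 0$. Since $\pi$ is continuous and positive everywhere, for any nonempty $\mg$ we have $\bar\kappa_{ii'}(\vec\theta_0)>0$ for all pairs, so the weighted graph attached to $\exll$ has the same support as $\mw$; applying Theorem~\ref{thm:logconcaverum} to the expected objective $\exll$ shows it is strictly concave precisely when $\mw$ is connected, in which case any solution of $\nabla\exll=\vec 0$ is automatically unique. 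Consistency thus reduces to two requirements: $\mw$ is connected, and $\nabla\exll(\vec\theta_0)=\vec 0$ holds at the true $\vec\theta_0$ for every $\vec\theta_0$.

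For the ``if'' direction I would first record the key identity for uniform breakings: when $\mg=c\,\mg_\text{u}$, the expected break count of a comparison is proportional to its true probability, $\bar\kappa_{ii'}(\vec\theta_0)=c\,p_{ii'}(\vec\theta_0)$, because under $c\,\mg_\text{u}$ the pair $a_i\succ a_{i'}$ is counted with weight $c$ exactly when $a_i\succ a_{i'}$ in a sampled ranking. Substituting this into the gradient formula of Section~\ref{sec:asymptotic} and writing $d_{ii'}:=\partial p_{ii'}/\partial\theta_i=-\partial p_{i'i}/\partial\theta_i$ yields $\nabla_i\exll(\vec\theta_0)=c\sum_{i'\neq i}d_{ii'}(w_{ii'}-w_{i'i})$, which vanishes for every $i$ once $\mw$ is symmetric. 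Connectedness of $\mw$ then supplies uniqueness, and Theorem~\ref{thm:asymptotic} gives consistency.

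For the ``only if'' direction I would establish the three necessary conditions in turn. Connectedness of $\mw$ is forced by uniqueness: if $\mw$ splits the alternatives into two parts with no positive-weight edge between them, then, because each within-part term $\ln p_{ii'}(\vec\theta)$ depends only on $\theta_i-\theta_{i'}$ (location family), shifting one part by a common constant leaves $\exll$ unchanged, producing a line of maximizers and contradicting consistency. For the other two conditions I would analyze the identity $\nabla\exll(\vec\theta_0)=\vec 0$, which must hold for all $\vec\theta_0$. Evaluating at $\vec\theta_0=\vec 0$, where $p_{ii'}=1/2$, $d_{ii'}=d_0>0$, and $\bar\kappa_{ii'}=\bar\kappa_0$ are common to all pairs, gives the balance relations $\sum_{i'\neq i}(w_{ii'}-w_{i'i})=0$; to upgrade these to full symmetry $w_{ii'}=w_{i'i}$ I would differentiate the identity in $\vec\theta_0$ and exploit the reversal symmetry of the symmetric model, namely $p_{ii'}(-\vec\theta)=p_{i'i}(\vec\theta)$, which interchanges the roles of $w_{ii'}$ and $w_{i'i}$. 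Once $\mw$ is known to be symmetric and connected, the identity collapses to $\sum_{i'\neq i}w_{ii'}d_{ii'}\bigl(\bar\kappa_{ii'}/p_{ii'}-\bar\kappa_{i'i}/p_{i'i}\bigr)=0$ for all $\vec\theta_0$, which is exactly the $\mw_\text{u}$ condition of Theorem~\ref{thm:rum} reweighted by the positive weights $w_{ii'}$. I would then rerun the induction-and-restriction argument from the proof of Theorem~\ref{thm:rum}, restricting $\mg$ to contiguous position blocks and driving $\theta_1=L\to\infty$, $\theta_m=-L$, to conclude that $\mg$ must be uniform, with the single-edge breaking again serving as the inconsistent obstruction.

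The main obstacle is the coupling of $\mg$ and $\mw$: they enter the gradient only through the products $\bar\kappa_{ii'}(\vec\theta_0)\,w_{ii'}$, so neither the uniformity of $\mg$ nor the symmetry of $\mw$ can be read off in isolation, and for general RUMs there is no closed form for $p_{ii'}$ or $\bar\kappa_{ii'}$ (unlike the Plackett-Luce case of Theorem~\ref{thm:cmlrbconsistencypl}). Breaking this circularity is the crux: I expect to use the $\vec\theta_0=\vec 0$ evaluation to pin down the balance relations, the reversal symmetry of $\pi$ to complete symmetry of $\mw$, and only then the monotonicity hypotheses of Theorem~\ref{thm:rum} (that $(\ln\pi)'$ is decreasing with $\lim_{x\to-\infty}(\ln\pi(x))'=\infty$) to force uniformity of $\mg$ from the reweighted identity.
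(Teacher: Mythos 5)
Your framing through Theorem~\ref{thm:asymptotic} and your ``if'' direction are sound and essentially the paper's: under $\mg=c\,\mg_\text{u}$ one has $\bar\kappa_{ii'}=c\,p_{ii'}(\vec\theta_0)$, so the gradient at the truth collapses to $c\sum_{i'\neq i}d_{ii'}(w_{ii'}-w_{i'i})$ with $d_{ii'}=\partial p_{ii'}/\partial\theta_i$, and symmetry plus connectedness (hence strict concavity and uniqueness) give consistency. The genuine gap is in the ``only if'' direction, at the step where you claim the reversal symmetry $p_{ii'}(-\vec\theta)=p_{i'i}(\vec\theta)$ ``interchanges the roles of $w_{ii'}$ and $w_{i'i}$'' and thereby upgrades the balance relations $\sum_{i'\neq i}(w_{ii'}-w_{i'i})=0$ to full symmetry. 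Reversal does not act on $\mw$ alone: negating $\vec\theta$ reverses every sampled ranking, so ranking positions are mapped $j\mapsto m+1-j$, and the expected counts transform as $\bar\kappa_{ii'}(-\vec\theta)$ under $\mg$ equals $\bar\kappa_{i'i}(\vec\theta)$ under the position-reversed breaking $\mg^\ast$. Hence the identity you get at $-\vec\theta_0$ is the first-order condition for $\rbcml{\mg^\ast}{\mw^\ast}$ (this is exactly Lemma~\ref{lem:flip}), not for $\rbcml{\mg}{\mw^\ast}$. Unless you already know $\mg=\mg^\ast$ --- which holds for uniform $\mg$, i.e., precisely what you plan to prove \emph{afterwards} --- comparing the two identities says nothing about the antisymmetric part of $\mw$; at best, combining them shows the \emph{symmetrized} weights are consistent (part 3 of Lemma~\ref{lem:consistency}), which is strictly weaker than $\mw$ itself being symmetric. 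Since your uniformity step in turn presupposes symmetry of $\mw$ (to reduce to the ``reweighted'' Theorem~\ref{thm:rum} identity), the circularity you correctly identify as the crux is not actually broken, and the ``only if'' chain collapses.

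The paper breaks the circularity in the opposite order and with a different tool. Because $\rum(\pi)$ has identical utility distributions, the model is invariant under relabeling the alternatives, and the first-order identity is linear in the products $\bar\kappa_{ii'}w_{ii'}$; averaging the identity over all relabelings shows that consistency of $\rbcml{\mg}{\mw}$ implies consistency of $\rbcml{\mg}{\mw_\text{u}}$ (part 2(c) of Lemma~\ref{lem:consistency}), with \emph{no} hypothesis on $\mw$. Theorem~\ref{thm:rum} then forces $\mg$ to be uniform. Only afterwards is symmetry of $\mw$ extracted, and not by reversal but by localization: set $\theta_1=\theta_2=0$, push all other parameters to $\pm L$, use Lemma~\ref{lem:infty} to make every cross term smaller than any $\epsilon$, and use Lemma~\ref{lem:equal} to reduce the surviving term to a positive multiple of $w_{21}-w_{12}$ (parts 1(c)/2(b) of Lemma~\ref{lem:consistency}); this localization needs suitable edges in $\mg$, which is available once uniformity is known. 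To repair your proof you would either import this relabeling trick, or run a localization argument for symmetry only after uniformity of $\mg$ has been established; your proposed rerun of the Theorem~\ref{thm:rum} induction with symmetric positive weights is plausible but then becomes unnecessary, since the relabeling reduction lets Theorem~\ref{thm:rum} be applied as a black box.
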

The proofs for Theorems~\ref{thm:cmlrbconsistencypl} and~\ref{thm:cmlrbconsistencyrum} are similar. The ``if" direction can be proved by verifying that the ground truth parameter is the solution to \eqref{eqfirstorder}. For the ``only if" direction, we first prove that consistency of $\rbcml{\mg}{\mw}$ implies consistency of $\rbcml{\mg}{\mw_\text{u}}$, which further implies $\mg$ is the weighted union of position-$k$ breakings for PLs (Theorem~\ref{thm:pl}) or uniform breaking for RUMs (Theorem~\ref{thm:rum}). Given this condition on $\mg$, we prove that $\mw$ must be connected and symmetric.

\section{The RBCML Framework}

The asymptotic covariance of RBCML depends on $\mg$ and $\mw$. 
The optimal $\mg$ and $\mw$ depend on the ground truth parameter $\vec\theta_0$\footnote{\citet{Khetan16:Data} proposed a breaking $\mg$, which is not a function of $\vec\theta_0$.}, which is exactly what we want. To tackle this problem, we propose the adaptive RBCML framework, guided by our Theorems~\ref{thm:cmlrbconsistencypl} and \ref{thm:cmlrbconsistencyrum} and shown as Algorithm ~\ref{alg:rbcml}. In this algorithm, $\mg$ and $\mw$ are iteratively updated given the estimate of $\vec\theta$ from the previous iteration. 

\vspace{-3mm}
\begin{algorithm}[H]
\caption{Adaptive RBCML}
\label{alg:rbcml}
{\bf Input}: Profile $P$ of $n$ rankings, the number of iterations $T$, the heuristics of breaking $\mg(\vec\theta)$ and the weights $\mw(\vec\theta)$.\\
{\bf Output}: Estimated parameter $\vec\theta^*$.\\
{\bf Initialize} $\vec\theta^{(0)} = \vec 0$
\begin{algorithmic}[1]
\FOR{$t=1$ {\bf to} $T$}
\STATE Compute $\mg(\vec\theta^{(t-1)})$ and $\mw(\vec\theta^{(t-1)})$.
\STATE Estimate $\vec\theta^{(t)}$ using $\mg(\vec\theta^{(t-1)})$ and $\mw(\vec\theta^{(t-1)})$ by maximizing \eqref{eq:cllrum} (or \eqref{eq:cllpl} for Plackett-Luce)
\ENDFOR
\end{algorithmic}
\end{algorithm}
\vspace{-3mm}

No efficient way of computing the optimal $\mg(\vec\theta)$ and $\mw(\vec\theta)$ is known since the asymptotic covariance is generally hard to compute, where an expectation is taken over $m!$ rankings. How to efficiently compute the optimal $\mg$ and $\mw$ is a promising future direction. In the experiments of this paper, we use $\mg_\text{u}$ and $\mw_{\text{u}}$ for Gaussian RUMs since $\mg_\text{u}$ is the only consistent breaking. For the Plackett-Luce model, we use the $\mg$ proposed by \citet{Khetan16:Data} and a heuristic $\mw(\vec\theta)$ (See Section~\ref{sec:exp}).

\section{Experiments}\label{sec:exp}

\begin{figure*}[htp]
\centerline{\includegraphics[width=0.5\textwidth]{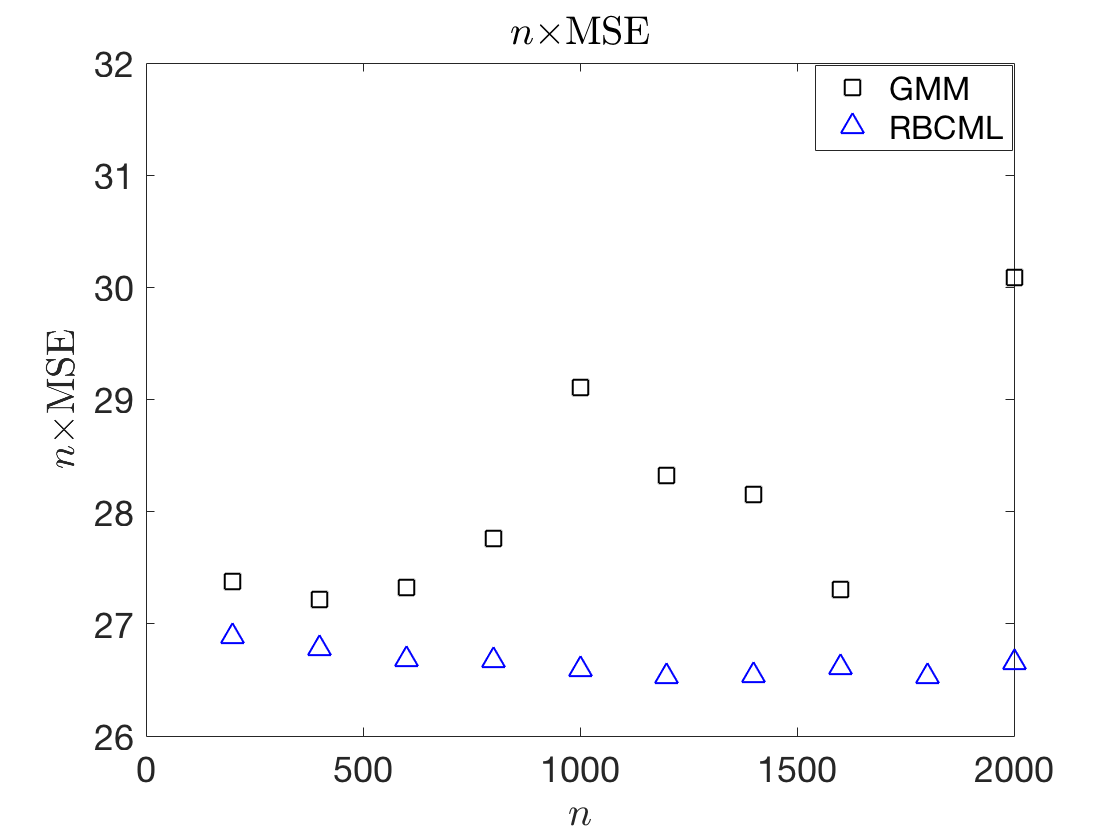}\includegraphics[width=0.5\textwidth]{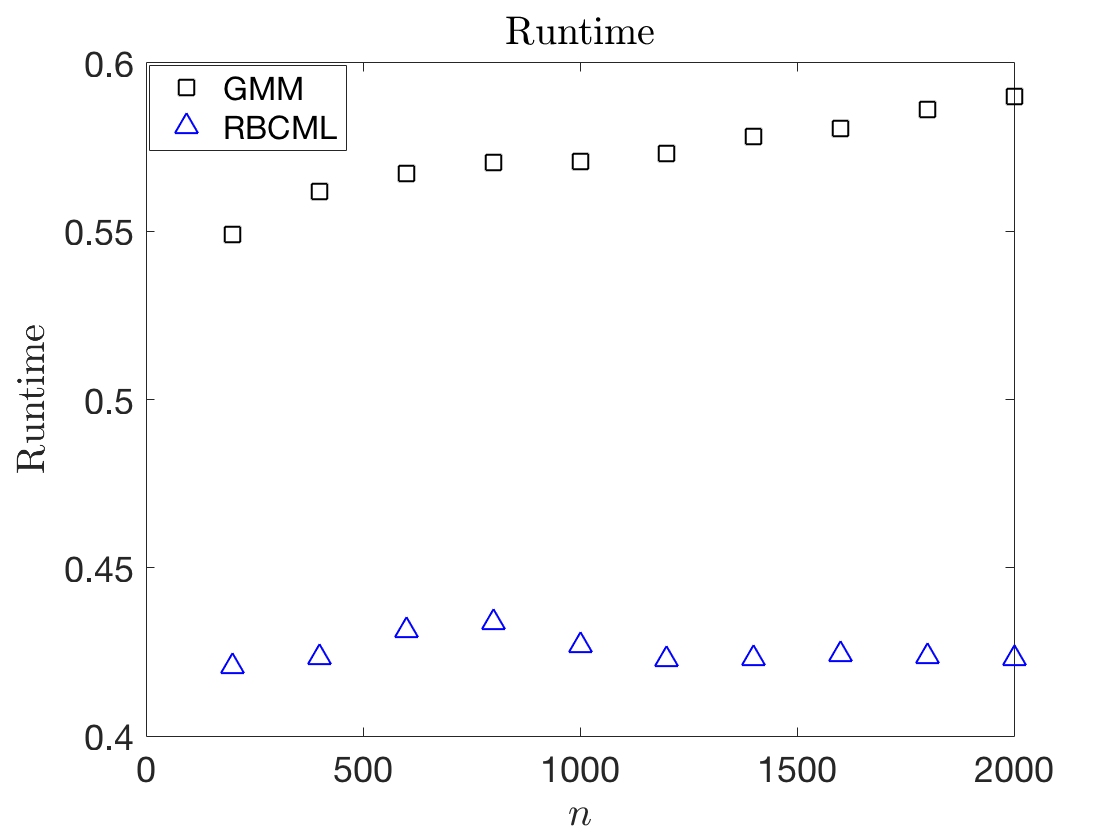}}
\caption{$n\times$MSE and runtime of GMM and RBCML for Gaussian RUMs over 10 alternatives. Values are averaged over 50000 trials.}
\label{fig:grum}
\end{figure*}

We compare RBCML with state-of-the-art algorithms for both Gaussian RUMs (GMM algorithm by~\citet{Azari14:Computing}) and the Plackett-Luce model (the I-LSR algorithm by~\citet{Maystre15:Fast} and the consistent rank-breaking algorithm by~\citet{Khetan16:Data}). In both experiments, we generate synthetic datasets of full rankings over $m=10$ alternatives. The ground truth parameter is generated uniformly at random between $0$ and $5$ and shifted s.t. $\theta_{10}=0$. For Gaussian RUMs, the utility distribution of $a_i$ is $N(\theta_i, 1)$. The results are averaged over 50000 trials. 

\noindent{\bf Metrics.} We measure statistical efficiency by $n\times\text{MSE}$, where $n$ is the number of rankings in the dataset. {\bf We use $n\times$MSE rather than the standard MSE, because it is easier to see the difference between algorithms w.r.t.~the former.}  The reason is that  $n\times$MSE approaches a positive constant as $n\rightarrow\infty$, due to asymptotic normality of RBCML.
We use running time to measure computational efficiency of each algorithm.

{\bf Gaussian RUMs.} We use a one-step ($T=1$ in Algorithm~\ref{alg:rbcml}) RBCML$(\mg_{\text u}, \mw_{\text u})$ for Gaussian RUMs and the results are shown in Figure~\ref{fig:grum}. We use uniform breaking rather than other breakings because it is the only consistent breaking according to our theoretical results. 

We observe that our RBCML outperforms the GMM algorithm by~\citet{Azari14:Computing} w.r.t. both statistical efficiency and computational efficiency. 

{\bf The Plackett-Luce Model.} We use a two-step ($T = 2$ in Algorithm~\ref{alg:rbcml}) RBCML, where the first step is exactly the algorithm by~\citet{Khetan16:Data} (denoted by K-O Breaking). In the second step, we still use the breaking by~\citet{Khetan16:Data} but propose a heuristic $\mw(\vec\theta)$. For any pair of alternatives $a_{i_1}$ and $a_{i_2}$, we let $w_{i_1i_2}=w_{i_2i_1}=\frac 1 {|\theta_{i_1}-\theta_{i_2}|+4}$. The intuition is that we should put a higher weight on the pair of alternatives that are closer to each other. Moreover, we use the output of the first step as the starting point of the second step optimization to improve computational efficiency. 

The results are shown in Figure~\ref{fig:pl}. We use 2-LSR to denote the two-iteration I-LSR algorithms by~\citet{Maystre15:Fast}. LSR (one-iteration I-LSR) results are not shown because of the high $n\times$MSE and runtime for large $n$. The ``CR bound" line is $n$ times the trace of Cram\'er-Rao bound \cite{Cramer46:Mathematical,Rao45:Information}, which is the lower bound of the covariance matrix of any unbiased estimator. Because Cram\'er-Rao bound decreases at the rate of $1/n$, the CR bound line is horizontal. Since RBCML is not necessarily unbiased, the Cram\'er-Rao bound is not a lower bound for RBCML.

We observe that on datasets with large numbers of rankings (``$\succ$" means ``is better than"):\\
$\bullet$ Statistical efficiency: 
2-LSR $\succ$ RBCML $\succ$ K-O Breaking.
$\bullet$ Runtime:
K-O Breaking $\succ$ RBCML $\succ$ 2-LSR.

{\bf Beyond the experiments.} We have only shown the RBCML with simple $\mg$ and $\mw$. Other configurations of $\mg$ and $\mw$ can potentially have better performances or achieve other tradeoffs. Exploring RBCMLs for Gaussian RUMs, the Plackett-Luce model, as well as other RUMs is an interesting direction for future work.

\section{Summary and Future Work} We propose a flexible rank-breaking-then-composite-marginal-likelihood (RBCML) framework for learning RUMs. We characterize conditions for the objective function to be strictly log-concave, and for RBCML to be consistent and asymptotically normal. Experiments show that RBCML for Gaussian RUMs improve both statistical efficiency and computational efficiency, and the proposed RBCML for the Plackett-Luce model is competitive against state-of-the-art algorithms in that it provides a tradeoff between statistical efficiency and computational efficiency. For future work we plan to find efficient ways to compute optimal choices of $\mg$ and $\mw$, and to extend the algorithm to partial orders.

\section*{Acknowledgments}
We thank all anonymous reviewers for helpful comments
and suggestions. This work is supported by NSF \#1453542
and ONR \#N00014-17-1-2621.

\bibliography{references}
\bibliographystyle{icml2018}
\onecolumn
\newpage
\section*{Appendix: Proofs}
\begin{lem}\label{lem:tailconc}
Let $f(x)$ be a continuously strictly log-concave differentiable probability density function with support $(-\infty, +\infty)$. $F(x)=\int^x_{-\infty}f(t)dt$ is strictly log-concave.
\end{lem}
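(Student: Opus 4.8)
The plan is to establish strict log-concavity of $F$ analytically, by showing that $(\ln F)''(x) < 0$ for every $x \in \mathbb{R}$; since this holds on all of $\mathbb{R}$, standard calculus upgrades it to the strict midpoint inequality in the definition of strict log-concavity. Because $f$ is $C^1$ and positive everywhere (its support is $(-\infty,+\infty)$), $F$ is $C^2$ and satisfies $0 < F(x) < 1$, so I can differentiate twice and write $(\ln F)''(x) = \frac{f'(x)F(x) - f(x)^2}{F(x)^2}$. Thus it suffices to prove the pointwise inequality $f'(x)F(x) < f(x)^2$ for all $x$.

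Next I would pass to $\phi := \ln f$, which is strictly concave (and $C^1$) by the strict log-concavity of $f$, so $\phi'$ is strictly decreasing. Using $f' = \phi' f$ and dividing the target inequality by $f(x) > 0$, the claim reduces to showing $\phi'(x)F(x) < f(x)$ for every $x$.

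I would then split on the sign of $\phi'(x)$. If $\phi'(x) \le 0$ the inequality is immediate, since $\phi'(x)F(x) \le 0 < f(x)$. If $\phi'(x) > 0$, I would invoke the tangent-line characterization of strict concavity: for $t \ne x$, $\phi(t) < \phi(x) + \phi'(x)(t - x)$, hence $f(t) < f(x)\,e^{\phi'(x)(t-x)}$ for all $t < x$. Integrating this strict bound over $(-\infty, x)$ gives $F(x) < f(x)\int_{-\infty}^x e^{\phi'(x)(t-x)}\,dt = \frac{f(x)}{\phi'(x)}$, where convergence of the integral is guaranteed precisely because $\phi'(x) > 0$. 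Rearranging yields $\phi'(x)F(x) < f(x)$, completing the pointwise inequality in this case as well.

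The computations here are routine; the only point requiring care is the case $\phi'(x) > 0$, where I must use the \emph{strict} tangent-line inequality (not just concavity) to obtain a strict bound on $F(x)$ and simultaneously get integrability of the dominating exponential for free. A pleasant feature of this route is that it never needs a separate argument that $f(t) \to 0$ as $t \to -\infty$: the exponential tangent bound both controls the left tail and delivers the strict inequality in one step.
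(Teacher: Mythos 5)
Your proposal is correct. It shares the paper's overall skeleton---both proofs reduce strict log-concavity of $F$ to the pointwise inequality $f'(x)F(x) < f(x)^2$, i.e.\ $\phi'(x)F(x) < f(x)$ with $\phi=\ln f$, via the sign of $(\ln F)''$---but the two arguments finish this key inequality differently. The paper (following Bagnoli--Bergstrom) integrates the bound $\phi'(x)f(t) < \phi'(t)f(t) = f'(t)$ over $t\in(-\infty,x)$, obtaining $\phi'(x)F(x) < f(x) - \lim_{t\to-\infty}f(t) = f(x)$; this is shorter but implicitly requires that $\lim_{t\to-\infty}f(t)$ exists and equals $0$ (which needs a small side argument: log-concavity forces $f$ to be eventually monotone in the left tail, and integrability then forces the limit to vanish). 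You instead split on the sign of $\phi'(x)$, dispose of $\phi'(x)\le 0$ trivially, and in the case $\phi'(x)>0$ integrate the strict tangent-line bound $f(t) < f(x)e^{\phi'(x)(t-x)}$ to get $F(x) < f(x)/\phi'(x)$. This costs you a case analysis and an appeal to the strict tangent-line inequality (which does follow from strict midpoint concavity plus differentiability, by the standard midpoint contradiction), but it buys exactly what you claim: the exponential majorant simultaneously certifies convergence of the integral and delivers the strict bound, so no statement about the behavior of $f$ at $-\infty$ is ever needed. Both routes are sound; yours is marginally more self-contained, the paper's marginally more direct.
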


\begin{proof} The proof is slightly modified from~\citep{Bagnoli2005:Log-Concave}. 
We will prove $\frac {\partial^2\ln F(x)} {\partial x^2}=\frac {d} {dx} (\frac {f(x)} {F(x)})=\frac {f'(x)F(x)-f(x)^2} {F(x)^2}<0$. Since $F(x)>0$, we only need to prove $
f'(x)F(x)-f(x)^2<0
$.

Because $f(x)$ is strictly log-concave, we have that $\frac {d\ln f(x)} {dx} = \frac {f'(x)} {f(x)}$ 
is decreasing for any $x\in\mathbb R$. So we have $\frac {f'(x)} {f(x)}F(x)=\frac {f'(x)} {f(x)}\int^x_{-\infty}f(t)dt< \int^x_{-\infty}\frac {f'(t)} {f(t)} f(t)dt= f(x)-\lim_{x\rightarrow-\infty}f(x)= f(x)$.

This proves the lemma. 
\end{proof}

\begin{lem}\label{lem:pdr}
For any alternatives $a_i, a_{i'}$ with distributions $\pi_i, \pi_{i'}>0$ defined on $(-\infty, +\infty)$, we define $L=\theta_i-\theta_{i'}$ and let $p_{ii'}(\vec\theta)$ denote the probability of $a_i\succ a_{i'}$ given $\pi_i$ and $\pi_{i'}$. For any $\epsilon>0$, there exists $L$ s.t. $|\frac {\partial p_{ii'}(\vec\theta)} {\partial \theta_{i}}|,|\frac {\partial p_{i'i}(\vec\theta)} {\partial \theta_{i}}|\le \epsilon$.
\end{lem}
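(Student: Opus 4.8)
The plan is to reduce both partial derivatives to a single quantity---the value of the density of the utility difference evaluated at one point---and then exploit the fact that any probability density must be small somewhere. First I would pass to centered variables. Writing $u_i = X_i + \theta_i$ and $u_{i'} = X_{i'} + \theta_{i'}$ with $X_i\sim\pi_i$ and $X_{i'}\sim\pi_{i'}$ (both centered at $0$), the event $a_i\succ a_{i'}$ is exactly $X_i - X_{i'} > -L$. Hence, letting $\phi = \pi_i * \pi_{i'}^-$ denote the density of $X_i - X_{i'}$, where $\pi_{i'}^-(x)=\pi_{i'}(-x)$ is the reflection of $\pi_{i'}$, we have $p_{ii'}(\vec\theta)=\int_{-L}^{\infty}\phi(t)\,dt$ and $p_{i'i}(\vec\theta)=1-p_{ii'}(\vec\theta)=\int_{-\infty}^{-L}\phi(t)\,dt$, the latter using that ties occur with probability zero for continuous distributions.

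Next I would differentiate. Since $L=\theta_i-\theta_{i'}$, we have $\partial L/\partial\theta_i = 1$, so by the Leibniz rule $\frac{\partial p_{ii'}(\vec\theta)}{\partial\theta_i}=\phi(-L)$ and $\frac{\partial p_{i'i}(\vec\theta)}{\partial\theta_i}=-\phi(-L)$. Therefore both partial derivatives have the same absolute value, namely $\phi(-L)$, and it suffices to exhibit an $L$ for which $\phi(-L)\le\epsilon$. The differentiation under the integral sign is justified because $\phi$ is continuous, being the convolution of two continuous densities.

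The key step is to observe that $\phi$, as a probability density, is integrable with $\int_{\mathbb R}\phi = 1$, so its super-level sets cannot be large. Concretely, for any $\epsilon>0$ the set $A=\{t:\phi(t)>\epsilon\}$ satisfies $1=\int_{\mathbb R}\phi\ge\epsilon\cdot\mathrm{Leb}(A)$, whence $\mathrm{Leb}(A)\le 1/\epsilon<\infty$. Since $\mathbb R$ has infinite Lebesgue measure, the complement of $A$ is nonempty, so there exists $s_0$ with $\phi(s_0)\le\epsilon$. Setting $L=-s_0$ gives $\phi(-L)=\phi(s_0)\le\epsilon$, which is exactly the desired bound on both $|\partial p_{ii'}/\partial\theta_i|$ and $|\partial p_{i'i}/\partial\theta_i|$.

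I do not expect a serious obstacle; the only points requiring care are the reduction of both derivatives to the single quantity $\phi(-L)$ and the justification that a density is small somewhere. The latter could alternatively be phrased via $\liminf_{|t|\to\infty}\phi(t)=0$, which matches the intuition that pushing the means far apart ($L\to\pm\infty$) drives both the probability of reversal and its sensitivity to $\theta_i$ toward zero; the finite-measure argument above is slightly cleaner since it avoids any explicit appeal to tail behavior and uses only integrability of $\phi$.
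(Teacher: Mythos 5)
Your proposal is correct and is essentially the paper's own argument: both reduce the two partial derivatives to a single nonnegative quantity (the density of the utility difference evaluated at $-L$, which the paper handles implicitly via $p_{ii'}+p_{i'i}=1$ and monotonicity of $p_{ii'}$ in $L$), and both then use that this quantity integrates to at most $1$ over $L\in\mathbb R$ to conclude it must fall below $\epsilon$ somewhere. Your Markov-inequality bound on the super-level set $\{\phi>\epsilon\}$ is the same estimate as the paper's pigeonhole over an interval of length $1/\epsilon$, just phrased measure-theoretically.
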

\begin{proof}
Because $p_{ii'}(\vec\theta)+p_{i'i}(\vec\theta)=1$, for any $1\leq l\leq m$, we have
\begin{equation}\label{drzero}
\frac {\partial p_{ii'}(\vec\theta)} {\partial \theta_{l}}+\frac {\partial p_{i'i}(\vec\theta)} {\partial \theta_{l}}=0
\end{equation}

So we have $|\frac {\partial p_{ii'}(\vec\theta)} {\partial \theta_{i}}|=|\frac {\partial p_{i'i}(\vec\theta)} {\partial \theta_{i}}|$. We only need to prove $|\frac {\partial p_{ii'}} {\partial \theta_{i}}|\le\epsilon$. 

Let $\theta_{i'}=0$ and $\theta_i=L$. This is without loss of generality because $p_{ii'}(\vec\theta)$ remains the same under parameter shifts. Let $u_{i}$ and $u_{i'}$ denote the sampled utilities. We have
\begin{align*}
p_{ii'}(\vec\theta) = p_{ii'}(L)=\Pr(u_{i}>u_{i'}|\vec\theta) = \int^\infty_{-\infty}\pi_{i'}(x')\int^\infty_{x'} \pi_{i}(x-L)dxdx'=\int^\infty_{-\infty}\pi_{i'}(x')\int^\infty_{x'-L} \pi_{i}(x)dxdx'
\end{align*}
When $L$ increases, $\int^\infty_{x'-L} \pi_{i}(x)dx$ increases given any $x'$. So we have $\frac {\partial p_{ii'}(\vec\theta)} {\partial \theta_i}=\frac {d p_{ii'}(L)} {d L}\frac {\partial L} {\partial \theta_i}=\frac {d p_{ii'}(L)} {d L}>0$.
On the other hand, because $0\le p_{ii'}(L)\le 1$ we have $\int^{+\infty}_{-\infty}\frac {d p_{ii'}(L)} {d L}dL=p_{ii'}(L)|_{+\infty}-p_{ii'}(L)|_{-\infty}\leq 1$.

Therefore, for any $\epsilon$, any interval $I$ whose length is  $1/\epsilon$, we claim there exists an $L$ s.t. $\frac {\partial p_{ii'}} {\partial \theta_{i}}\le\epsilon$. The reason is as follows. Suppose for all $L\in I$, $\frac {\partial p_{ii'}} {\partial \theta_{i}}>\epsilon$ holds. Then we have
$\int^{+\infty}_{-\infty}\frac {d p_{ii'}(L)} {d L}dL > \int_I\frac {d p_{ii'}(L)} {d L}dL > \int_I\epsilon dL = \epsilon\times \frac 1 \epsilon= 1$, 
which is a contradiction.
\end{proof}

\begin{lem}\label{lem:infty}
For any alternatives $a_i, a_{i'}$ with distributions $\pi_i, \pi_{i'}>0$ defined on $(-\infty, +\infty)$. Define $L=\theta_i-\theta_{i'}$. For any $\epsilon>0$, there exists $L$ s.t.
\begin{equation*}
|\frac {\bar\kappa_{ii'}w_{ii'}} {p_{ii'}(\vec\theta)}\frac {\partial p_{ii'}(\vec\theta)} {\partial\theta_i}+\frac {\bar\kappa_{i'i}w_{i'i}} {p_{i'i}(\vec\theta)}\frac {\partial p_{i'i}(\vec\theta)} {\partial\theta_i}|\le\epsilon
\end{equation*}
\end{lem}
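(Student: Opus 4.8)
The plan is to reduce the statement directly to Lemma~\ref{lem:pdr}. First I would use the identity \eqref{drzero}, which gives $\frac{\partial p_{ii'}(\vec\theta)}{\partial\theta_i} = -\frac{\partial p_{i'i}(\vec\theta)}{\partial\theta_i}$; writing $D := \frac{\partial p_{ii'}(\vec\theta)}{\partial\theta_i}$, the two partial derivatives appearing in the target expression have the common magnitude $|D|$. Applying the triangle inequality, the quantity to be bounded is at most
\[
|D|\left(\frac{\bar\kappa_{ii'}w_{ii'}}{p_{ii'}(\vec\theta)} + \frac{\bar\kappa_{i'i}w_{i'i}}{p_{i'i}(\vec\theta)}\right).
\]
So it suffices to show that the bracketed factor stays bounded (uniformly in $L$) while $|D|$ is driven to zero by a suitable choice of $L$.

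The key observation is that each ratio $\frac{\bar\kappa_{ii'}}{p_{ii'}(\vec\theta)}$ is bounded above by the largest edge weight $g_{\max}$ of $\mg$, independently of $\vec\theta$. This follows from the definition $\bar\kappa_{ii'} = E[\mg_{a_i\succ a_{i'}}(R)\mid\vec\theta]$: the random quantity $\mg_{a_i\succ a_{i'}}(R)$ is a nonnegative edge weight never exceeding $g_{\max}$, and it is nonzero only on the event $a_i\succ_R a_{i'}$, so $\bar\kappa_{ii'} \le g_{\max}\Pr(a_i\succ a_{i'}\mid\vec\theta) = g_{\max}\,p_{ii'}(\vec\theta)$, and likewise for the reversed pair. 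Hence the bracketed factor is at most $g_{\max}(w_{ii'}+w_{i'i})$, a finite constant not depending on $L$.

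Combining these, the target expression is at most $g_{\max}(w_{ii'}+w_{i'i})\,|D|$. Since $w_{ii'}$ and $w_{i'i}$ are fixed, I would invoke Lemma~\ref{lem:pdr} with $\epsilon' = \epsilon/\big(g_{\max}(w_{ii'}+w_{i'i})\big)$ to produce an $L$ for which $|D| = \big|\frac{\partial p_{ii'}(\vec\theta)}{\partial\theta_i}\big| \le \epsilon'$, which yields the desired bound $\le \epsilon$.

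I expect the only real obstacle to be the uniform bound on $\frac{\bar\kappa_{ii'}}{p_{ii'}(\vec\theta)}$: as $L\to\pm\infty$ one of $p_{ii'},p_{i'i}$ tends to $0$, so a priori the corresponding ratio could diverge and make the factor $1/p$ blow up faster than $|D|$ shrinks. The point that rescues the argument is that the $\bar\kappa$ in the numerator vanishes at exactly the same rate as the probability in the denominator, because $\bar\kappa_{ii'}$ is an expectation supported on the event $a_i\succ_R a_{i'}$; this is what keeps the ratio bounded by $g_{\max}$ uniformly and lets Lemma~\ref{lem:pdr} close the argument. Notably, no log-concavity is needed for this lemma: only the total-variation estimate already established in Lemma~\ref{lem:pdr} together with the elementary domination of $\bar\kappa$ by $p$.
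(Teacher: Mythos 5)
Your proposal is correct and follows essentially the same route as the paper's proof: both bound the expression via the triangle inequality, use the observation that $\bar\kappa_{ii'}/p_{ii'}(\vec\theta)$ is uniformly bounded by the maximum edge weight of $\mg$, and then invoke Lemma~\ref{lem:pdr} with a rescaled $\epsilon'$ to make the partial derivatives small. The only difference is cosmetic: you justify the uniform bound $\bar\kappa_{ii'}\le \max\{\mg\}\,p_{ii'}(\vec\theta)$ explicitly (the paper merely asserts it), and you use \eqref{drzero} to collapse the two derivatives into a single magnitude $|D|$, whereas the paper bounds the two terms separately.
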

\begin{proof}
Let $\max\{\mg\}$ denote the maximum weight on the edges of $\mg$. Since $\frac {\bar\kappa_{ii'}} {p_{ii'}}$ is upper bounded by $\max\{\mg\}$ and $w_{ii'}$ is finite, we let $M = \max\{|\frac {\bar\kappa_{ii'}w_{ii'}} {p_{ii'}(\vec\theta)}|,|\frac {\bar\kappa_{i'i}w_{i'i}} {p_{i'i}(\vec\theta)}|\}$ and $\epsilon'=\frac \epsilon {2M}$. By Lemma~\ref{lem:pdr} there exists $L$ s.t. $|\frac {\partial p_{ii'}(\vec\theta)} {\partial\theta_i}|, |\frac {\partial p_{i'i}(\vec\theta)} {\partial\theta_i}|\leq \epsilon'$. Then we have
$
|\frac {\bar\kappa_{ii'}w_{ii'}} {p_{ii'}(\vec\theta)}\frac {\partial p_{ii'}(\vec\theta)} {\partial\theta_i}+\frac {\bar\kappa_{i'i}w_{i'i}} {p_{i'i}(\vec\theta)}\frac {\partial p_{i'i}(\vec\theta)} {\partial\theta_i}|
 \le |\frac {\bar\kappa_{ii'}w_{ii'}} {p_{ii'}(\vec\theta)}\frac {\partial p_{ii'}(\vec\theta)} {\partial\theta_i}|+|\frac {\bar\kappa_{i'i}w_{i'i}} {p_{i'i}(\vec\theta)}\frac {\partial p_{i'i}(\vec\theta)} {\partial\theta_i}|
 \le \epsilon'\times 2M=\epsilon
$
\end{proof}

\begin{lem}\label{lem:equal}
For any pair of alternatives $a_i$ and $a_{i'}$ with equal weights $w_{ii'}=w_{i'i}$, if $\theta_i=\theta_{i'}$, then we have
\begin{equation*}
\frac {\bar\kappa_{ii'}w_{ii'}} {p_{ii'}(\vec\theta)}\frac {\partial p_{ii'}(\vec\theta)} {\partial\theta_i}+\frac {\bar\kappa_{i'i}w_{i'i}} {p_{i'i}(\vec\theta)}\frac {\partial p_{i'i}(\vec\theta)} {\partial\theta_i}=0
\end{equation*}
\end{lem}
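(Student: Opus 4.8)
The plan is to show that, under the two hypotheses $\theta_i=\theta_{i'}$ and $w_{ii'}=w_{i'i}$, the two summands are exactly negatives of one another. First I would record the derivative identity: since $p_{ii'}(\vec\theta)+p_{i'i}(\vec\theta)=1$ holds for every $\vec\theta$, differentiating in $\theta_i$ gives $\frac{\partial p_{ii'}(\vec\theta)}{\partial\theta_i}=-\frac{\partial p_{i'i}(\vec\theta)}{\partial\theta_i}$, which is precisely \eqref{drzero} from the proof of Lemma~\ref{lem:pdr}. Using this together with $w_{ii'}=w_{i'i}$, the claimed sum equals $w_{ii'}\,\frac{\partial p_{ii'}(\vec\theta)}{\partial\theta_i}\bigl(\frac{\bar\kappa_{ii'}}{p_{ii'}(\vec\theta)}-\frac{\bar\kappa_{i'i}}{p_{i'i}(\vec\theta)}\bigr)$, so it suffices to establish the coefficient identity $\frac{\bar\kappa_{ii'}}{p_{ii'}(\vec\theta)}=\frac{\bar\kappa_{i'i}}{p_{i'i}(\vec\theta)}$.

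The key observation is that when $\theta_i=\theta_{i'}$ (and $a_i,a_{i'}$ share the same utility shape, as in RUM$(\pi)$), the two alternatives are exchangeable: their latent utilities are drawn i.i.d.\ from the same distribution, so the transposition $\sigma\colon\ml(\ma)\to\ml(\ma)$ that swaps $a_i$ and $a_{i'}$ in a ranking is a probability-preserving bijection, $\Pr\nolimits_\mm(R|\vec\theta)=\Pr\nolimits_\mm(\sigma(R)|\vec\theta)$. I would use this in two ways. Since $\sigma$ carries the event $a_i\succ a_{i'}$ bijectively onto the event $a_{i'}\succ a_i$, it immediately yields $p_{ii'}(\vec\theta)=p_{i'i}(\vec\theta)=\tfrac12$, handling the denominators.

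For the numerators, I would write $\bar\kappa_{ii'}=E[\mg_{a_i\succ a_{i'}}(R)\mid\vec\theta]=\sum_{R:\,a_i\succ_R a_{i'}}\Pr\nolimits_\mm(R|\vec\theta)\,g_{\mathrm{pos}(a_i),\mathrm{pos}(a_{i'})}$ and pair each such $R$ with $\sigma(R)$. The decisive point is that $\mg_{a_i\succ a_{i'}}(R)$ is the weight of the edge between the two positions occupied by $a_i$ and $a_{i'}$, and $\sigma$ leaves that unordered pair of positions unchanged while flipping which alternative sits on top; hence $\sigma(R)$ has $a_{i'}\succ a_i$ at the very same two positions and contributes the identical weight $g_{\mathrm{pos}(a_i),\mathrm{pos}(a_{i'})}$ to $\bar\kappa_{i'i}$. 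Because probabilities match under $\sigma$, summing over this bijection gives $\bar\kappa_{ii'}=\bar\kappa_{i'i}$. Combining equal numerators and equal denominators yields the coefficient identity, so the two fractions are equal while the attached derivatives are opposite, and the sum vanishes.

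I expect the main obstacle to be making the $\bar\kappa_{ii'}=\bar\kappa_{i'i}$ step fully rigorous, namely verifying that $\sigma$ genuinely preserves $\Pr\nolimits_\mm(\cdot|\vec\theta)$ and that the breaking weight truly depends only on the unordered pair of positions. The former rests on $a_i$ and $a_{i'}$ having identical utility shapes together with $\theta_i=\theta_{i'}$, which is exactly the setting of RUM$(\pi)$ in which this lemma is applied; the latter is just the definition of $\mg_{a_i\succ a_{i'}}(R)$, but it must be stated carefully since $\mg_{a_i\succ a_{i'}}(R)$ is nonzero only when $a_i\succ_R a_{i'}$, which is precisely why $\sigma$ converts contributions to $\bar\kappa_{ii'}$ into contributions to $\bar\kappa_{i'i}$.
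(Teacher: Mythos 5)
Your proof is correct and follows essentially the same route as the paper's: both reduce the claim to the symmetry identities $p_{ii'}(\vec\theta)=p_{i'i}(\vec\theta)$ and $\bar\kappa_{ii'}=\bar\kappa_{i'i}$ combined with the derivative identity $\frac{\partial p_{ii'}(\vec\theta)}{\partial\theta_i}+\frac{\partial p_{i'i}(\vec\theta)}{\partial\theta_i}=0$ of \eqref{drzero}. The only difference is that the paper merely asserts these two symmetry identities, whereas you prove them via the exchangeability transposition swapping $a_i$ and $a_{i'}$, and you correctly flag that this step needs the two alternatives to share the same utility shape (as in RUM$(\pi)$) --- an assumption the lemma statement leaves implicit.
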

\begin{proof}
Since $\theta_i=\theta_{i'}$, we have $p_{ii'}(\vec\theta)=p_{i'i}(\vec\theta)$ and $\bar\kappa_{ii'}=\bar\kappa_{i'i}$, the lemma follows from \eqref{drzero}.
\end{proof}

\begin{lem}\label{lem:flip}
Let $\mg^*$ be the graph obtained by labeling the vertices of $\mg$ reversely, $\mm^*$ be the model obtained by flipping all of the utility distributions of $\mm$ around their means, and $\mw^*$ be the weight vector where $w^*_{ii'}=w_{i'i}$. For any RUM $\mathcal M$, if $\rbcml{\mg}{\mw}$ is consistent for $\mathcal M$, then $\rbcml{\mg^\ast}{\mw^\ast}$ is consistent for $\mathcal M^\ast$. 
\end{lem}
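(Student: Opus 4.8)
The plan is to exploit a symmetry between the two RBCML instances given by reversing rankings together with negating parameters, and then to invoke Theorem~\ref{thm:asymptotic} so that consistency becomes a pure uniqueness-of-stationary-point statement about the expected composite log-likelihood. Concretely, I fix an arbitrary ground truth $\vec\theta_0^\ast$ for the starred instance and write $\exll(\vec\theta)$ for the expected composite log-likelihood of $\rbcml{\mg}{\mw}$ under $\mm$ \emph{when its ground truth is} $-\vec\theta_0^\ast$, and $\exll^\ast(\vec\theta)$ for that of $\rbcml{\mg^\ast}{\mw^\ast}$ under $\mm^\ast$ \emph{when its ground truth is} $\vec\theta_0^\ast$. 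The goal is to prove the single clean identity $\exll^\ast(\vec\theta)=\exll(-\vec\theta)$ and transfer uniqueness across it.

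First I would record how the three flips interact. Writing utilities in $\mm$ as $u_i=\theta_i+\epsilon_i$ with $\epsilon_i\sim\pi_i$, the flipped model $\mm^\ast$ has utilities $v_i=\theta_i-\epsilon_i$; sorting the $v_i$ in descending order equals sorting the $-v_i=-\theta_i+\epsilon_i$ in descending order, which is exactly a draw from $\mm$ with parameter $-\vec\theta$ read bottom-to-top. Hence $\Pr_{\mm^\ast}(R\mid\vec\theta)=\Pr_{\mm}(\mathrm{rev}(R)\mid-\vec\theta)$, where $\mathrm{rev}(R)$ reverses $R$. Summing over the appropriate rankings yields the pairwise identity $p^{\mm^\ast}_{i_1i_2}(\vec\theta)=p^{\mm}_{i_2i_1}(-\vec\theta)$, which can also be verified directly since both equal $\Pr(\epsilon_{i_2}-\epsilon_{i_1}>\theta_{i_2}-\theta_{i_1})$.

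Second I would transfer this to the breaking statistics. Reversal sends the alternatives at positions $i,i'$ to positions $m+1-i,m+1-i'$, so since $g^\ast_{ii'}=g_{(m+1-i)(m+1-i')}$ and $\mg$ is undirected, applying $\mg^\ast$ to $R$ assigns the pair $\{a_{i_1},a_{i_2}\}$ the same weight as applying $\mg$ to $\mathrm{rev}(R)$; but the order of the comparison flips, giving $\kappa^\ast_{i_1i_2}(P)=\kappa_{i_2i_1}(\mathrm{rev}(P))$, and after taking expectations through the ranking bijection, $\bar\kappa^\ast_{i_1i_2}(\mm^\ast,\vec\theta_0^\ast)=\bar\kappa_{i_2i_1}(\mm,-\vec\theta_0^\ast)$. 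Substituting these together with $w^\ast_{i_1i_2}=w_{i_2i_1}$ into the definition of the expected composite log-likelihood and then re-indexing the sum by swapping $i_1\leftrightarrow i_2$, I expect to obtain
\begin{equation*}
\exll^\ast(\vec\theta)=\exll(-\vec\theta).
\end{equation*}

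Finally I would conclude via Theorem~\ref{thm:asymptotic}: consistency of $\rbcml{\mg}{\mw}$ for $\mm$ means that, for every ground truth, the true parameter is the unique solution of $\nabla\exll=\vec 0$. Differentiating the identity gives $\nabla\exll^\ast(\vec\theta)=-(\nabla\exll)(-\vec\theta)$, so $\vec\theta$ is stationary for $\exll^\ast$ iff $-\vec\theta$ is stationary for $\exll$; by the assumed consistency of $\mm$ at ground truth $-\vec\theta_0^\ast$ this forces $-\vec\theta=-\vec\theta_0^\ast$, i.e.\ $\vec\theta=\vec\theta_0^\ast$ is the unique stationary point, which by Theorem~\ref{thm:asymptotic} is consistency of $\rbcml{\mg^\ast}{\mw^\ast}$ for $\mm^\ast$ at the arbitrary $\vec\theta_0^\ast$; one checks the convention $\theta_m=0$ survives negation since $-0=0$. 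The main obstacle is the bookkeeping in the second step: three transformations act at once---position reversal in $\mg^\ast$, the distribution flip in $\mm^\ast$, and parameter negation---and one must keep the \emph{directed} comparison index of $\kappa_{i_1i_2}$ aligned correctly with the \emph{undirected} edge weights $g$ while confirming the expectations match under the reversal bijection.
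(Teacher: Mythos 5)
Your proof is correct, and at the top level it follows the same strategy as the paper's own proof: reduce consistency to uniqueness of the stationary point of the expected composite log-likelihood via Theorem~\ref{thm:asymptotic}, then transfer that property across the reversal/flip symmetry. But your execution differs in two substantive ways, both to your advantage. First, you track the parameter negation: your identity $\Pr_{\mm^\ast}(R\mid\vec\theta)=\Pr_{\mm}(\mathrm{rev}(R)\mid-\vec\theta)$ is the correct one, whereas the paper asserts $\Pr_{\mm^\ast}(R\mid\vec\theta)=\Pr_{\mm}(\mathrm{rev}(R)\mid\vec\theta)$ with no sign change, which is false whenever the $\theta_i$ are not all equal (flipping $u_i=\theta_i+\epsilon_i$ around its mean gives $\theta_i-\epsilon_i$, and sorting these in descending order is sorting $\epsilon_i-\theta_i$ in ascending order, i.e.\ a draw from $\mm$ at parameter $-\vec\theta$ read bottom-to-top; a two-alternative Gaussian example with $\theta_1=1,\theta_2=0$ already refutes the unnegated version). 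Consequently the paper's substitutions $\bar\kappa^\ast_{ii'}=\bar\kappa_{i'i}$ and $p^\ast_{ii'}(\vec\theta)=p_{i'i}(\vec\theta)$ only hold after negating both the argument and the ground truth, which is exactly the bookkeeping your identity $\exll^\ast(\vec\theta)=\exll(-\vec\theta)$ makes explicit; note the two compensating sign flips, one from $p^\ast_{ii'}(\vec\theta)=p_{i'i}(-\vec\theta)$ and one from the chain rule in $\nabla\exll^\ast(\vec\theta)=-(\nabla\exll)(-\vec\theta)$, are what make the stationary points correspond. Second, you obtain uniqueness of the starred stationary point by transferring the uniqueness assumed on the unstarred side (the ``only if'' direction of Theorem~\ref{thm:asymptotic}), whereas the paper invokes strict concavity of the composite log-likelihood; since the lemma is stated for \emph{any} RUM and strict concavity is only guaranteed under the extra hypotheses of Theorem~\ref{thm:logconcaverum}, your route matches the stated generality of the lemma more faithfully. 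In short, your proposal is not merely correct: it repairs the two loose points in the paper's own argument.
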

\begin{proof}
By Theorem~\ref{thm:asymptotic}, we only need to prove the solution to $\rbcml{\mg}{\mw}$, which is the ground truth, is the only solution to $\rbcml{\mg^\ast}{\mw^*}$. Due to strict concavity, $\rbcml{\mg^\ast}{\mw^*}$ does not have multiple solutions. So we only need to prove the solution to $\rbcml{\mg}{\mw}$ is the solution to $\rbcml{\mg^\ast}{\mw^*}$.

For any $i\in\{1, \ldots, m\}$ and any $\vec\theta$, \eqref{eqfirstorder} holds. Since $\mathcal M^\ast$ is flipped $\mathcal M$, for any ranking $R$, we have $\Pr_{\mathcal M^\ast}(R|\vec\theta)=\Pr_{\mathcal M}(rev(R)|\vec\theta)$, where $rev(R)$ is the reverse of $R$. Therefore, for any pair of alternatives $a$ and $a'$, $a\succ a'\in \mg^*(R)$ if and only if $a'\succ a\in \mg(rev(R))$.

Then for any $i\in\{1, \ldots, m\}$, we have
$$
\nabla_i\text{ELL}_{\mm^*}(\vec\theta) = \sum_{i'\neq i}(\frac {\bar\kappa_{ii'}w^*_{ii'}} {p^\ast_{ii'}(\vec\theta)}\frac {\partial p^\ast_{ii'}(\vec\theta)} {\partial\theta_i}+\frac {\bar\kappa_{i'i}w^*_{i'i}} {p^\ast_{i'i}(\vec\theta)}\frac {\partial p^\ast_{i'i}(\vec\theta)} {\partial\theta_i})
= \sum_{i'\neq i}(\frac {\bar\kappa_{i'i}w_{i'i}} {p_{i'i}(\vec\theta)}\frac {\partial p_{i'i}(\vec\theta)} {\partial(\theta_i)}+\frac {\bar\kappa_{ii'}w_{ii'}} {p_{ii'}(\vec\theta)}\frac {\partial p_{ii'}(\vec\theta)} {\partial(\theta_i)})=0.
$$
This finishes the proof of the lemma.
\end{proof}

\begin{lem}\label{lem:subgraph}
Let $\mg_{[k_1, k_2]}$ denote the subgraph $\mg$ restricted to nodes between $k_1$ and $k_2$ (inclusive). For any RUM $\mathcal M$, if $\rbcml{\mg}{\mw_\text{u}}$ is consistent, then for any $1\leq k_1<k_2\leq m$, $\rbcml{\mg_{[k_1, k_2]}}{\mw_\text{u}}$ is either empty or consistent for $k_2-k_1+1$ alternatives.
\end{lem}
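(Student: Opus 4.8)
The plan is to reduce the claim to the first-order characterization of consistency in Theorem~\ref{thm:asymptotic}, and then transfer that characterization from the full $m$-alternative problem to the smaller problem by pushing the ``outside'' alternatives to $\pm\infty$. Relabel positions $k_1,\ldots,k_2$ as $1,\ldots,k_2-k_1+1$, so that $\mg_{[k_1,k_2]}$ becomes a breaking over a sub-problem, and let $\mm'$ be the RUM on these $k_2-k_1+1$ alternatives carrying the same utility-distribution shapes. If $\mg_{[k_1,k_2]}$ is empty there is nothing to prove, so assume it has at least one edge. Since every ranking has positive probability (the PDFs are positive everywhere), a single edge of $\mg_{[k_1,k_2]}$ already forces $\bar\kappa_{ii'}>0$ for every pair of sub-alternatives, so the expected breaking graph of $\mm'$ is complete, hence connected; by the same strict-log-concavity argument used in Theorems~\ref{thm:asymptotic} and~\ref{thm:logconcaverum}, the expected composite log-likelihood of $\mm'$ is strictly concave and has a unique stationary point. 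By Theorem~\ref{thm:asymptotic} it therefore suffices to show that for every sub-ground-truth $\vec\theta_0'$ the gradient of the sub-problem's expected composite log-likelihood vanishes at $\vec\theta_0'$, since that forces $\vec\theta_0'$ to be this unique stationary point.

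Fix such a $\vec\theta_0'$. I would embed it into the full problem via the ground truth $\vec\theta_0^{L}$ that assigns $\vec\theta_0'$ to the $k_2-k_1+1$ sub-alternatives, a common large value $+L$ to $k_1-1$ of the remaining alternatives, and $-L$ to the other $m-k_2$. As $L\to\infty$, the event that the high alternatives occupy exactly positions $1,\ldots,k_1-1$ and the low alternatives exactly positions $k_2+1,\ldots,m$ has probability tending to $1$; on this event the sub-alternatives fall into positions $k_1,\ldots,k_2$ and the edge assigned to any sub-pair under $\mg$ is exactly the edge assigned under $\mg_{[k_1,k_2]}$. Consequently $\bar\kappa_{ii'}$ for a sub-pair, computed in the full problem at $\vec\theta_0^{L}$, converges to the corresponding $\bar\kappa'_{ii'}$ of $\mm'$ at $\vec\theta_0'$ (the complementary event contributes at most $\max\{\mg\}\cdot o(1)$). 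Moreover the pairwise quantities $p_{ii'}$ and $\partial p_{ii'}/\partial\theta_i$ for a sub-pair are literally equal in the two problems, since a pairwise comparison probability depends only on the two utility distributions and their means.

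The key step is then to show that for each sub-alternative $a_i$ the $i$-th coordinate of the full gradient $\nabla\exll$, evaluated at $\vec\theta_0^{L}$, converges as $L\to\infty$ to the $i$-th coordinate of the sub-problem gradient at $\vec\theta_0'$. Splitting the sum over $i'\ne i$ into sub-indices and ``outside'' indices, the sub-terms converge to the sub-problem terms by the previous paragraph, while each outside term is bounded by $2\max\{\mg\}\,|\partial p_{ii'}/\partial\theta_i|$ (using \eqref{drzero} together with $\bar\kappa_{ii'}/p_{ii'}\le\max\{\mg\}$ and $\mw_\text{u}$), which tends to $0$ because the relevant mean gap $|\theta_i-\theta_{i'}|=L(1+o(1))\to\infty$. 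Now full consistency and Theorem~\ref{thm:asymptotic} say $\vec\theta_0^{L}$ is the unique stationary point of $\exll$, so every sub-coordinate of $\nabla\exll(\vec\theta_0^{L})$ equals $0$ for every $L$. Letting $L\to\infty$ in the limit just established shows the sub-problem gradient vanishes at $\vec\theta_0'$, which is exactly what we needed; hence $\rbcml{\mg_{[k_1,k_2]}}{\mw_\text{u}}$ is consistent for $\mm'$.

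The main obstacle is the control of the outside terms: Lemmas~\ref{lem:pdr} and~\ref{lem:infty} are stated only as ``there exists $L$'' and do not by themselves give $\partial p_{ii'}/\partial\theta_i\to0$ along the entire sequence $L\to\infty$. I would close this either by observing that $\partial p_{ii'}/\partial\theta_i$ is the density of the centered utility difference evaluated at $-L$, which vanishes at $\pm\infty$ for the continuous densities in question, or---to avoid any extra assumption---by running the argument along a subsequence $L_k\to\infty$: by the integrability bound $\int |\partial p_{ii'}/\partial\theta_i|\,dL\le 1$ from the proof of Lemma~\ref{lem:pdr}, each outside term exceeds a given threshold only on a set of finite measure, so a union over the finitely many outside pairs still leaves arbitrarily large $L_k$ on which all outside terms are simultaneously tiny, and $\nabla\exll(\vec\theta_0^{L_k})=0$ holds for each $k$. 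A secondary, routine point is interchanging the limit with the finite sum in the convergence $\bar\kappa_{ii'}\to\bar\kappa'_{ii'}$, which is justified because there are finitely many terms and all factors are uniformly bounded.
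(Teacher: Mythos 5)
Your proof is correct, and it uses the same key construction as the paper — embedding the sub-problem's ground truth into the full problem with the outside alternatives pushed to $\pm L$, matching the sub-pair terms, and killing the cross terms via the bound $\bar\kappa_{ii'}/p_{ii'}\le\max\{\mg\}$ together with smallness of $\partial p_{ii'}/\partial\theta_i$ — but you run the logic in the opposite direction, and that changes the technical burden. The paper proves the contrapositive: assuming $\rbcml{\mg_{[k_1,k_2]}}{\mw_\text{u}}$ is inconsistent, it fixes a sub-ground-truth at which some coordinate of $\nabla\text{ELL}_{\mm'}$ has absolute value $C>0$, and then needs only a \emph{single} sufficiently large $L$ — exactly what Lemma~\ref{lem:infty} provides — at which every cross term is below $C/m$, whence $|\nabla_i\exll(\vec\theta)|\ge \frac{(k_2-k_1+1)C}{m}>0$ and Theorem~\ref{thm:asymptotic} yields inconsistency of the full RBCML. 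Because you instead argue directly (full consistency $\Rightarrow$ the sub-gradient vanishes at every sub-ground-truth), you must send $L\to\infty$, which is why you need extra apparatus the paper never touches: the convergence $\bar\kappa_{ii'}\to\bar\kappa'_{ii'}$, the limit interchange, and the measure-theoretic subsequence device to make all outside terms simultaneously small. Your resolutions are sound — the integrability bound $\int \frac{dp_{ii'}(L)}{dL}\,dL\le 1$ from Lemma~\ref{lem:pdr} does confine each bad event to a set of $L$ of finite measure, and finitely many outside pairs leave arbitrarily large good $L_k$; note the paper itself silently needs a common $L$ for all cross pairs, so your care here is not wasted. Indeed, your explicit proof that $\bar\kappa_{ii'}$ (full problem at $\vec\theta_0^L$) converges to $\bar\kappa'_{ii'}$ (sub-problem) fills in a step the paper leaves implicit when it compares $\nabla_i\exll$ against $\nabla_i\text{ELL}_{\mm'}$ as if the sub-pair terms of the two gradients already agreed. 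The tradeoff is clear: the contrapositive formulation buys a shorter proof requiring only one witness $L$; your direct formulation is more self-contained in verifying the first-order condition of Theorem~\ref{thm:asymptotic}, at the cost of the subsequence argument. (Both proofs, yours and the paper's, lean equally on strict concavity of the expected composite log-likelihood to convert ``gradient vanishes at the ground truth'' into ``unique solution,'' which strictly speaking needs the log-concavity hypotheses of Theorem~\ref{thm:logconcaverum} rather than ``any RUM''; that looseness is inherited from the paper, not introduced by you.)
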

\begin{proof}
We prove that if $\rbcml{\mg_{[k_1, k_2]}}{\mw_\text{u}}$ is not consistent then $\rbcml{\mg}{\mw_\text{u}}$ is not consistent. Suppose $\rbcml{\mg_{[k_1,k_2]}}{\mw_\text{u}}$ is not consistent. For convenience we keep the index of $\mg$ in $\mg_{[k_1, k_2]}$ and let $\mathcal M'$ denote the model with the $k_2-k_1+1$ alternatives. Then there exists $\theta_i$ where $k_1\leq i\leq k_2$ s.t.
$$
|\nabla_i\text{ELL}_{\mm'}(\vec\theta)| = |\sum_{k_1\leq i'\leq k_2, i'\neq i}(\frac {\bar\kappa_{ii'}w_{ii'}} {p_{ii'}(\vec\theta)}\frac {\partial p_{ii'}(\vec\theta)} {\partial\theta_i}+\frac {\bar\kappa_{i'i}w_{i'i}} {p_{i'i}(\vec\theta)}\frac {\partial p_{i'i}(\vec\theta)} {\partial\theta_i})|
=C>0
$$

We now construct other elements in $\vec\theta$ to show that $\rbcml{\mg}{\mw_{\text{u}}}$ is not consistent. We let $\theta_1=\ldots =\theta_{k_1-1}=L$ and $\theta_{k_2}+1=\ldots=\theta_r = -L$. Then when $L\ra\infty$, with probability that goes to 1, $a_1, \ldots, a_{k_1-1}$ are ranked in the top $k_1-1$ positions and $a_{k_2+1},\ldots,a_m$ are ranked in the bottom $m-k_2$ positions. 

By Lemma~\ref{lem:infty} for any $k_1\le i\le k_2$ and $i'<k_1$ (or  $i'>k_2$) there exists $L$ s.t.
$
|\frac {\bar\kappa_{ii'}w_{ii'}} {p_{ii'}(\vec\theta)}\frac {\partial p_{ii'}(\vec\theta)} {\partial\theta_i}+\frac {\bar\kappa_{i'i}w_{i'i}} {p_{i'i}(\vec\theta)}\frac {\partial p_{i'i}(\vec\theta)} {\partial\theta_i})|\le\frac C m
$. Then we have
$
|\nabla_i\exll(\vec\theta)|
\ge |\nabla_i\text{ELL}_{\mm'}(\vec\theta)|-(m-(k_2-k_1+1))\frac {C} m=\frac {(k_2-k_1+1)C} m >0
$. So we have $
\nabla_i\exll(\vec\theta)\neq 0$. $\rbcml{\mg}{\mw_\text{u}}$ is thus not consistent. 
\end{proof}

\begin{lem}\label{pl1m}
For any $m\geq 3$, $\rbcml{\mg}{\mw_{\text u}}$ for the Plackett-Luce model is not consistent if $\mg=\{g_{1m}=C\}$, where $C>0$ is a constant.
\end{lem}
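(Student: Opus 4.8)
The plan is to exploit the closed-form first-order conditions for the Plackett-Luce model together with Theorem~\ref{thm:asymptotic}, which tells us that $\rbcml{\mg}{\mw_{\text u}}$ is consistent if and only if the ground truth $\vec\theta_0$ is the unique solution to $\nabla\exll(\vec\theta)=\vec 0$. For the single-edge breaking $\mg=\{g_{1m}=C\}$, only comparisons between positions $1$ and $m$ are generated, so $\kappa_{ii'}$ (and hence $\bar\kappa_{ii'}$) is nonzero only for the pair of alternatives occupying the top and bottom positions. My strategy is to exhibit a specific ground-truth parameter $\vec\theta_0$ at which the expected gradient $\nabla\exll(\vec\theta_0)$ fails to vanish, thereby showing that the true parameter does not satisfy \eqref{eqfirstorder} and so consistency fails.

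First I would write out $\bar\kappa_{1m}$ and $\bar\kappa_{m1}$ explicitly. Because only the top alternative versus the bottom alternative contributes weight $C$, $\bar\kappa_{i_1i_2}$ equals $C$ times the probability (under $\vec\theta_0$) that $a_{i_1}$ is ranked first and $a_{i_2}$ is ranked last, and is $0$ for all pairs not of this first-versus-last form. For the Plackett-Luce model these top-and-bottom probabilities have a clean product form via the choice-probability structure $\Pr_\text{PL}(R|\vec\theta)=\prod_t e^{\theta_{i_t}}/\sum_{l\ge t}e^{\theta_{i_l}}$. Using the closed-form gradient
$\frac{\partial\cllpl}{\partial\theta_i}=\sum_{i'\neq i}\bigl(\bar\kappa_{ii'}-(\bar\kappa_{ii'}+\bar\kappa_{i'i})\frac{e^{\theta_i}}{e^{\theta_i}+e^{\theta_{i'}}}\bigr)$,
I would substitute a convenient ground truth, for instance taking the $\theta_i$ to be pairwise distinct (say uniformly spaced), and compute $\nabla_i\exll(\vec\theta_0)$. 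The key asymmetry to surface is that the single-edge breaking only records the extreme pair, so the effective ``comparison mass'' $\bar\kappa_{ii'}+\bar\kappa_{i'i}$ assigned to the pair $(a_i,a_{i'})$ is not proportional to the correct quantity that would make the true parameter stationary; concretely I expect to show $\nabla_i\exll(\vec\theta_0)\neq 0$ for some $i$.

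Because the statement is an existence claim (inconsistency), it suffices to produce one ground truth witnessing the violation, and the cleanest route is a small symmetry argument. I would pick $\vec\theta_0$ with a symmetry that forces cancellation in the correct model but not under single-edge breaking, or, failing a purely symmetric example, directly evaluate the gradient at a three-distinct-value instance and verify nonvanishing of one component. Given $m\ge 3$ there is at least one ``middle'' alternative $a_j$ with $1<j<m$ that can be top or bottom only with lower probability than the extreme-valued alternatives, so its stationarity condition is the natural place to detect the defect; I would target $\nabla_j\exll(\vec\theta_0)$ for such an intermediate $j$, or the component for $a_1$ itself.

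The main obstacle I anticipate is computing the top-and-bottom probabilities $\Pr(a_{i_1}\text{ first},a_{i_2}\text{ last}\mid\vec\theta_0)$ and assembling them so that the nonvanishing of some $\nabla_i\exll$ is transparent rather than a brute-force expression. The summing over all intermediate orderings of the remaining $m-2$ alternatives makes $\bar\kappa_{1m}$ combinatorially involved for general $m$. I would manage this either by choosing $\vec\theta_0$ so that these probabilities simplify (e.g.\ sending one or two $\theta_i$ to extreme values so that the top and bottom alternatives are essentially determined, reducing the computation to a two- or three-alternative core), or by leveraging Lemma~\ref{lem:subgraph}-style reductions to push the analysis down to $m=3$, where $\mg=\{g_{13}=C\}$ is checkable by hand and the inconsistency is exhibited directly from the first-order condition. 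This small-case-plus-embedding approach is likely the most economical way to avoid grinding through the general combinatorial sum.
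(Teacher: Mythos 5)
Your overall skeleton is the same as the paper's: invoke Theorem~\ref{thm:asymptotic} and exhibit a ground truth $\vec\theta_0$ at which $\nabla\exll(\vec\theta_0)\neq\vec 0$, using the fact that under $\mg=\{g_{1m}=C\}$ the only pairs receiving weight are (top, bottom) pairs. The paper does exactly this, but the decisive device is not distinct parameter values; it is \emph{maximal exchangeability}: set $\theta_1=x$ and $\theta_2=\cdots=\theta_m=0$. Then (a) every ranking with $a_1$ on top (resp.\ at the bottom) has the same probability, so the sum over the $(m-2)!$ middle orderings collapses to a closed form, giving $\bar\kappa_{1i}=\frac 1 {m-1}\cdot\frac{e^x}{e^x+m-1}$ and $\bar\kappa_{i1}=(m-2)!\prod_{k=1}^{m-1}(e^x+k)^{-1}$; and (b) for any two middle alternatives $\bar\kappa_{ii'}=\bar\kappa_{i'i}$, so those terms cancel in $\nabla_i\exll$ and only the pair $\{a_i,a_1\}$ survives. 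The first-order condition for a middle alternative then reduces to the scalar identity $\bar\kappa_{1i}/\bar\kappa_{i1}=e^x$, i.e.\ $\prod_{k=1}^{m-2}(e^x+k)=(m-1)!$, which holds only at $x=0$; any $x\neq 0$ (the paper takes $x=\ln 2$) witnesses inconsistency uniformly in $m$. Your primary instantiation with pairwise-distinct, uniformly spaced $\theta_i$ forfeits both cancellations and would leave you facing exactly the combinatorial sums you were worried about.

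The genuine problem is your preferred shortcut: reducing to $m=3$ ``by leveraging Lemma~\ref{lem:subgraph}-style reductions'' cannot work. The single edge $\{1,m\}$ touches both extreme positions, so restricting $\mg=\{g_{1m}=C\}$ to any proper window $[k_1,k_2]$ of positions deletes the edge and leaves the empty breaking---and Lemma~\ref{lem:subgraph} explicitly allows restrictions to be ``either empty or consistent,'' so no contradiction can be extracted from it. This is precisely why the paper must prove Lemma~\ref{pl1m} by direct computation for every $m\ge 3$: in the induction for Theorem~\ref{thm:pl}, the single-edge breaking is exactly the residual case that subgraph restriction cannot see. Your other fallback, pushing one or two parameters to extremes so that the top and bottom alternatives are ``essentially determined,'' is also delicate for this lemma: in that regime the relevant $\bar\kappa$'s and gradient contributions all tend to zero, so a limiting argument proves nothing and you would need exact rates to certify nonvanishing. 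The moderate, fully explicit choice ($x=\ln 2$, all other parameters equal) is what makes the argument go through; without committing to it (or something equivalent), your proposal stops short of a proof.
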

\begin{proof}
It suffices to prove $\rbcml{\mg}{\mw_{\text u}}$ for the Plackett-Luce model is not consistent if $\mg=\{g_{1m}=1\}$. We prove this lemma by constructing a counter-example. Let $\theta_1=x$ and $\theta_2=\ldots=\theta_m=0$. 
For any ranking $R_1$ with alternative $a_1$ at top, the probability is
$
\Pr(R_1|\vec\theta) = \frac 1 {(m-1)!}\frac {e^x} {e^x+(m-1)}
$. 
For any ranking $R_2$ with $a_1$ at bottom, the probability is
$
\Pr(R_2|\vec\theta) = \frac {1} {\prod^{m-1}_{k=1}(e^x+k)}
$.
For any $a_i$ where $2\leq i\leq m$, we have $\bar\kappa_{1i}=(m-1)!\Pr(R_1|\vec\theta)$ and $\bar\kappa_{i1}=(m-1)!\Pr(R_2|\vec\theta)$. Therefore, we have
$
\nabla_i\text{ELL}_{\text{PL}}(\vec\theta)=\sum_{i'\neq i}(\bar\kappa_{ii'}-(\bar\kappa_{ii'}+\bar\kappa_{i'i})\frac {1} {e^x+1})=(m-1)(\frac {e^{2x}} {e^x+m-1}-\frac {(m-1)!} {\prod^{m-1}_{k=1}(e^x+k)})
$. 
Let $x = \ln 2$, then we have $\nabla_i\text{ELL}_{\text{PL}}(\vec\theta)=\frac {4m-2} {m(m+1)}\neq 0$. This proves the lemma.
\end{proof}

\begin{lem}\label{lem:rum1m}
For any $m\geq 3$, $\rbcml{\mg}{\mw_\text{u}}$ for any RUM location family with the same symmetric pdf is not consistent if $\mg=\{g_{1m}=C\}$ where $C>0$ is a constant. 
\end{lem}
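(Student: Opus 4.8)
The plan is to exhibit a single ground-truth parameter at which $\nabla\exll\neq\vec 0$; by the ``only if'' reasoning behind Theorem~\ref{thm:asymptotic} (the estimator converges to a zero of $\nabla\exll$, since $\nabla\cll(\vec\theta^*)=0$ and $\nabla\cll\to\nabla\exll$ by the law of large numbers, so a non-critical point cannot be the limit) this forces inconsistency. First I would reduce to $C=1$, because rescaling $\mg$ by $C$ merely rescales every $\bar\kappa_{i_1i_2}$, hence $\nabla\exll$, by $C$. I then take $\vec\theta=(x,0,\ldots,0)$ (recall $\theta_m=0$) and look at $\nabla_i\exll(\vec\theta)$ for a fixed $i$ with $2\le i\le m-1$. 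In the sum over $i'\neq i$, every index $i'\notin\{1,i\}$ has $\theta_{i'}=\theta_i=0$ and equal weights, so Lemma~\ref{lem:equal} annihilates its contribution; only the $i'=1$ term survives. Using \eqref{drzero} to write $\partial p_{i1}/\partial\theta_i=-\partial p_{1i}/\partial\theta_i$, this term equals $\frac{\partial p_{1i}}{\partial\theta_i}\big(\frac{\bar\kappa_{1i}}{p_{1i}}-\frac{\bar\kappa_{i1}}{p_{i1}}\big)$, and $\partial p_{1i}/\partial\theta_i\neq 0$ (strictly negative, as in Lemma~\ref{lem:pdr}). Thus everything reduces to showing $\frac{\bar\kappa_{1i}}{p_{1i}}\neq\frac{\bar\kappa_{i1}}{p_{i1}}$ for some $x$.

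Next I would evaluate the two breaking expectations. Since $\mg$ counts exactly the top-versus-bottom comparison of each ranking, $\bar\kappa_{1i}=\Pr(a_1\text{ first},\,a_i\text{ last})$ and $\bar\kappa_{i1}=\Pr(a_i\text{ first},\,a_1\text{ last})$. Because $a_2,\ldots,a_m$ are exchangeable at this $\vec\theta$, conditioning on $a_1$ being first makes each of the other $m-1$ alternatives equally likely to be last (and symmetrically for $a_1$ last), whence $\bar\kappa_{1i}=\frac1{m-1}\Pr(a_1\text{ first})$ and $\bar\kappa_{i1}=\frac1{m-1}\Pr(a_1\text{ last})$. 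Writing $W=F(x+v_1)$ for the CDF $F$ of the common shape $\pi$ and $v_1\sim\pi$, one gets $\Pr(a_1\text{ first})=E[W^{m-1}]$, $\Pr(a_1\text{ last})=E[(1-W)^{m-1}]$, $p_{1i}=E[W]=:p$ and $p_{i1}=1-p$. The target inequality becomes $\Phi(x):=E[W^{m-1}](1-p)-E[(1-W)^{m-1}]\,p\neq0$.

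For the core nonvanishing I would first settle $m=3$, where $\Phi$ factors exactly: expanding $E[(1-W)^2]=1-2p+E[W^2]$ yields $\Phi=(1-2p)\big(E[W^2]-p\big)=-(1-2p)\,E[W(1-W)]$. Here $E[W(1-W)]>0$ because $W$ is non-degenerate on $(0,1)$ (as $\pi>0$), and $1-2p\neq0$ whenever $x\neq0$, since $p=\Pr(a_1\succ a_i)$ is strictly monotone in $x$ and equals $\tfrac12$ only at $x=0$. Hence $\Phi(x)\neq0$ for all $x\neq0$, with no extra hypotheses. For general $m$ I would take an independent copy $W'$ of $W$ and symmetrize to $\Phi=E[\tilde g(W,W')]$, where $\tilde g(a,b)=\tfrac12\big(a(1-b)(a^{m-2}-(1-b)^{m-2})+b(1-a)(b^{m-2}-(1-a)^{m-2})\big)$ has the same sign as $a+b-1$ and satisfies $\tilde g(1-a,1-b)=-\tilde g(a,b)$. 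Using this antisymmetry together with $f_{-x}(a)=f_x(1-a)$ for the density $f_x$ of $W$ (valid since $1-W_x$ and $W_{-x}$ share a distribution), I would rewrite $\Phi(x)=\tfrac12\iint\tilde g(a,b)\big[f_x(a)f_x(b)-f_{-x}(a)f_{-x}(b)\big]\,da\,db$ and argue that the bracket carries the sign of $a+b-1$, forcing $\Phi(x)>0$ for $x>0$.

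The main obstacle is exactly this last sign claim for $m\ge4$. It is equivalent to the product likelihood-ratio bound $\frac{f_x(a)}{f_x(1-a)}\cdot\frac{f_x(b)}{f_x(1-b)}\ge1$ on $\{a+b>1\}$, which after the substitution $q=F^{-1}(a)$ becomes $\Delta(q_a)+\Delta(q_b)\ge0$ whenever $q_a+q_b\ge0$, for $\Delta(q)=\ln\pi(q-x)-\ln\pi(q+x)$. Oddness of $\Delta$ is immediate from symmetry of $\pi$, so the implication follows once $\Delta$ is nondecreasing, i.e.\ once $(\ln\pi)'$ is nonincreasing. Thus the general-$m$ step leverages log-concavity of $\pi$ (covering the Gaussian and the shapes targeted in Theorem~\ref{thm:rum}), whereas the $m=3$ computation is unconditional. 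I would isolate the monotonicity of $\Delta$ as the one delicate point, since pushing the statement to fully general symmetric tails would need a separate estimate rather than this clean likelihood-ratio comparison.
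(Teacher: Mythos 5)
Your reduction is sound and coincides with the paper's: inconsistency follows once $\nabla\exll\neq\vec 0$ at a single ground truth (the ``only if'' direction of Theorem~\ref{thm:asymptotic}), Lemma~\ref{lem:equal} kills all pairs not involving $a_1$, and with $\mg=\{g_{1m}=1\}$ everything comes down to $\frac{\bar\kappa_{1i}}{p_{1i}}\neq\frac{\bar\kappa_{i1}}{p_{i1}}$. Your identities $\bar\kappa_{1i}=\frac{1}{m-1}E[W^{m-1}]$, $\bar\kappa_{i1}=\frac{1}{m-1}E[(1-W)^{m-1}]$, $p_{1i}=E[W]$ are correct, and your $m=3$ factorization $\Phi=-(1-2p)\,E[W(1-W)]$ is a clean, unconditional proof of that case, fully comparable to the paper's (which compares $p_{132}/p_{312}$ with $p_{231}/p_{213}$ at $\vec\theta=(L,0,0)$).

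The gap is exactly where you flagged it, and it is genuine rather than a matter of polish: the lemma is stated for \emph{any} common symmetric pdf (the paper's standing assumptions are only positivity and continuity), while your $m\ge 4$ step needs $(\ln\pi)'$ nonincreasing. Indeed, given that $\Delta(q)=\ln\pi(q-x)-\ln\pi(q+x)$ is odd, the implication ``$q_a+q_b\ge 0\Rightarrow\Delta(q_a)+\Delta(q_b)\ge 0$'' is \emph{equivalent} to $\Delta$ being nondecreasing (take $s\ge t$ and apply it to $q_a=s$, $q_b=-t$), hence equivalent to log-concavity of $\pi$ for every $x>0$. So for a non-log-concave symmetric shape (e.g.\ a symmetric bimodal mixture), the pointwise sign claim for the bracket $f_x(a)f_x(b)-f_{-x}(a)f_{-x}(b)$ fails on a set of positive measure, and nothing in your argument then controls the sign of the integral $\Phi(x)$; as written you have proved the lemma only for log-concave $\pi$ when $m\ge4$ (enough for the paper's downstream use in Theorem~\ref{thm:cmlrbconsistencyrum}, which assumes $(\ln\pi)'$ decreasing, but not the statement itself). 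The paper closes the general case by a different device: it takes $\theta_1=\theta_2=L$, $\theta_3=\cdots=\theta_m=0$, so the pair $(1,2)$ drops out by Lemma~\ref{lem:equal}, and then separates the two ratios \emph{asymptotically} rather than algebraically: as $L\ra\infty$, $\frac{\Pr(a_1\text{ top},\,a_m\text{ bottom})}{\Pr(a_1\succ a_m)}\ra\frac{1}{2(m-2)}$ while $\frac{\Pr(a_m\text{ top},\,a_1\text{ bottom})}{\Pr(a_m\succ a_1)}\ra 0$, since the numerator of the latter forces $u_m$ to exceed two utilities centered at $L$ whereas its denominator only requires $u_m>u_1$ (made precise via the partition $S_1\cup S_2$ of $\{u_1<u_m\}$). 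Replacing your sign argument for $m\ge4$ with this kind of limiting separation---either at the paper's parameter or at $(x,0,\ldots,0)$ with $x\ra\infty$---is what is needed to obtain the lemma in its stated generality.
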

\begin{proof}
Let $\pi$ denote the PDF of the utility distribution for all alternatives with mean $0$. That is, for any $i\le m$ and any $x\in \mathbb R$, we have $\pi_i(x)=\pi(x-\theta_i)$. Let $B>0$ be an arbitrary number so that $1-\epsilon>\int_{-B}^B\pi(x)dx>\epsilon$. Let $L$ be a large number that will be specified later. 

We first prove the lemma for $m=3$. Let $\theta_1=L$ and $\theta_2=\theta_3=0$. Since $\theta_2=\theta_3$, we have $\frac {\bar\kappa_{12}} {p_{12}(\vec\theta)}\frac {\partial p_{12}(\vec\theta)} {\partial\theta_1}+\frac {\bar\kappa_{21}} {p_{21}(\vec\theta)}\frac {\partial p_{21}(\vec\theta)} {\partial\theta_1}=\frac {\bar\kappa_{13}} {p_{13}(\vec\theta)}\frac {\partial p_{13}(\vec\theta)} {\partial\theta_1}+\frac {\bar\kappa_{31}} {p_{31}(\vec\theta)}\frac {\partial p_{31}(\vec\theta)} {\partial\theta_1}$. Due to \eqref{drzero}, it suffices to prove $
\frac {\bar\kappa_{12}} {p_{12}(\vec\theta)} \neq \frac {\bar\kappa_{21}} {p_{21}(\vec\theta)}
$, which is equivalent to
$
\frac{\Pr(a_1\text{ top and } a_2\text{ bottom})}{\Pr(a_1\succ a_2)}\ne \frac{\Pr(a_2\text{ top and } a_1\text{ bottom})}{\Pr(a_2\succ a_1)}
$. That is
$
\frac {p_{132}} {p_{312}+p_{132}+p_{123}}\neq \frac {p_{231}} {p_{321}+p_{231}+p_{213}}
$, 
where $p_{123}$ is the short form of $\Pr(a_1\succ a_2\succ a_3)$.  Because $p_{123}=p_{132}$ and $p_{231}=p_{321}$, we only need to prove
$
\frac {p_{132}} {p_{312}}\neq \frac {p_{231}} {p_{213}}
$. This is obvious because $p_{312}=p_{213}$ but $p_{132}\neq p_{231}$.

We now prove the lemma for any $m\ge 4$. Let $\theta_1=\theta_2=L$ and $\theta_3=\ldots=\theta_m=0$. By Lemma~\ref{lem:equal} we have $\frac {\bar\kappa_{12}} {p_{12}(\vec\theta)}\frac {\partial p_{12}(\vec\theta)} {\partial\theta_1}+\frac {\bar\kappa_{21}} {p_{21}(\vec\theta)}\frac {\partial p_{21}(\vec\theta)} {\partial\theta_1}=0$. For all $3\le i\le m$, we have $\frac {\bar\kappa_{1i}} {p_{1i}(\vec\theta)}\frac {\partial p_{1i}(\vec\theta)} {\partial\theta_1}+\frac {\bar\kappa_{i1}} {p_{i1}(\vec\theta)}\frac {\partial p_{i1}(\vec\theta)} {\partial\theta_1} = \frac {\bar\kappa_{1m}} {p_{1m}(\vec\theta)}\frac {\partial p_{1m}(\vec\theta)} {\partial\theta_1}+\frac {\bar\kappa_{m1}} {p_{m1}(\vec\theta)}\frac {\partial p_{m1}(\vec\theta)} {\partial\theta_1}$. 
So we have
$
\nabla_i\exll(\vec\theta) = (m-2)(\frac {\bar\kappa_{1m}} {p_{1m}(\vec\theta)}\frac {\partial p_{1m}(\vec\theta)} {\partial\theta_1}+\frac {\bar\kappa_{m1}} {p_{m1}(\vec\theta)}\frac {\partial p_{m1}(\vec\theta)} {\partial\theta_1})
$. It suffices to prove
$
\frac {\bar\kappa_{1m}} {p_{1m}(\vec\theta)} \neq \frac {\bar\kappa_{m1}} {p_{m1}(\vec\theta)}
$, which is
\begin{equation}\label{eq:ratioineqx}
\frac{\Pr(a_1\text{ top and } a_m\text{ bottom})}{\Pr(a_1\succ a_m)}\ne \frac{\Pr(a_m\text{ top and } a_1\text{ bottom})}{\Pr(a_m\succ a_1)}
\end{equation}
Because $L$ is large, $\Pr(a_1 \text{ top or } a_2 \text{ top})\approx 1$. Because $\pi_i$'s have the same shape, we have that 
\begin{align*}\Pr(a_1\text{ top and } a_m\text{ bottom})
\approx \Pr(a_1\succ a_2\text{ and }a_m \text{ is ranked lower than }a_3,\ldots, a_{m-1})
\end{align*}

Therefore, the LHS of (\ref{eq:ratioineqx}) is $\frac{1}{2(m-2)}$ as $L\ra\infty$. We will show that the RHS of  (\ref{eq:ratioineqx}) is converges to $0$  as $L\ra\infty$. We define a partition of $\{(u_1,u_m):u_1<u_m\}=S_1\cup S_2$ as follows.
\begin{itemize}
\item $S_1=\{(u_1,u_m):u_1<B\text{ and }u_m>L-B\}$,
\item $S_2=$ others.
\end{itemize}
We further define the following two functions $\pi$ and $\pi^*$ for $u_1<u_m$.
$$\pi(u_1,u_m)=\pi_1(u_1)\times \pi_m(u_m)$$
$$\pi^*(u_1,u_m)=\pi_1(u_m)\times \pi_m(u_m)\times\prod_{i=2}^{m-1} \int_{u_1}^{u_m}\pi_i(u_i) du_i$$

It follows that
\begin{align*}
\frac{\Pr(a_m\text{ top and } a_1\text{ bottom})}{\Pr(a_m\succ a_1)}
=\frac{\int_{S_1}\pi^*(u_1,u_m) +\int_{S_2}\pi^*(u_1,u_m)}{\int_{S_1}\pi(u_1,u_m) +\int_{S_2}\pi(u_1,u_m)}
\end{align*}
\begin{claim}\label{claim:ratios1s2}$\lim_{L\ra\infty}\dfrac{\int_{S_1}\pi(u_1,u_m)}{\int_{S_2}\pi(u_1,u_m)}=0$.
\end{claim}
\begin{proof}
Let $S=\{(u_1,u_m):u_1<B<u_m<L-B\}$.  We have $\dfrac{\int_{S_1}\pi(u_1,u_m)}{\int_{S}\pi(u_1,u_m)}=\dfrac{\int_{L-B}^\infty\pi_m(u_m) d u_m}{\int_{B}^{L-B}\pi_m(u_m) d u_m}$, which converges to $0$.  The claim follows after observing that $S\subseteq S_2$.
\end{proof}

\begin{claim}\label{claim:ratios2}$\lim_{\epsilon\ra0}\dfrac{\int_{S_2}\pi^*(u_1,u_m)}{\int_{S_2}\pi(u_1,u_m)}=0$.
\end{claim}
\begin{proof}
For any $(u_1,u_m)\in S_2$, either $u_1>B$ or $u_m<L-B$. If $u_1>B$, then 
\begin{align*}
\prod_{i=2}^{m-1}\int_{u_1}^{u_m}\pi_i(u_i) du_i\le \int_{u_1}^{u_m}\pi_{m-1}(u_{m-1}) du_{m-1}
\le \int_{B}^{\infty}\pi_{m-1}(u_{m-1}) du_{m-1}\le \epsilon
\end{align*}

If $u_m<L-B$, then we have
$
\prod_{i=2}^{m-1}\int_{u_1}^{u_m}\pi_i(u_i) du_i\le \int_{u_1}^{u_m}\pi_{2}(u_{2}) du_{2}
\le \int_{-\infty}^{L-B}{\pi_{2}(u_{2}) du_{2}}\le \epsilon
$
Therefore, for any $(u_1,u_m)\in S_2$, $\dfrac{\pi^*(u_1,u_m)}{\pi(u_1,u_m)}\le \epsilon$. This proves the claim.
\end{proof}

We are now ready to prove the lemma.
\begin{align*}
&\frac{\Pr(a_m\text{ top and } a_1\text{ bottom})}{\Pr(a_m\succ a_1)}=\frac{\int_{S_1}\pi^*(u_1,u_m) +\int_{S_2}\pi^*(u_1,u_m)}{\int_{S_1}\pi(u_1,u_m) +\int_{S_2}\pi(u_1,u_m)}\\
{\le}& \frac{\int_{S_1}\pi(u_1,u_m) +\int_{S_2}\pi^*(u_1,u_m)}{\int_{S_1}\pi(u_1,u_m) +\int_{S_2}\pi(u_1,u_m)}=\frac{\frac{\int_{S_1}\pi(u_1,u_m)}{\int_{S_2}\pi(u_1,u_m)} +\frac{\int_{S_2}\pi^*(u_1,u_m)}{\int_{S_2}\pi(u_1,u_m)}}{\frac{\int_{S_1}\pi(u_1,u_m)}{\int_{S_2}\pi(u_1,u_m)} +1}
\end{align*}

Therefore, by combining Claim~\ref{claim:ratios1s2} and Claim~\ref{claim:ratios2}, we have 

$$\lim_{L\ra\infty,\epsilon\ra0}\frac{\Pr(a_m\text{ top and } a_1\text{ bottom})}{\Pr(a_m\succ a_1)}=0$$
Therefore, there exist $L$ and $\epsilon$ so that $\rbcml{\mg}{\mw_\text{u}}$ is inconsistent.
\end{proof}

Let $\mg_1$ and $\mg_2$ be a pair of weighted breakings. Define $\mg_1+\mg_2$ to be a breaking with weights being the sum of weights of corresponding edges in $\mg_1$ and $\mg_2$. Note that no edge between two vertices is equivalent to an edge with zero weight between the two vertices. If weights of all edges of $\mg_1$ are no less than those in $\mg_2$ (denoted as $\mg_1\geq \mg_2$), we define $\mg_1-\mg_2$ to be a breaking whose weight on each edge is the difference of the corresponding edge in $\mg_1$ and $\mg_2$ s.t. weights on all edges are nonnegative. 

\begin{lem}\label{lem:plusminus}
$\mg_1$ and $\mg_2$ are weighted breakings.
\begin{itemize}
\item If $\rbcml{\mg_1}{\mw_\text{u}}$ and $\rbcml{\mg_2}{\mw_\text{u}}$ are both consistent, then $\rbcml{\mg_1+\mg_2}{\mw_\text{u}}$ is also consistent. Further, if $\mg_1\geq \mg_2$, then $\rbcml{\mg_1-\mg_2}{\mw_\text{u}}$ is consistent.
\item If $\rbcml{\mg_1}{\mw_\text{u}}$ is consistent but $\rbcml{\mg_2}{\mw_\text{u}}$ is not consistent, then $\rbcml{\mg_1+\mg_2}{\mw_\text{u}}$ is not consistent. Further, if $\mg_1\geq \mg_2$, then $\rbcml{\mg_1-\mg_2}{\mw_\text{u}}$ is not consistent.
\end{itemize}
\end{lem}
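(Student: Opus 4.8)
The plan is to prove this lemma by reducing everything to a linearity property of the gradient $\nabla\exll$ in the breaking graph $\mg$, together with the consistency characterization from Theorem~\ref{thm:asymptotic}. The key observation is that for a fixed ground truth $\vec\theta_0$, the quantities $\bar\kappa_{ii'}$ depend \emph{linearly} on the edge weights of $\mg$: since $\bar\kappa_{i_1i_2} = E\left[\frac{\sum_j \mg_{a_{i_1}\succ a_{i_2}}(R_j)}{n}\,\Big|\,\vec\theta_0\right]$ and $\mg_{a_{i_1}\succ a_{i_2}}$ simply reads off the weight $g_{ii'}$ on the edge between the relevant positions, replacing $\mg$ by $\mg_1+\mg_2$ replaces each $\bar\kappa_{i_1i_2}$ by $\bar\kappa^{(1)}_{i_1i_2}+\bar\kappa^{(2)}_{i_1i_2}$, the corresponding quantities for $\mg_1$ and $\mg_2$ respectively. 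Under $\mw_\text{u}$ all weights $w_{ii'}$ equal one, so looking at the formula for $\nabla_i\exll$ in Section~\ref{sec:asymptotic}, the gradient is linear in the $\bar\kappa$'s, hence linear in $\mg$:
\begin{equation*}
\nabla\text{ELL}^{\mg_1+\mg_2}(\vec\theta) = \nabla\text{ELL}^{\mg_1}(\vec\theta) + \nabla\text{ELL}^{\mg_2}(\vec\theta).
\end{equation*}

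With this additivity in hand, I would carry out the argument as follows. By Theorem~\ref{thm:asymptotic}, $\rbcml{\mg}{\mw_\text{u}}$ is consistent if and only if $\vec\theta_0$ is the unique solution of $\nabla\text{ELL}^{\mg}(\vec\theta)=\vec 0$; since $\cll$ is strictly concave (Theorem~\ref{thm:logconcaverum}), $\nabla\text{ELL}^{\mg}$ vanishes at $\vec\theta_0$ precisely when $\vec\theta_0$ is that unique maximizer, so consistency is equivalent to the single condition $\nabla\text{ELL}^{\mg}(\vec\theta_0)=\vec 0$ holding at the ground truth. For the first bullet, if both $\rbcml{\mg_1}{\mw_\text{u}}$ and $\rbcml{\mg_2}{\mw_\text{u}}$ are consistent then $\nabla\text{ELL}^{\mg_1}(\vec\theta_0)=\nabla\text{ELL}^{\mg_2}(\vec\theta_0)=\vec 0$, so by additivity $\nabla\text{ELL}^{\mg_1+\mg_2}(\vec\theta_0)=\vec 0$, giving consistency of $\mg_1+\mg_2$; the subtraction case $\mg_1-\mg_2$ follows from the identity $\nabla\text{ELL}^{\mg_1-\mg_2}=\nabla\text{ELL}^{\mg_1}-\nabla\text{ELL}^{\mg_2}$, which is legitimate because $\mg_1\ge\mg_2$ guarantees $\mg_1-\mg_2$ is a genuine nonnegative-weight breaking. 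For the second bullet, consistency of $\mg_1$ means $\nabla\text{ELL}^{\mg_1}(\vec\theta_0)=\vec 0$ while inconsistency of $\mg_2$ means $\nabla\text{ELL}^{\mg_2}(\vec\theta_0)\ne\vec 0$; additivity then forces $\nabla\text{ELL}^{\mg_1+\mg_2}(\vec\theta_0)=\nabla\text{ELL}^{\mg_2}(\vec\theta_0)\ne\vec 0$, so $\mg_1+\mg_2$ is inconsistent, and symmetrically for $\mg_1-\mg_2$ one writes $\nabla\text{ELL}^{\mg_2}=\nabla\text{ELL}^{\mg_1}-\nabla\text{ELL}^{\mg_1-\mg_2}$ to conclude $\nabla\text{ELL}^{\mg_1-\mg_2}(\vec\theta_0)\ne\vec 0$.

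The main obstacle, and the step I would be most careful about, is establishing the linearity/additivity of $\nabla\exll$ in $\mg$ rigorously. The subtlety is that $\exll$ itself is \emph{not} linear in $\mg$ because of the $\ln p_{ii'}(\vec\theta)$ terms and the normalization inside $p_{ii'}$, but its \emph{gradient} is linear because the $p_{ii'}(\vec\theta)$ and their derivatives $\partial p_{ii'}/\partial\theta_i$ do not depend on $\mg$ at all (they are determined solely by the RUM), while the only $\mg$-dependence sits in the factors $\bar\kappa_{ii'}$, which are linear in the edge weights. I would therefore state this linearity as the crux and verify it by inspecting the $\nabla_i\exll$ formula term by term, confirming that each summand splits additively when $\bar\kappa_{ii'}=\bar\kappa^{(1)}_{ii'}+\bar\kappa^{(2)}_{ii'}$. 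One small point requiring attention is that consistency is a statement that must hold for \emph{all} ground truths $\vec\theta_0$, but since the additivity identity holds pointwise in $\vec\theta$ for every fixed $\vec\theta_0$, the argument goes through uniformly; I would make sure to phrase the equivalence ``consistent $\iff\nabla\text{ELL}^{\mg}(\vec\theta_0)=\vec0$'' correctly for each $\vec\theta_0$ so that the additive combination of vanishing/nonvanishing gradients transfers to consistency/inconsistency without gaps.
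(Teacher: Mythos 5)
Your proposal is correct and follows essentially the same route as the paper's proof: the paper likewise reduces everything to the additivity $\nabla_i\exll^{\mg_1+\mg_2}(\vec\theta)=\nabla_i\exll^{\mg_1}(\vec\theta)+\nabla_i\exll^{\mg_2}(\vec\theta)$, which holds because the $\bar\kappa$'s are linear in the edge weights while the $p_{ii'}$'s and their derivatives do not depend on $\mg$, and then invokes Theorem~\ref{thm:asymptotic} to translate vanishing/nonvanishing of the gradient at the ground truth into consistency/inconsistency. Your additional care about uniqueness of the solution (via strict concavity) and the explicit rearrangement for the subtraction case in the second bullet are faithful elaborations of steps the paper leaves implicit.
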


\begin{proof} For any breaking $\mg$, let $\exll^\mg(\vec\theta)$  denote the expected log-marginal likelihood function under $\rbcml{\mg}{\mw_\text{u}}$.

{\bf Case 1.} Because $\rbcml{\mg_1}{\mw_\text{u}}$ and $\rbcml{\mg_2}{\mw_\text{u}}$ are both consistent, for any $1\leq i\leq m$, we have
\begin{align*}
\nabla_i\exll^{\mg_1}(\vec\theta)=&\sum_{i'\neq i}(\frac {\bar\kappa^{\mg_1}_{ii'}w_{ii'}} {p_{ii'}(\vec\theta)}\frac {\partial p_{ii'}(\vec\theta)} {\partial\theta_i}+\frac {\bar\kappa^{\mg_1}_{i'i}w_{i'i}} {p_{i'i}(\vec\theta)}\frac {\partial p_{i'i}(\vec\theta)} {\partial\theta_i})=0\\
\nabla_i\exll^{\mg_2}(\vec\theta)=&\sum_{i'\neq i}(\frac {\bar\kappa^{\mg_2}_{ii'}w_{ii'}} {p_{ii'}(\vec\theta)}\frac {\partial p_{ii'}(\vec\theta)} {\partial\theta_i}+\frac {\bar\kappa^{\mg_2}_{i'i}w_{i'i}} {p_{i'i}(\vec\theta)}\frac {\partial p_{i'i}(\vec\theta)} {\partial\theta_i})=0
\end{align*}
It follows that
\begin{align*}
\nabla_i\exll^{\mg_1+\mg_2}(\vec\theta) &= \nabla_i\exll^{\mg_1}(\vec\theta)+\nabla_i\exll^{\mg_2}(\vec\theta)=0\\
\nabla_i\exll^{\mg_1-\mg_2}(\vec\theta) &= \nabla_i\exll^{\mg_1}(\vec\theta)-\nabla_i\exll^{\mg_2}(\vec\theta)=0
\end{align*}
{\bf Case 2.} Because $\rbcml{\mg_1}{\mw_\text{u}}$ is consistent and $\rbcml{\mg_2}{\mw_\text{u}}$ is not consistent, there exists $1\leq i\leq m$ s.t. 
\begin{align*}
\nabla_i\exll^{\mg_1}(\vec\theta)=&\sum_{i'\neq i}(\frac {\bar\kappa^{\mg_1}_{ii'}w_{ii'}} {p_{ii'}(\vec\theta)}\frac {\partial p_{ii'}(\vec\theta)} {\partial\theta_i}+\frac {\bar\kappa^{\mg_1}_{i'i}w_{i'i}} {p_{i'i}(\vec\theta)}\frac {\partial p_{i'i}(\vec\theta)} {\partial\theta_i})=0\\
\nabla_i\exll^{\mg_2}(\vec\theta)=&\sum_{i'\neq i}(\frac {\bar\kappa^{\mg_2}_{ii'}w_{ii'}} {p_{ii'}(\vec\theta)}\frac {\partial p_{ii'}(\vec\theta)} {\partial\theta_i}+\frac {\bar\kappa^{\mg_2}_{i'i}w_{i'i}} {p_{i'i}(\vec\theta)}\frac {\partial p_{i'i}(\vec\theta)} {\partial\theta_i})\neq 0
\end{align*}
It follows that
\begin{align*}
\nabla_i\exll^{\mg_1+\mg_2}(\vec\theta) &= \nabla_i\exll^{\mg_1}(\vec\theta)+\nabla_i\exll^{\mg_2}(\vec\theta)\neq 0\\
\nabla_i\exll^{\mg_1-\mg_2}(\vec\theta) &= \nabla_i\exll^{\mg_1}(\vec\theta)-\nabla_i\exll^{\mg_2}(\vec\theta)\neq 0
\end{align*}
which implies inconsistency.
\end{proof}

\begin{lem}\label{lem:g210} Let $m=3$ and let $\rum(\pi_1,\pi_2,\pi_3)$ be an RUM with symmetric distributions, where for at least one $\pi_i$ we have $(\ln \pi_i)'=\frac {\pi_i'(x)} {\pi_i(x)}$ is monotonically decreasing and $\lim_{x\ra -\infty}\frac {\pi_i'(x)} {\pi_i(x)}\ra \infty$, then $\rbcml{\mg_{\{2\times\{1,2\},\{1,3\}\}}}{\mw_{\text{u}}}$ is not consistent for $\rum(\pi_1,\pi_2,\pi_3)$.
\end{lem}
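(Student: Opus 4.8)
The plan is to apply Theorem~\ref{thm:asymptotic}: because $\cl(\vec\theta,P)$ is strictly concave, it suffices to produce a single ground truth $\vec\theta_0$ at which $\nabla\exll(\vec\theta_0)\neq\vec0$, since then $\vec\theta_0$ is not the solution of~\eqref{eqfirstorder} and $\rbcml{\mg}{\mw_{\text u}}$ cannot be consistent. For the breaking $\mg=\{g_{12}=2,g_{13}=1,g_{23}=0\}$ the expected counts are $\bar\kappa_{12}=2p_{123}+p_{132}$, $\bar\kappa_{21}=2p_{213}+p_{231}$, $\bar\kappa_{13}=p_{123}+2p_{132}$ and $\bar\kappa_{31}=2p_{312}+p_{321}$, where $p_{ijk}=\Pr(a_i\succ a_j\succ a_k)$. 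Using $p_{ij}+p_{ji}=1$ (hence $\partial p_{ij}/\partial\theta_i=-\partial p_{ji}/\partial\theta_i$), the first free gradient coordinate is
\begin{align*}
\nabla_1\exll(\vec\theta)=\frac{\partial p_{12}}{\partial\theta_1}\Big(\frac{\bar\kappa_{12}}{p_{12}}-\frac{\bar\kappa_{21}}{p_{21}}\Big)+\frac{\partial p_{13}}{\partial\theta_1}\Big(\frac{\bar\kappa_{13}}{p_{13}}-\frac{\bar\kappa_{31}}{p_{31}}\Big).
\end{align*}

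I would drive the special alternative to the bottom. Suppose first that $\pi_1$ satisfies the premise and take $\vec\theta_0$ with $\theta_1=-L$ and $\theta_2=\theta_3=0$ for large $L$; the cases where $\pi_2$ or $\pi_3$ is special, or where the two non-special shapes differ, follow from the identical estimate after pushing the relevant alternative and, when convenient, using the flip symmetry of Lemma~\ref{lem:flip} to interchange the roles of positions $1$ and $3$. If in addition $\pi_2=\pi_3$, the $a_2\leftrightarrow a_3$ symmetry makes the two prefactors and the two brackets coincide, so $\nabla_1\exll=2\,\frac{\partial p_{12}}{\partial\theta_1}\big(\frac{\bar\kappa_{12}}{p_{12}}-\frac{\bar\kappa_{21}}{p_{21}}\big)$. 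By the argument in Lemma~\ref{lem:pdr}, $\partial p_{12}/\partial\theta_1>0$ for every finite $L$, so the gradient is nonzero the moment $\frac{\bar\kappa_{12}}{p_{12}}\neq\frac{\bar\kappa_{21}}{p_{21}}$ --- exactly the style of ratio inequality used in Lemma~\ref{lem:rum1m}.

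It then remains to evaluate the two ratios as $L\to\infty$. With $A=p_{123}=p_{132}$, $B=p_{213}=p_{312}$ and $C=p_{231}=p_{321}$ one has $\frac{\bar\kappa_{21}}{p_{21}}=\frac{2B+C}{B+2C}$ and $\frac{\bar\kappa_{12}}{p_{12}}=\frac{3A}{2A+B}$; as $a_1$ sits at the bottom, $C\to\tfrac12$ and $A,B\to0$, so $\frac{\bar\kappa_{21}}{p_{21}}\to\tfrac12$. The crux is
\[
\frac{A}{B}=\frac{\Pr(a_1\succ a_2\succ a_3)}{\Pr(a_2\succ a_1\succ a_3)}\xrightarrow{L\to\infty}0,
\]
which forces $\frac{\bar\kappa_{12}}{p_{12}}=\frac{3A}{2A+B}\to0\neq\tfrac12$ and hence $\nabla_1\exll(\vec\theta_0)<0$ for all large finite $L$. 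Writing $u_1=-L+v_1$ with $v_1\sim\pi_1$, both $A$ and $B$ require the rare upward fluctuation $v_1\gtrsim L$, so each is a tail integral against the right-tail function $\bar\Pi_1(x)=\Pr(v_1>x)$; the premise, which by symmetry gives $(\ln\pi_1)'\to-\infty$ at $+\infty$ and therefore $\bar\Pi_1(v+L)/\bar\Pi_1(v'+L)\to0$ whenever $v>v'$, renders the ``$a_1$ above both'' configuration negligible against ``$a_1$ above only $a_3$'', which is precisely the mechanism of Claims~\ref{claim:ratios1s2} and~\ref{claim:ratios2}.

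The main obstacle is exactly this tail estimate $A/B\to0$: it is where the premise is indispensable, since without a super-exponential tail the ratio need not vanish and $\vec\theta_0$ could survive as a critical point. The one remaining subtlety, when the two non-special shapes differ, is to rule out cancellation between the two bracketed terms; for this I would check that both brackets converge to constants of the same (negative) sign, so that even after scaling by the strictly positive prefactors the gradient cannot vanish. Once the problem is reduced to the single ratio inequality, the remainder is a routine partition-of-utility-space estimate in the style of Lemma~\ref{lem:rum1m}.
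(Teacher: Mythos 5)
Your proposal is correct and is essentially the paper's own proof in mirror image: the paper takes the same reduction through Theorem~\ref{thm:asymptotic} and the same $\bar\kappa$ computations, but pushes the special alternative to the top ($\theta_1=+L$, $\theta_2=\theta_3=0$), derives the consistency requirement $\frac{3p_1}{2p_1+p_2}=\frac{2p_2+p_3}{p_2+2p_3}$ (forcing $p_2/p_3\ra 4$), and contradicts it by proving $p_2/p_3\ra\infty$ via exactly the two-step tail argument you sketch (first pick the threshold $B$, then pick the shift using $(\ln\pi_1)'\ra\infty$, as in Lemma~\ref{lem:g210} and Claims~\ref{claim:ratios1s2}--\ref{claim:ratios2} of Lemma~\ref{lem:rum1m}); under the flip $u\mapsto-u$, legitimate since the $\pi_i$ are symmetric, your ratio $A/B$ is precisely the paper's $p_3/p_2$, so your required limit $1/4$ and actual limit $0$ are literally the paper's $4$ and $\infty$. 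The one genuine difference is in your favor: the paper's proof silently assumes the two non-special shapes coincide (its identities $\Pr(a_1\succ a_2\succ a_3)=\Pr(a_1\succ a_3\succ a_2)$, etc., need $\pi_2=\pi_3$) even though the lemma statement allows them to differ, whereas you flag this case explicitly, and your proposed fix is sound --- the two brackets tend to $-\Pr(u_2>u_3)$ and $-\Pr(u_3>u_2)$, both negative and bounded away from zero, while the prefactors $\partial p_{12}/\partial\theta_1$ and $\partial p_{13}/\partial\theta_1$ remain strictly positive, so no cancellation can occur.
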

\begin{proof}
Let $\mg_{210}$ denote $\mg_{\{2\times\{1,2\},\{1,3\}\}}$. W.l.o.g.~suppose $\lim_{x\ra -\infty}(\pi_1'(x))\ra \infty$. Let $\theta_1>0$ and $\theta_2=\theta_3=0$. We will prove that when $\theta_1$ is sufficiently large, Equation~\eqref{eqfirstorder} does not hold. Let
\begin{align*}
\Pr(a_1\succ a_2\succ a_3) &= \Pr(a_1\succ a_3\succ a_2) = p_1\\
\Pr(a_2\succ a_1\succ a_3) &= \Pr(a_3\succ a_1\succ a_2) = p_2\\
\Pr(a_2\succ a_3\succ a_1) &= \Pr(a_3\succ a_2\succ a_1) = p_3
\end{align*}
We have $p_1+p_2+p_3=\frac 1 2$ and $\Pr(a_1\succ a_2)=2p_1+p_2$, $\Pr(a_2\succ a_1)=p_2+2p_3$. Given $\mg_{210}$, $\bar\kappa_{12}=3p_1$ and $\bar\kappa_{21}=2p_2+p_3$. Therefore, Equation \eqref{eqfirstorder} becomes
\begin{align*}
\nabla_1\exll(\vec\theta) &= \sum_{i=2, 3}(\frac {\bar\kappa_{1i}} {p_{1i}(\vec\theta)}\frac {\partial p_{1i}(\vec\theta)} {\partial\theta_1}+\frac {\bar\kappa_{i1}} {p_{i1}(\vec\theta)}\frac {\partial p_{i1}(\vec\theta)} {\partial\theta_1})= 2(\frac {\bar\kappa_{12}} {p_{12}(\vec\theta)}\frac {\partial p_{12}(\vec\theta)} {\partial\theta_1}+\frac {\bar\kappa_{21}} {p_{21}(\vec\theta)}\frac {\partial p_{21}(\vec\theta)} {\partial\theta_1})\\
&= 2\frac {\partial p_{12}(\vec\theta)} {\partial\theta_1}(\frac {3p_1} {2p_1+p_2}-\frac {2p_2+p_3} {p_2+2p_3})=0
\end{align*}
Therefore, the following equation holds for all cases with $\theta_2=\theta_3=0$ and $\theta_1>0$.
\begin{equation}\label{eq:condconsistx}
\frac {3p_1} {2p_1+p_2}=\frac {2p_2+p_3} {p_2+2p_3}
\end{equation}
As $\theta_1\ra\infty$, $p_1\ra 0.5$ and $p_2,p_3$ goes to $0$. Equation~\eqref{eq:condconsistx} becomes $\frac {2p_2+p_3} {p_2+2p_3}=\frac 3 2$. It follows that $\lim_{\theta_1\ra\infty}\frac{p_2}{p_3}=4$. We next prove that $\lim_{\theta_1\ra\infty}\frac{p_2}{p_3}=\infty$, which will lead to a contradiction. For $i=2,3$, we let $\cdf_i$ denote the CDF of $\pi_i$.
By symmetry, it suffices to prove that 
$\lim_{\theta_1\ra\infty}\frac{\int_{-\infty}^{\infty}\pi_1(U_1-\theta_1)\cdf_2(U_1) (1-\cdf_3(U_1)) d U_1}{\int_{-\infty}^{\infty} \pi_1(U_1-\theta_1)(1-\cdf_2(U_1))(1-\cdf_3(U_1)) d U_1}=\infty$.

The idea is to choose $B$ and $\theta_1$ so that $U_1<B$ in the integration of both numerator and denominator can be ignored, and the ratio for the remainders of numeration and denominator can be arbitrarily large. More precisely, for any $K>0$, let $B>0$ denote any number such that $\dfrac{\cdf_2( B+1)}{1-\cdf_2( B+1)}>K+1$.  Let $\theta_1$ be any number such that 
\begin{align*}
{(\ln\pi_1)'(B+1-\theta_1)} >\ln (K\frac{\int_{-\infty}^{B}(1-\cdf_2(U_1)) (1-\cdf_3(U_1)) d U_1}{\int_{B+1}^{3B+1}(1-\cdf_2(U_1)) (1-\cdf_3(U_1)) d U_1})\end{align*}
Such a $\theta$ exists because $\lim_{x\ra -\infty}\frac {\pi_i'(x)} {\pi_i(x)}\ra \infty$. Because $\pi_1(x)$ is monotonically increasing  for all $x<0$, we have
\begin{align*}
&\int_{B}^{\infty}\pi_1(U_1-\theta_1)(1-\cdf_2(U_1)) (1-\cdf_3(U_1)) d U_1\\
>&\int_{B+1}^{3B+1}\pi_1(U_1-\theta_1)(1-\cdf_2(U_1)) (1-\cdf_3(U_1)) d U_1\\\
>&\pi_1(B+1-\theta_1)\times\int_{B+1}^{3B+1}(1-\cdf_2(U_1)) (1-\cdf_3(U_1)) d U_1\\\
>&e^{(\ln \pi_1)'(B+1-\theta_1)}\pi_1(B-\theta_1)\times\int_{B+1}^{3B+1}(1-\cdf_2(U_1)) (1-\cdf_3(U_1)) d U_1\\
>&K\pi_1(B-\theta_1)\int_{-\infty}^{B}(1-\cdf_2(U_1)) (1-\cdf_3(U_1)) d U_1\\
>&K\int_{-\infty}^{B}\pi_1(U_1-\theta_1)(1-\cdf_2(U_1)) (1-\cdf_3(U_1)) d U_1\\
\end{align*}
Therefore, we have 
\begin{align*}
&\frac{\int_{-\infty}^{\infty}\pi_1(U_1-\theta_1)\cdf_2(U_1) (1-\cdf_3(U_1)) d U_1}{\int_{-\infty}^{\infty} \pi_1(U_1-\theta_1)(1-\cdf_2(U_1))(1-\cdf_3(U_1)) d U_1}\\
>&\frac{\int_{B+1}^{\infty}\pi_1(U_1-\theta_1)\cdf_2(U_1) (1-\cdf_3(U_1)) d U_1}{(1+\frac{1}{K})\int_{B+1}^{\infty} \pi_1(U_1-\theta_1)(1-\cdf_2(U_1))(1-\cdf_3(U_1)) d U_1}\\
>&\frac{\cdf_2(B+1) (1-\cdf_3(B+1))}{(1+\frac{1}{K})(1-\cdf_2(B+1))(1-\cdf_3(B+1))}
>K
\end{align*}
Therefore, it is impossible that Equation~\eqref{eq:condconsistx} holds for all $\theta_1$, which proves the lemma.
\end{proof}

\begin{lem}\label{lem:consistency}
1. For any location family RUM$(\pi_1,\ldots,\pi_m)$, 

(a) $\rbcml{\mg}{\mw}$ is consistent if and only if $\rbcml{k_1\mg}{k_2\mw}$ is consistent for all $k_1, k_2>0$.

(b) If for any pair of alternatives $a_i, a_{i'}$ we have
\begin{equation}\label{proportionalx}
\frac{\bar\kappa_{ii'}}{\bar\kappa_{i'i}}=\frac{\Pr_{\vec\theta}(a_i\succ a_{i'})}{\Pr_{\vec\theta}(a_{i'}\succ a_i)}
\end{equation}
then $\rbcml{\mg}{\mw}$ is consistent if and only if $\mw$ is connected and symmetric.

(c) If $\mg$ has positive weight on an adjacent edge $l\ra l+1$, then $\rbcml{\mg}{\mw}$ is consistent only if $\mw$ is connected and symmetric.

2. For any  RUM$(\pi)$, 

(a) $\rbcml{\mg}{\mw}$ is consistent only if for any alternative $a_i$ we have 
\begin{equation}\label{eq:2ax}
\sum_{i'\ne i}w_{ii'}=\sum_{i'\ne i}w_{i'i}
\end{equation}
(b) Suppose the breaking graph contains an edge $\{l,l'\}$ that is different from $\{1,m\}$, then $\rbcml{\mg}{\mw}$ is consistent only if the $\mw$ is connected and symmetric.

(c) $\rbcml{\mg}{\mw}$ is consistent only if $\rbcml{\mg}{\mw_\text{u}}$ is consistent.

3. For any location family RUM$(\pi_1,\ldots,\pi_m)$ where each $\pi_i$ is symmetric around $0$, if $\rbcml{\mg}{\mw}$ is consistent, then $\rbcml{\mg}{\mw'}$ with symmetric weights $w'_{ii'}=w_{ii'}+w_{i'i}$ is also consistent.
\end{lem}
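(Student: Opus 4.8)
The unifying tool is Theorem~\ref{thm:asymptotic} together with the strict log-concavity of Theorem~\ref{thm:logconcaverum}: since $\cll(\vec\theta,P)$ is strictly concave whenever $\mw\otimes G(P)$ is connected, it has at most one critical point, so $\rbcml{\mg}{\mw}$ is consistent exactly when (i) $\mw\otimes G(P)$ is connected and (ii) for every ground truth $\vec\theta_0$ the first-order condition $\nabla\exll(\vec\theta_0)=\vec 0$ holds. I would reduce every part to checking (i) and (ii), using repeatedly the identity $\frac{\partial p_{ii'}}{\partial\theta_i}=-\frac{\partial p_{i'i}}{\partial\theta_i}$ of \eqref{drzero}, which lets me write $\nabla_i\exll(\vec\theta)=\sum_{i'\neq i}\frac{\partial p_{ii'}}{\partial\theta_i}\left(\frac{\bar\kappa_{ii'}w_{ii'}}{p_{ii'}}-\frac{\bar\kappa_{i'i}w_{i'i}}{p_{i'i}}\right)$. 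I would also record that for the uniform breaking $\bar\kappa_{ii'}=p_{ii'}$, so a uniform $\mg$ automatically satisfies the proportionality hypothesis \eqref{proportionalx}.

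Part~1(a) is then immediate: scaling $\mg$ by $k_1$ scales every $\bar\kappa_{ii'}$ by $k_1$ and scaling $\mw$ by $k_2$ scales every $w_{ii'}$ by $k_2$, so $\nabla\exll$ is multiplied by $k_1k_2$ and its zero set is unchanged, while connectivity of $\mw\otimes G(P)$ is invariant under positive scaling. For Part~1(b) ``if'', proportionality gives $\bar\kappa_{ii'}/p_{ii'}=\bar\kappa_{i'i}/p_{i'i}$, so when $\mw$ is symmetric every summand in the displayed gradient vanishes, and connectedness supplies (i). For Part~2(a) I would evaluate the first-order condition at $\vec\theta_0=\vec 0$: for $\text{RUM}(\pi)$ the ranking is then uniform over all permutations, so all $\bar\kappa_{ii'}$ are equal, all $p_{ii'}=\tfrac12$, and all $\frac{\partial p_{ii'}}{\partial\theta_i}$ share one positive value; the gradient collapses to a positive multiple of $\sum_{i'\neq i}(w_{ii'}-w_{i'i})$, yielding \eqref{eq:2ax}. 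Part~3 I would prove from the additive decomposition $\nabla_i\exll^{\mw'}=\nabla_i\exll^{\mw}+\nabla_i\exll^{\mw^{T}}$ (with $w^{T}_{ii'}=w_{i'i}$), obtained by collecting the two pair-summands: the first term vanishes by hypothesis, and the second vanishes by the flip symmetry of Lemma~\ref{lem:flip}, which under symmetric utility distributions makes the transposed-weight problem consistent as well.

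The remaining symmetry/converse claims (the ``only if'' of~1(b) and Part~2(b)) I would handle by an isolation technique: fix a pair $(a_j,a_k)$, place $\theta_j,\theta_k$ at a chosen common region and drive every other mean to $\pm\infty$; by Lemma~\ref{lem:infty} the gradient contributions of the far alternatives become arbitrarily small, so $\nabla_j\exll$ is governed by the surviving pair term $\frac{\partial p_{jk}}{\partial\theta_j}\big(\tfrac{\bar\kappa_{jk}}{p_{jk}}w_{jk}-\tfrac{\bar\kappa_{kj}}{p_{kj}}w_{kj}\big)$, whose factor $\frac{\partial p_{jk}}{\partial\theta_j}$ is nonzero by Lemma~\ref{lem:pdr}. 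For~1(b) proportionality already equalizes the two ratios, so the bracket is a positive multiple of $w_{jk}-w_{kj}$ and asymmetry is detected; for~2(b) the presence of an edge other than $\{1,m\}$ is exactly what prevents the residual comparison-probability ratio from being self-symmetric, so the computation of Lemma~\ref{lem:rum1m} again detects $w_{jk}\neq w_{kj}$. Connectivity in both cases is forced, since otherwise strict concavity (Theorem~\ref{thm:logconcaverum}) fails and the maximizer is not unique.

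Part~2(c) is the main obstacle, because it must show that \emph{no} weight vector can rescue a non-uniform breaking. My plan is: given a consistent $\rbcml{\mg}{\mw}$, first use Part~3 to pass to a consistent symmetric $\mw'$, then use Part~2(a) together with the isolation arguments above to show that the single-edge breaking $\{1,m\}$ and the other minimal obstructions (Lemmas~\ref{lem:rum1m} and~\ref{lem:g210}) remain inconsistent even under symmetric weights, so that the induction underlying Theorem~\ref{thm:rum} forces $\mg$ to be uniform; once $\mg$ is uniform we have $\bar\kappa_{ii'}=p_{ii'}$, hence proportionality \eqref{proportionalx}, and Part~1(b) applied to the connected symmetric $\mw_\text{u}$ gives consistency of $\rbcml{\mg}{\mw_\text{u}}$. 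The delicate step is transferring the inductive structure of Theorem~\ref{thm:rum} from $\mw_\text{u}$ to a general symmetric $\mw'$, for which I would re-run the induction on $m$, restricting to $\mg_{[2,m]}$ and $\mg_{[1,m-1]}$ as in the sketch of Theorem~\ref{thm:rum}, carrying the weights through Lemmas~\ref{lem:equal} and~\ref{lem:infty}.
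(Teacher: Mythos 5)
Your parts 1(a), 1(b), 2(a) follow essentially the paper's own argument (scale invariance of the maximizer; proportionality plus symmetry killing each summand of the gradient, with strict concavity from Theorem~\ref{thm:logconcaverum} supplying uniqueness; evaluation at $\vec\theta_0=\vec 0$ for RUM$(\pi)$), and your isolation technique for the ``only if'' of 1(b) is the paper's construction (two alternatives held fixed, the rest pushed to $\pm\infty$ via Lemma~\ref{lem:infty}). The problems are in 2(b) and 2(c). For 2(b), your single-pair isolation only works when the breaking has positive weight on an \emph{adjacent} edge. The hypothesis only provides an edge $\{l,l'\}\neq\{1,m\}$, possibly with $l'>l+1$; take $\mg=\{g_{13}=1\}$ with $m\ge 4$. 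If you park $a_j,a_k$ at adjacent positions and drive the others away, then with probability tending to one the pair sits at positions $q,q+1$, where $\mg$ has no edge, so $\bar\kappa_{jk}\to 0$ and your ``surviving pair term'' vanishes --- no contradiction is produced. If you instead separate them to positions $l$ and $l'$, you must keep $\Delta l-1$ other alternatives between them, and those contributions are not negligible. The paper's proof handles exactly this: it places a group of $\Delta l+1$ alternatives with equal means at positions $l,\dots,l'$; exchangeability then yields only the aggregate equations $\sum_{i=2}^{\Delta l+1}(w_{1i}-w_{i1})=0$, and the pointwise identity $w_{12}=w_{21}$ is extracted by swapping group members with an outside alternative and taking linear combinations of the resulting equations. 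Your appeal to Lemma~\ref{lem:rum1m} here is off target: that lemma shows the single-edge $\{1,m\}$ breaking is inconsistent under $\mw_\text{u}$; it does not detect weight asymmetry.

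For 2(c), your plan does not prove the stated claim. The claim is for \emph{every} RUM$(\pi)$, but your route invokes Part 3 (which requires each $\pi_i$ symmetric) and the induction behind Theorem~\ref{thm:rum} (which additionally requires $(\ln\pi)'$ monotone decreasing with infinite limit at $-\infty$), so at best you would prove 2(c) under much stronger hypotheses. Moreover, ``re-running the induction with a general symmetric $\mw$'' is not an available step: Lemmas~\ref{lem:rum1m}, \ref{lem:g210}, \ref{lem:plusminus}, and \ref{lem:subgraph} are all stated and proved only for $\mw_\text{u}$, and generalizing them is essentially as hard as the result 2(c) is designed to circumvent --- in the paper, 2(c) is precisely the device that reduces general $\mw$ to $\mw_\text{u}$ \emph{before} Theorem~\ref{thm:rum} is ever invoked, so your plan runs the logic backwards. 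The paper's actual proof is a two-line symmetrization requiring nothing about $\pi$: since all alternatives share the same distribution, consistency must hold for every relabeled ground truth; writing \eqref{eqfirstorder} for each of the $m!$ relabelings and summing, each $w_{ii'}$ occupies the role of any fixed pairwise weight exactly $(m-2)!$ times, so the sum is the first-order condition for the uniform weight $(m-2)!\sum_{i\neq i'}w_{ii'}$, and 1(a) rescales to $\mw_\text{u}$. Finally, a caveat on your Part 3: Lemma~\ref{lem:flip} yields consistency of $\rbcml{\mg^*}{\mw^{\top}}$ where $\mg^*$ is the \emph{position-reversed} breaking, not of $\rbcml{\mg}{\mw^{\top}}$ with the same $\mg$, which is what your additive decomposition $\nabla\exll^{\mw'}=\nabla\exll^{\mw}+\nabla\exll^{\mw^{\top}}$ actually requires; the paper instead argues directly with the reflected ground truth $-\vec\theta$ and the unchanged breaking, and you would need to close that mismatch.
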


\begin{proof}

{\bf 1(a).} Let $\cll(\vec\theta, P)$ be the composite log-likelihood of $\rbcml{\mg}{\mw}$. Then the composite log-likelihood for $\rbcml{k_1\mg}{k_2\mw}$ is $k_1k_2\cll(\vec\theta, P)$. So if $\vec\theta^*$ maximizes $\cll(\vec\theta, P)$, it also maximizes $k_1k_2\cll(\vec\theta, P)$, or vice versa. That is to say, $\rbcml{\mg}{\mw}$ and $\rbcml{k_1\mg}{k_2\mw}$ are equivalent estimators. 

{\bf 1(b).} The ``if" direction: by combining \eqref{drzero} and \eqref{proportionalx}, the ground truth is the solution to \eqref{eqfirstorder}. Due to the strict concavity of $\cll(\vec\theta, P)$, the ground truth is the only solution. Consistency follows by Theorem~\ref{thm:asymptotic}.

The ``only if" direction: we first prove connectivity, then prove symmetry. 

If $\mw$ is not connected, then by Theorems~\ref{thm:logconcavepl} and \ref{thm:logconcaverum}, the solution to \eqref{eqfirstorder} is unbounded or non-unique. And by Theorem~\ref{thm:asymptotic}, $\rbcml{\mg}{\mw}$ is not consistent. 

Now we prove symmetry of $\mw$ by contradiction. For the purpose of contradiction suppose $w_{12} \neq w_{21}$ (w.l.o.g.). We will construct a counterexample where $\rbcml{\mg}{\mw}$ is not consistent. Let $\theta_1=\theta_2=0$ and $\theta_3=\ldots=\theta_m=L$. By Lemma~\ref{lem:infty}, we have for any $\epsilon>0$, there exists $L$ s.t. 
$
\nabla_1\exll(\vec\theta)=\frac {\bar\kappa_{12}w_{12}} {p_{12}(\vec\theta)}\frac {\partial p_{12}(\vec\theta)} {\partial\theta_1}+\frac {\bar\kappa_{21}w_{21}} {p_{21}(\vec\theta)}\frac {\partial p_{21}(\vec\theta)} {\partial\theta_1}+\epsilon=\frac {\bar\kappa_{21}(w_{21}-w_{12})} {p_{21}(\vec\theta)}\frac {\partial p_{21}(\vec\theta)} {\partial\theta_1}+\epsilon
$, 
where the last equality is obtained due to Lemma~\ref{lem:equal}. Since $w_{12}\neq w_{21}$, we have $\frac {\kappa_{21}(w_{21}-w_{12})} {p_{21}(\vec\theta)}\frac {\partial p_{21}(\vec\theta)} {\partial\theta_1}\neq 0$. Let $\epsilon<|\frac {\kappa_{21}(w_{21}-w_{12})} {p_{21}(\vec\theta)}\frac {\partial p_{21}(\vec\theta)} {\partial\theta_1}|$, then we have $\nabla_1\exll(\vec\theta)\neq 0$. This means the ground truth does not maximize $\exll(\vec\theta)$. By Theorem~\ref{thm:asymptotic}, the estimator is not consistent.

{\bf 1(c).} The proof for connectivity of $\mw$ is the same as in the proof of 1(b). We only prove necessity of symmetry.
For the purpose of contradiction suppose $w_{12}\neq w_{21}$. Let $\theta_1=\theta_2=0$, $\theta_3=\ldots=\theta_{l+1}=-L$, and $\theta_{l+2}=\ldots=\theta_m=L$. By Lemma~\ref{lem:infty}, for any $\epsilon>0$, we have 
$
\nabla_1\exll(\vec\theta) =\frac {\bar\kappa_{12}w_{12}} {p_{12}(\vec\theta)}\frac {\partial p_{12}(\vec\theta)} {\partial\theta_1}+\frac {\bar\kappa_{21}w_{21}} {p_{21}(\vec\theta)}\frac {\partial p_{21}(\vec\theta)} {\partial\theta_1}+\epsilon=\frac {\bar\kappa_{21}(w_{21}-w_{12})} {p_{21}(\vec\theta)}\frac {\partial p_{21}(\vec\theta)} {\partial\theta_1}+\epsilon
$, where the last equality is obtained by Lemma~\ref{lem:equal}. Since $w_{12}\neq w_{21}$, we have $\frac {\kappa_{21}(w_{21}-w_{12})} {p_{21}(\vec\theta)}\frac {\partial p_{21}(\vec\theta)} {\partial\theta_1}\neq 0$. Let $\epsilon<|\frac {\kappa_{21}(w_{21}-w_{12})} {p_{21}(\vec\theta)}\frac {\partial p_{21}(\vec\theta)} {\partial\theta_1}|$, then we have $\nabla_1\exll(\vec\theta)\neq 0$. This means the ground truth does not maximize $\exll(\vec\theta)$. By Theorem~\ref{thm:asymptotic}, the estimator is not consistent.


{\bf 2(a).} Let $\theta_1=\ldots = \theta_m = 0$. Thus for any pair of alternatives $a_i, a_{i'}$, we have $\bar\kappa_{ii'}=\bar\kappa_{i'i}$ and $\Pr_{\vec\theta}(a_{i}\succ a_{i'})=\Pr_{\vec\theta}(a_{i'}\succ a_i)$. \eqref{eq:2ax} follows by applying \eqref{drzero} to $\exll(\vec\theta)=0$.

{\bf 2(b).} The proof for connectivity of $\mw$ is the same as in the proof of 1(b). For necessity of $\mw$, it suffices to prove $w_{12}=w_{21}$. Let $\Delta l=l'-l$ (w.l.o.g. suppose $l<l'$). Let $\theta_1=\ldots=\theta_{\Delta l+1}=0$, and $\theta_{\Delta l+2}=\ldots=\theta_{l+\Delta l}=L$, $\theta_{l'+1}=\ldots=\theta_m = -L$. When $L\ra +\infty$, with probability approaching $1$, $\theta_1$ through $\theta_{\Delta l+1}$ are ranked at positions from $l$ to $l'$. For any $1\le i, i' \le \Delta l+1$ and $i'\neq i$, we have $\bar\kappa_{ii'}=\bar\kappa_{i'i}$ and $\Pr_{\vec\theta}(a_{i}\succ a_{i'})=\Pr_{\vec\theta}(a_{i'}\succ a_i)$. So we have
\begin{equation}\label{2b1x}
\sum^{\Delta l+1}_{i=2} w_{1i}=\sum^{\Delta l+1}_{i=2} w_{i1}
\end{equation}
If we swap the values of $\theta_{\Delta l+2}$ and $\theta_{i'}$ where $2\le i'\le \Delta l+1$, we have
\begin{equation}\label{2b2x}
\sum^{\Delta l+2}_{i=2} w_{1i}-w_{1i'}=\sum^{\Delta l+2}_{i=2} w_{i1}-w_{i'1}
\end{equation}
Note that \eqref{2b2x} contains $\Delta l$ equations. Summing up all equations in \eqref{2b1x} and \eqref{2b2x}, we have 
\begin{equation}\label{2b3x}
\sum^{\Delta l+2}_{i=2}w_{1i}=\sum^{\Delta l+2}_{i=2}w_{i1}
\end{equation}
Let $i'=2$ in \eqref{2b2x}, we get
\begin{equation}\label{2b4x}
\sum^{\Delta l+2}_{i=3}w_{1i}=\sum^{\Delta l+2}_{i=3}w_{i1}
\end{equation}
\eqref{2b3x}-\eqref{2b4x}, we have $w_{12}=w_{21}$.

{\bf 2(c).} For any $\vec\theta$, $\nabla\exll(\vec\theta)=\vec 0$ holds. By relabeling the alternatives (by 
permuting the elements in $\vec\theta$), we can obtain $m!$ similar equations. Equivalently, each $w_{ii'}$ in $\mw$ will be the weight of $a_1\succ a_2$ (or any other pairwise comparison) for $(m-2)!$ times. By summing up all corresponding equations, we obtain another set of equations, which is the gradient of the composite likelihood with uniform $\mw'=(m-2)!\sum_{i\neq i'}w_{ii'}$. So $\rbcml{\mg}{\mw'}$ is also consistent. 

{\bf 3.} For any $\vec\theta$, we re-write \eqref{eqfirstorder}
\begin{equation}\label{eq31x}
\begin{split}
\nabla_i\exll(\vec\theta) = \sum_{i'\neq i}(\frac {\bar\kappa_{ii'}w_{ii'}} {p_{ii'}(\vec\theta)}\frac {\partial p_{ii'}(\vec\theta)} {\partial\theta_i}+\frac {\bar\kappa_{i'i}w_{i'i}} {p_{i'i}(\vec\theta)}\frac {\partial p_{i'i}(\vec\theta)} {\partial\theta_i})=0
\end{split}
\end{equation}
Consider the RUM with $\vec\theta'=-\vec\theta$, we have $p_{ii'}(\vec\theta')=p_{i'i}(\vec\theta)$. So we have
\begin{align}
&\nabla_i\exll(\vec\theta')= \sum_{i'\neq i}(\frac {\bar\kappa_{ii'}w_{ii'}} {p_{ii'}(\vec\theta')}\frac {\partial p_{ii'}(\vec\theta')} {\partial\theta_i'}+\frac {\bar\kappa_{i'i}w_{i'i}} {p_{i'i}(\vec\theta')}\frac {\partial p_{i'i}(\vec\theta')} {\partial\theta_i'})\notag\\
=& \sum_{i'\neq i}(-\frac {\bar\kappa_{ii'}w_{i'i}} {p_{ii'}(\vec\theta)}\frac {\partial p_{ii'}(\vec\theta)} {\partial\theta_i}-\frac {\bar\kappa_{i'i}w_{ii'}} {p_{i'i}(\vec\theta)}\frac {\partial p_{i'i}(\vec\theta)} {\partial\theta_i})=0\label{eq32x}
\end{align}
\eqref{eq31x}-\eqref{eq32x}, we have
$
\sum_{i'\neq i}(\frac {\kappa_{ii'}(w_{ii'}+w_{i'i})} {p_{ii'}(\vec\theta)}\frac {\partial p_{ii'}(\vec\theta)} {\partial\theta_i}+\frac {\kappa_{i'i}(w_{i'i}+w_{i'i})} {p_{i'i}(\vec\theta)}\frac {\partial p_{i'i}(\vec\theta)} {\partial\theta_i})=0
$, which means  $\rbcml{\mg}{\mw'}$ is consistent by Theorem~\ref{thm:asymptotic}.
\end{proof}


{\bf Theorem~\ref{thm:logc_conv}}
Let $f(x)$ and $g(x)$ be two continuous and strictly log-concave functions on $\mathbb R$. Then $f*g$ is also strictly log-concave on $\mathbb R$.

\begin{proof}
The proof is done by examining the equality condition for the Pr\'ekopa-Leindler inequality. 
Let $h=f*g$, namely, for any $y\in \mathbb R$, $h(y)=\int_{\mathbb R} f(y-x)g(x) dx$.
Because $f$ and $g$ are continuous, so does $h$. To prove the strict log-concavity of $h$, 
it suffices to prove that for any different $y_1,y_2\in\mathbb R$,  $h(\frac{y_1+y_2}{2})>\sqrt {h(y_1)h(y_2)}$.

Suppose for the sake of contradiction that this is not true. Since log-concavity preserves under convolution \citep{Saumard2014:Log-concavity}, $h$ is log-concave. So, there exist $y_1<y_2$ such that $h(\frac{y_1+y_2}{2})=\sqrt {h(y_1)h(y_2)}$. Let $\Lambda(x,y)=f(y-x)g(x)$. 
We further define 
\begin{align*}
H(x)&=\Lambda(x,\frac{y_1+y_2}{2})=f(\frac{y_1+y_2}{2}-x)g(x)\\
F(x)&=\Lambda(x,y_1)=f(y_1-x)g(x)\\
G(x)&=\Lambda(x,y_2)=f(y_2-x)g(x)
\end{align*}
Because (non-strict) log-concavity is preserved under convolution, $\Lambda(x,y)$ is log-concave. We have that for any $x\in\mathbb R$, $H(x)\ge \sqrt{F(x)G(x)}$. The Pr\'ekopa-Leindler inequality asserts that 
\begin{equation}\label{eq:plx}
\int_{\mathbb R}H(x) dx\ge \sqrt {\int_{\mathbb R}F(x) dx\int_{\mathbb R}G(x) dx}
\end{equation}

Because $h(\frac{y_1+y_2}{2})=\int_{\mathbb R}H(x) dx$, $h(y_1)=\int_{\mathbb R}F(x) dx$, $h(y_2)=\int_{\mathbb R}G(x) dx$, and  $h(\frac{y_1+y_2}{2})=\sqrt {h(y_1)h(y_2)}$, (\ref{eq:plx}) becomes an equation. It was proved by~\citet{Dubuc1977:Critere} that: there exist $a>0$ and $b\in \mathbb R$ such that the following conditions hold almost everywhere for $x\in \mathbb R$ (see the translation of Dubuc's result in English by~\citet{Ball2010:Stability}). 1.~$F(x)=aH(x+b)$, 2.~$G(x)=a^{-1}H(x-b)$.

The first condition means that for almost every $x\in\mathbb R$,
\begin{align}
&f(y_1-x)g(x)=af(\frac{y_1+y_2}{2}-x-b)g(x+b)\notag\\
&\Longleftrightarrow \frac{g(x)}{g(x+b)}=a \frac{f(\frac{y_1+y_2}{2}-x-b)}{f(y_1-x)}\label{eq:eqc1x}
\end{align}
The second condition means that for almost all  $x\in\mathbb R$,
$
f(y_2-x)g(x)=a^{-1}f(\frac{y_1+y_2}{2}-x+b)g(x-b)
\Longleftrightarrow \frac{g(x-b)}{g(x)}=a \frac{f(y_2-x)}{f(\frac{y_1+y_2}{2}-x+b)}
$. Therefore, for almost all $x\in \mathbb R$, 

\begin{equation}\label{eq:eqc3x}\frac{g(x)}{g(x+b)}=a \frac{f(y_2-x-b)}{f(\frac{y_1+y_2}{2}-x)}\end{equation}

Combining (\ref{eq:eqc1x}) and (\ref{eq:eqc3x}), for almost every $x\in \mathbb R$ we have
\begin{equation}\label{eq:eqc4x}
\frac{g(x)}{g(x+b)}=a \frac{f(y_2-x-b)}{f(\frac{y_1+y_2}{2}-x)}=a \frac{f(\frac{y_1+y_2}{2}-x-b)}{f(y_1-x)}
\end{equation}
Because $f(x)$ is strictly log-concave, for any fixed $c\ne 0$, $\frac{f(x+c)}{f(x)}$ is strictly monotonic. Because $y_1\ne y_2$ and 
$y_2-x-b-(\frac{y_1+y_2}{2}-x)=\frac{y_1+y_2}{2}-x-b-(y_1-x)=\frac{y_2-y_1}{2}-b$, we must have that $\frac{y_2-y_1}{2}-b=0$, namely $b=\frac{y_2-y_1}{2}$. Therefore, (\ref{eq:eqc4x}) becomes $\frac{g(x)}{g(x+\frac{y_2-y_1}{2})}=a$ for almost every $x\in\mathbb R$, which contradicts the strict log-concavity of $g$. This means that $h=f*g$ is strictly log-concave.
\end{proof}

{\bf Theorem~\ref{thm:logconcavemarginal}}
Let $h(x,y)$ be a strictly log-concave function on $\mathbb R^2$. Then $\int_{\mathbb R}h(x,y) dx$ is strictly log-concave on $\mathbb R$.

\begin{proof} Again, the proof is done by examining the equality condition for the Pr\'ekopa-Leindler inequality. Let $h^*(y)=\int_{\mathbb R} h(x,y) dx$. It suffices to prove that for any different $y_1,y_2\in\mathbb R$,  $h^*(\frac{y_1+y_2}{2})>\sqrt {h^*(y_1)h^*(y_2)}$. 

Suppose for the sake of contradiction the claim is not true. Because (non-strict) log-concavity is preserved under marginalization, $h^*$ is log-concave. Therefore, there exist $y_1<y_2$ such that $h^*(\frac{y_1+y_2}{2})=\sqrt {h^*(y_1)h^*(y_2)}$. We further define the following functions. $H(x)=h(x,\frac{y_1+y_2}{2})$,  $F(x)=h(x,y_1)$, and  $G(x)=h(x,y_2)$.

Because $h(x,y)$ is strictly log-concave, we have that for any $x\in\mathbb R$, $H(x)> \sqrt{F(x)G(x)}$. The Pr\'ekopa-Leindler inequality asserts that 
\begin{equation}\label{eq:plx2}
\int_{\mathbb R}H(x) dx\ge \sqrt {\int_{\mathbb R}F(x) dx\int_{\mathbb R}G(x) dx}
\end{equation}

Because $h^*(\frac{y_1+y_2}{2})=\int_{\mathbb R}H(x) dx$, $h^*(y_1)=\int_{\mathbb R}F(x) dx$, $h^*(y_2)=\int_{\mathbb R}G(x) dx$, and  $h^*(\frac{y_1+y_2}{2})=\sqrt {h^*(y_1)h^*(y_2)}$,  (\ref{eq:plx2}) becomes an equation. Following~\citet{Dubuc1977:Critere}'s result, we have that there exist $a>0$ and $b\in \mathbb R$ such that $F(x)=aH(x+b)$ and $G(x)=a^{-1}H(x-b)$ hold almost everywhere for $x\in \mathbb R$.

$F(x)=aH(x+b)$ means that for almost every $x\in\mathbb R$,
$ah(x+b,\frac{y_1+y_2}{2})=h(x,y_1)$. 
$G(x)=a^{-1}H(x-b)$ means that for almost every  $x\in\mathbb R$, $a^{-1}h(x-b,\frac{y_1+y_2}{2})=h(x,y_2)$. This means that for almost every  $x\in\mathbb R$, $
a^{-1}h(x+b,\frac{y_1+y_2}{2})=h(x+2b,y_2)
$. Therefore, for almost every $x\in \mathbb R$, we have $h(x+b,\frac{y_1+y_2}{2})\cdot h(x+b,\frac{y_1+y_2}{2})=h(x,y_1)\cdot h(x+2b,y_2)$, which contradicts the strict log-concavity of $h$.
\end{proof}

{\bf Theorem~\ref{thm:logconcavepl}} Given any profile $P$, the composite likelihood function for Plackett-Luce, i.e.~$\clpl(\vec\theta,P)$, is strictly log-concave if and only if $\mw$ is weakly connected.  $\arg\max_{\vec\theta}\clpl(\vec\theta,P)$ is bounded if and only if $\mw\otimes G(P)$ is strongly connected.

\begin{proof} It is not hard to check that when $\mw$ is not weakly connected, there exist $\vec\theta^{(1)}$ and $\vec\theta^{(2)}$ such that for any $0<\lambda<1$ we have $\cllpl(\vec\theta^{(1)},P)=\cllpl(\vec\theta^{(2)},P)=\lambda\cllpl(\vec\theta^{(1)},P)+(1-\lambda)\cllpl(\vec\theta^{(2)},P)$, which violates strict log-concavity.

Suppose $\mw$ is weakly connected, we only need to show that
\begin{equation}\label{cllpl_simx}
f(\vec\theta) = \sum_{i_1\neq i_2}(-(\kappa_{i_1i_2}w_{i_1i_2}+\kappa_{i_2i_1}w_{i_2i_1})\ln(e^{\theta_{i_1}}+e^{\theta_{i_2}}))
\end{equation}
is concave. The proof is similar to the log-concavity of likelihood for BTL by~\cite{Hunter04:MM}. H\"{o}lder's inequality shows that for positive $c_t, d_t>0$, where $t=1, \ldots, N$ and $0<\lambda<1$, we have
\begin{equation}\label{holderx}
\ln\sum^N_{t=1}c^\lambda_td^{1-\lambda}_t\leq \lambda\ln\sum^N_{t=1}c_t+(1-\lambda)\ln\sum^N_{t=1}d_t
\end{equation}
with equality if and only if $\exists\zeta$ s.t. $c_t=\zeta d_t$ for all $t$.

Let $\vec{\theta}^{(1)}$ and $\vec{\theta}^{(2)}$ be two parameters. For any two alternatives $a_{i_1}$ and $a_{i_2}$, by \eqref{holderx}, we have
\begin{align*}
-\ln(e^{\lambda\theta^{(1)}_{i_1}+(1-\lambda)\theta^{(2)}_{i_1}}+e^{\lambda\theta^{(1)}_{i_2}+(1-\lambda)\theta^{(2)}_{i_2}}\geq -\lambda\ln(e^{\theta^{(1)}_{i_1}}+e^{\theta^{(1)}_{i_2}})-(1-\lambda)\ln(e^{\theta^{(2)}_{i_1}}+e^{\theta^{(2)}_{i_2}})
\end{align*}
Multiplying both sides by $\kappa_{i_1i_2}w_{i_1i_2}+\kappa_{i_2i_1}w_{i_2i_1}$ and summing over all $i_i\neq i_2$ demonstrates the concavity of \eqref{cllpl_simx}.

To prove strict concavity, we need to check the condition when the equality of \eqref{holderx} holds. For all $1\leq i\leq m$, $e^{\theta^{(1)}_i} = \zeta e^{\theta^{(2)}_i}$. Namely $\theta^{(1)}_i = \theta^{(2)}_i+\ln\zeta$ holds for all $i$. Because random utility models are invariant under parameter shifts, it is exactly the same model. Thus, we proves the strict concavity of \eqref{cllpl_simx}.

The proof for the condition of boundedness is also similar to that by~\citet{Hunter04:MM}.
\end{proof}

{\bf Theorem~\ref{thm:logconcaverum}}
Let $\mm$ be an RUM where the CDF of each utility distribution is strictly log-concave. 
Given any profile $P$, the composite likelihood function for $\mm$, i.e.~$\cl(\vec\theta,P)$, is strictly log-concave if and only if $\mw$ is weakly connected.  $\arg\max_{\vec\theta}\cl(\vec\theta,P)$ is bounded if and only if $\mw\otimes G(P)$ is strongly connected.

\begin{proof} Similar to the proof for Plackett-Luce, the only hard part is to prove that when $\mw$ is weakly connected, $\cl(\vec\theta,P)$ is strictly log-concave. It suffice to  prove for any $i_1\neq i_2$, $\Pr(a_{i_1}\succ a_{i_2}|\vec\theta)$ is log-concave, namely $\Pr(u_{i_1}>u_{i_2}|\vec\theta)$ is log-concave. We can write this probability as integral over $u_{i_2}-u_{i_1}$: 
$
\Pr(u_{i_1}>u_{i_2}|\vec\theta) = \int^\infty_0\Pr(u_{i_2}-u_{i_1}=s|\vec\theta)ds
$.

Let $\pi^\ast_{i_2}(\cdot|\vec\theta)$ denote the flipped distribution of $\pi_{i_2}(\cdot|\vec\theta)$ around $x=s$, then we have $\pi^*_{i_2}(s-x|\vec\theta)=\pi_{i_2}(s+x|\vec\theta)$. Therefore we have
$$
\Pr(u_{i_1}>u_{i_2}|\vec\theta) = \int^\infty_0\int^\infty_{-\infty} \pi_{i_1}(x|\theta_{i_1})\pi_{i_2}(x+s|\theta_{i_2})dxds
= \int^\infty_0\int^\infty_{-\infty} \pi_{i_1}(x|\theta_{i_1})\pi^\ast_{i_2}(s-x|\theta_{i_2})dxds
= \int^\infty_0 \pi_{i_1} * \pi_{i_2}^\ast ds
$$

By Theorem~\ref{thm:logc_conv} we know $\pi_{i_1} * \pi_{i_2}^\ast$ is strictly log-concave. We only need to prove that tail probability of a strictly log-concave distribution is also strictly log-concave, which is shown in Lemma~\ref{lem:tailconc}. 
\end{proof}


{\bf Theorem~\ref{thm:asymptotic}}
Given any RUM $\mm$, any $\vec\theta_0$ and any profile $P$ with $n$ rankings. Let $\vec\theta^*$ be the output of $\rbcml{\mg}{\mw}$. When $n\rightarrow\infty$, we have  $\vec\theta^*\xrightarrow{p}\vec\theta_0$ and
$\sqrt{n}(\vec\theta^*-\vec\theta_0)\xrightarrow{d}N(0, H^{-1}_0(\vec\theta_0)\text{Var}[\nabla \cll(\vec\theta_0, R)]H^{-1}_0(\vec\theta_0))$ if and only if $\vec\theta_0$ is the only solution to 
\begin{equation}\label{eqfirstorderx}
\nabla \exll(\vec\theta)=\vec 0
\end{equation}
 
\begin{proof}
The ``only if" direction is straightforward. The solution to \eqref{eqfirstorderx} is unique because $\cll(\vec\theta, P)$ is strictly concave. Suppose $\vec\theta_1$, other than $\vec\theta_0$, is the solution to \eqref{eqfirstorderx}, then when $n\ra\infty$, $\vec\theta_1$ will be the estimate of $\rbcml{\mg}{\mw}$, which means $\rbcml{\mg}{\mw}$ is not consistent.

Now we prove the ``if" direction. First we prove consistency. {It is required by \citet{Xu2011:On-the-robustness} that for different parameters, the probabilities for any composite likelihood event are different, which is not true in our case. A simple counterexample is $\theta^{(1)}_1=1, \theta^{(2)}_1=2, \theta^{(1)}_2=\theta^{(1)}_3=\theta^{(2)}_2=\theta^{(2)}_3=0$. Then $\Pr(a_2\succ a_3|\vec\theta^{(1)})=\Pr(a_2\succ a_3|\vec\theta^{(2)})$.}

By the law of large numbers, we have for any $\epsilon$, $\Pr(|\cll( \vec\theta,P)-\exll(\vec\theta)|\leq \epsilon/2)\rightarrow 1$ as $n\rightarrow\infty$. This implies $\lim_{n\ra\infty}\Pr(\cll( \vec\theta^*,P)\leq \exll(\vec\theta^*)+\epsilon/2)=1$. Similarly we have $\lim_{n\ra\infty}\Pr(\exll(\vec\theta_0)\leq \cll( \vec\theta_0,P)+\epsilon/2)=1$. Since $\vec\theta^*$ maximize $\cll( \vec\theta,P)$, we have $\Pr(\cll( \vec\theta_0,P)\leq \cll( \vec\theta^*,P))=1$.
The above three equations imply that $\lim_{n\ra\infty}\Pr(\exll(\vec\theta_0)-\exll(\vec\theta^\ast)\leq \epsilon)=1$.

Let $\Theta_\epsilon$ be the subset of parameter space s.t. $\forall \vec\theta\in\Theta_\epsilon$, $\exll(\vec\theta_0)-\exll(\vec\theta)\leq \epsilon$. Because $\exll(\vec\theta)$ is strictly concave, $\Theta_\epsilon$ is compact and has a unique maximum at $\vec\theta_0$. Thus for any $\epsilon>0$, $\lim_{n\ra\infty}\Pr(\vec\theta^*\in\Theta_\epsilon)=1$. This implies consistency, i.e., $\vec\theta^*\xrightarrow{p}\vec\theta_0$.

Now we prove asymptotic normality. By mean value theorem, we have
$0=\nabla \cll( \vec\theta^*,P)
=\nabla \cll( \vec\theta_0,P)+H(\alpha\vec\theta^*+(1-\alpha)\vec\theta_0, P)(\vec\theta^*-\vec\theta_0)$, 
where $0\leq \alpha\leq 1$. Therefore, we have
$\sqrt n(\vec\theta^*-\vec\theta)
=-H^{-1}(\alpha\vec\theta^*+(1-\alpha)\vec\theta_0, P)(\sqrt n\nabla \cll( \vec\theta_0,P))
$. 
Since $\nabla \cll( \vec\theta_0,P)=\frac 1 n\sum^n_{j=1}\nabla \cll(\vec\theta_0, R_j)$, by the central limit theorem, we have

$\hfill
\sqrt n\nabla \cll( \vec\theta_0,P)\xrightarrow{d} N(0, \text{Var}[\nabla \cll(\vec\theta_0, R)])
\hfill$

Because $\vec\theta^*\xrightarrow{p}\vec\theta_0$ and $H$ is continuous, we have $H(\alpha\vec\theta^*+(1-\alpha)\vec\theta_0, P)\xrightarrow{p} H(\vec\theta_0,P)$. Since $H(\vec\theta, P)=\frac 1 n\sum^n_{j=1}H(\vec\theta, R_j)$, by law of large numbers, we have $H(\vec\theta, P)\xrightarrow{p} H_0(\vec\theta_0)$. Therefore, we have
$$\sqrt n(\vec\theta^*-\vec\theta)=-H^{-1}_0(\vec\theta_0)(\sqrt n\nabla \cll( \vec\theta_0,P)),$$
which implies that $\text{Var}[\sqrt n(\vec\theta^*-\vec\theta)]=H_0^{-1}(\vec\theta_0)\text{Var}[\nabla \cll(\vec\theta_0, R)]H_0^{-1}(\vec\theta_0).$
\end{proof}

{\bf Theorem~\ref{thm:pl}}
$\rbcml{\mg}{\mw_{\text u}}$ is consistent for Plackett-Luce  if and only if the breaking is weighted union of position-$k$ breaking.

\begin{proof}
The ``if" direction is proved in \cite{Khetan16:Data}. We only prove the ``only if" direction.

We will prove this theorem by induction on $m$. When $m=2$, the only breaking is the comparison between the two alternatives. The conclusion holds. Suppose it holds for $m=l$, then when $m=l+1$, we first apply Lemma 2 to $\mg_{[2, m]}$, which must be a weighted union of position-$k$ breaking. Then apply Lemma 2 to $\mg_{[1, m-1]}$. For all $i\leq m-1$, $g_{1i}$ are the same, denoted by $g_0$. We claim that $g_{1m}=g_0$. The reason is as follows.

For the purpose of contradiction suppose $g_{1m}\neq g_0$. If $g_{1m}>g_0$. We split this edge into two parts, one with weight $g_0$ and the other $g_{1m}-g_0$. Let $\mg_1=\{g_{1m}=g_0\}\cup(\mg-g_{1m})$ , and $\mg_2=\{g_{1m}=g-g_0\}$. So we have $\mg=\mg_1+\mg_2$. Because $\rbcml{\mg_1}{\mw_{\text{u}}}$ is consistent and $\rbcml{\mg_2}{\mw_{\text{u}}}$ is not (Lemma~\ref{pl1m}). By Lemma~\ref{lem:plusminus}, $\rbcml{\mg}{\mw_{\text{u}}}$ is not consistent, which is a contradiction. The case where $g<g_0$ is similar.
\end{proof}

{\bf Theorem~\ref{thm:rum}}
Let $\pi_1, \pi_2, \ldots, \pi_m$ denote the utility distributions for a symmetric RUM. Suppose there exists $\pi_i$ s.t. $(\ln\pi_i(x))'$ is monotonically decreasing and $\lim_{x\ra -\infty}(\ln\pi_i(x))'\ra \infty$. $\rbcml{\mg}{\mw}$ is consistent if and only if $\mg$ is uniform.

\begin{proof}
We prove the theorem by induction on $m$. $m=2$ is trivial because the only breaking is uniform. For $m=3$ we know the uniform breaking is consistent and the one-edge breaking $\mg=\{g_{13}=C>0\}$ is not consistent by Lemma~\ref{lem:rum1m}. Suppose the breaking is $\mg=\{g_{12}=x, g_{23} = y, g_{13} = z\}$.

{\bf Case 1:} $x+y\neq 2z$. For the sake of contradiction suppose $\rbcml{\mg}{\mw_{\text{u}}}$ is consistent. By Lemma~\ref{lem:flip}, $\rbcml{\mg^*}{\mw_{\text{u}}}$ is consistent for $\mm^\ast$, which is $\mm$ due to the symmetry of utility distributions. Applying Lemma~\ref{lem:plusminus} we have $\rbcml{\mg+\mg^*}{\mw_{\text{u}}}$ is consistent, where $\mg+\mg^*=\{g_{12}=x+y, g_{23} = x+y, g_{13} = 2z\}$. If $x+y<2z$, we have $\rbcml{\mg+\mg^*-(x+y)\mg_{\text{u}}}{\mw_{\text{u}}}$ is consistent, where $\mg+\mg^*-(x+y)\mg_{\text{u}}=\{g_{1m}=2z-x-y\}$. This contradicts Lemma~\ref{lem:rum1m}. The case with $x+y>2z$ is similar.

%

{\bf Case 2:} x+y=2z. Lemma~\ref{lem:g210} states that $\rbcml{\mg_{210}}{\mw_{\text u}}$ is not consistent where $\mg_{210}=\{g_{12}=2, g_{13} = 1\}$. We have $\mg=y\mg_{\text u}+(z-y)\mg_{210}$. Since any $\mg_{\text u}$ is consistent, $\rbcml{\mg}{\mw_{\text u}}$ is not consistent.


Suppose the theorem holds for $m=k$. When $m=k+1$, W.l.o.g. we let $\pi_2$ satisfy the conditions that $(\ln\pi_i(x))'$ is monotonically decreasing and $\lim_{x\ra -\infty}(\ln\pi_i(x))'\ra \infty$. Let $\theta_1=L$, $\theta_m=-L$, and $\theta_2=\ldots=\theta_{m-1}=0$. So when $L\ra\infty$, with probability that goes to $1$, $a_1$ is ranked at the top and $a_m$ is ranked at the bottom. Let $\mg_{\{1m\}}=\{g_{1m}=1\}$. We apply Lemma~\ref{lem:subgraph} to $\mg_{[2,m]}$ and $\mg_{[1, m-1]}$. By induction hypothesis $\mg_{[2,m]}$ (or $\mg_{[1, m-1]}$) is uniform breaking graph or empty. If $\mg_{[2,m]}$ is empty, then $\mg_{[1, m-1]}$ is also empty. As $\mg$ is nonempty, $\mg=C\mg_{\{1m\}}$, which contradicts Lemma~\ref{lem:rum1m}. If $\mg_{[2,m]}$ is uniform. We denote the weight as $g_0$. Then $\mg_{[1, m-1]}$ is also uniform with weight $g_0$. Then the only consistent breaking is uniform. The reason is as follows. We can write $\mg=g_0\mg_{\text u}+(g_{1m}-g_0)\mg_{\{1m\}}$. By Lemma~\ref{lem:rum1m} and Lemma~\ref{lem:plusminus}, $\rbcml{\mg}{\mw}$ is not consistent, which is a contradiction.
\end{proof}

{\bf Theorem~\ref{thm:cmlrbconsistencypl}}
$\rbcml{\mg}{\mw}$ for Plackett-Luce is consistent if and only if $\mg$ is the weighted union of position-$k$ breakings and $\mw$ is connected and symmetric.

\begin{proof}
The ``only if" direction: 2(c) part of the Lemma~\ref{lem:consistency} states that if $\rbcml{\mg}{\mw}$ is consistent then $\rbcml{\mg}{\mw_{\text{u}}}$ is consistent, which means that $\mg$ is the weighted union of position-$k$ breakings by Theorem~\ref{thm:pl}. Then following 1(c) part of the Lemma~\ref{lem:consistency}, $\mw$ must be connected and symmetric. 

The ``if" direction: $\mg$ is the weighted union of position-$k$ breakings. For any $a_i$, $a_{i'}$, we have $\sum_{i'\neq i}(\bar\kappa_{ii'}-(\bar\kappa_{ii'}+\bar\kappa_{i'i})\frac {e^{\theta_i}} {e^{\theta_i}+e^{\theta_{i'}}})=0$. Because $w_{ii'}=w_{i'i}$, we have
$
\nabla_i\exll(\vec\theta)= \sum_{i'\neq i}(\bar\kappa_{ii'}w_{ii'}-(\kappa_{ii'}w_{ii'}+\bar\kappa_{i'i}w_{i'i})\frac {e^{\theta_i}} {e^{\theta_i}+e^{\theta_{i'}}})=0
$. 
This means the ground truth is the solution to $\nabla\exll(\vec\theta)=\vec 0$. As $\mw$ is connected and symmetric, it is strongly connected. Thus $\cllpl$ is strictly concave, which means the ground truth is the only solution. Further by Theorem~\ref{thm:asymptotic}, $\rbcml{\mg}{\mw}$ is consistent.
\end{proof}

{\bf Theorem~\ref{thm:cmlrbconsistencyrum}}
Let $\pi$ be any symmetric distribution that satisfies the condition in Theorem~\ref{thm:rum}. Then $\rbcml{\mg}{\mw}$ is consistent for RUM$(\pi)$ if and only if $\mg$ is uniform and $\mw$ is connected and symmetric.

\begin{proof} 
The ``only if" direction: 2(c) part of Lemma~\ref{lem:consistency} states that $\rbcml{\mg}{\mw}$ is consistent with uniform $\mw$, which implies $\mg$ must be uniform by Theorem~\ref{thm:rum}. Then 1(c) of Lemma~\ref{lem:consistency} implies that $\rbcml{\mg}{\mw}$ is consistent for any connected and symmetric $\mw$. 

The ``if" direction: Since $\mg$ is uniform breaking, we have $\sum_{i'\neq i}(\frac {\bar\kappa_{ii'}} {p_{ii'}(\vec\theta)}\frac {\partial p_{ii'}(\vec\theta)} {\partial\theta_i}+\frac {\bar\kappa_{i'i}} {p_{i'i}(\vec\theta)}\frac {\partial p_{i'i}(\vec\theta)} {\partial\theta_i})=0$
Because $w_{ii'}=w_{i'i}$, we have
\begin{align*}
&\nabla_i\exll(\vec\theta)=\sum_{i'\neq i}(\frac {\bar\kappa_{ii'}w_{ii'}} {p_{ii'}(\vec\theta)}\frac {\partial p_{ii'}(\vec\theta)} {\partial\theta_i}+\frac {\bar\kappa_{i'i}w_{i'i}} {p_{i'i}(\vec\theta)}\frac {\partial p_{i'i}(\vec\theta)} {\partial\theta_i})=0
\end{align*}
holds for all $i$. This means the ground truth is the solution to $\nabla\exll(\vec\theta)=\vec 0$. As $\mw$ is connected and symmetric, it is strongly connected. Thus $\cll$ is strictly concave, which means the ground truth is the only solution. Further by Theorem~\ref{thm:asymptotic}, $\rbcml{\mg}{\mw}$ is consistent.
\end{proof}

\end{document}